\newcommand{\multiline}[1]{%
	\begin{tabularx}{\dimexpr\linewidth-\ALG@thistlm}[t]{@{}X@{}}
		#1
	\end{tabularx}
}
\begin{document}

\title{On the Global Solution of Soft $k$-Means}

\author{Feiping~Nie,~\IEEEmembership{Senior Member, IEEE}, Hong~Chen, Rong Wang, and~Xuelong~Li,~\IEEEmembership{Fellow,~IEEE}
\thanks{Feiping Nie and Hong Chen are with the School of Computer Science, School of Artificial Intelligence, Optics and Electronics (iOPEN), and the Key Laboratory of Intelligent Interaction and Applications (Ministry of Industry and Information Technology), Northwestern Polytechnical University, Xi'an, Shaanxi 710072, China. (e-mail: feipingnie@gmail.com; hochen1017@gmail.com). (\textit{Corresponding author: Feiping Nie}.)
	
Rong Wang and Xuelong Li are with the School of Artificial Intelligence, Optics and Electronics (iOPEN), and the Key Laboratory of Intelligent Interaction and Applications (Ministry of Industry and Information Technology), Northwestern Polytechnical University, Xi'an, Shaanxi 710072, China. (e-mail: wangrong07@tsinghua.org.cn; li@nwpu.edu.cn).}}

\markboth{}%
{Shell \MakeLowercase{\textit{et al.}}: On the Global Solution of Soft $k$-Means}

\maketitle

\begin{abstract}
This paper presents an algorithm to solve the Soft $k$-Means problem globally.
Unlike Fuzzy $c$-Means, Soft $k$-Means (S$k$M) has a matrix factorization-type objective and 
has been shown to have a close relation with the popular probability decomposition-type clustering methods, 
e.g., Left Stochastic Clustering (LSC).
Though some work has been done for solving the Soft $k$-Means problem,
they usually use an alternating minimization scheme or the projected gradient descent method, 
which cannot guarantee global optimality since the non-convexity of S$k$M.
In this paper, we present a sufficient condition for a feasible solution of Soft $k$-Means problem to be globally optimal
and show the output of the proposed algorithm satisfies it.
Moreover, for the Soft $k$-Means problem, we provide interesting discussions on 
stability, solutions non-uniqueness, and connection with LSC.
Then, a new model, named Minimal Volume Soft $k$-Means (MVS$k$M), is proposed to address the solutions non-uniqueness issue.
Finally, experimental results support our theoretical results.
\end{abstract}

\begin{IEEEkeywords}
Global solution of Soft $k$-Means, sufficient condition, stability analysis, optimal non-uniqueness.
\end{IEEEkeywords}

\IEEEpeerreviewmaketitle

\section{Introduction}
\IEEEPARstart{K}{means} \cite{lloyd1982least}, arguably most commonly used techniques for data analysis \cite{wu2008top}, produces a clustering such that the sum of squared error between samples and the mean of their cluster is minimized. For $n$ feature vectors gathered as the $d$-dimensional columns of a matrix $\mathbf{X}\in\mathbb{R}^{d\times n}$, the $k$-Means problem can be written as a matrix factorization problem \cite{bauckhage2015k}:
\[
\min_{\mathbf{F},\mathbf{G}\in\{0,1\}^{n,k},\mathbf{G}\mathbb{1}_k = \mathbb{1}_n} \| \mathbf{X} - \mathbf{FG}^\top\|_F^2,
\]
where $\mathbf{X}\in\mathbb{R}^{d\times n}$ is the data matrix, $\mathbf{F}\in\mathbb{R}^{d\times k}$ is the prototypes, and $\mathbf{G}\in\mathbb{R}^{k\times n}$ is the membership indicator matrix. 
Several clustering methods have been proposed by relaxing this matrix factorization $k$-Means problem in different aspects.
For example, \cite{zha2002spectral,ding2005equivalence} relaxed the structural constraints on indicator matrix $\mathbf{G}$ to an orthogonality constraint;
\cite{ding2010convex} changed the constraints on $\mathbf{G}$ in the $k$-Means problem to only require that $\mathbf{G}$ be positive.

Sometimes, dividing data into distinct clusters is too strict, where each data point can only belong to exactly one cluster. In fuzzy clustering, data points can potentially belong to multiple clusters.
The probably best known approach to fuzzy clustering is the method of Fuzzy $c$-Means (FCM) proposed by \cite{dunn1973fuzzy}.
The objective function of FCM is
\[
\min_{\mathbf{F},\mathbf{G}\succeq 0,\mathbf{G}\mathbb{1}_k = \mathbb{1}_n} \sum_{i=1}^n \sum_{j=1}^k g_{ij}^m \| \mathbf{x}_i - \mathbf{f}_j\|_2^2.
\]
In FCM, membership functions are defined based on a distance function, and membership degrees express proximities of entities to cluster centers (i.e., prototypes $\mathbf{F}$). By choosing a suitable distance function different cluster shapes can be identified. Another approach to fuzzy clustering due to \cite{krishnapuram1993possibilistic} is the possibilistic $c$-Means (PCM) algorithm which eliminates one of the constraints imposed on the search for $c$ partitions leading to possibilistic absolute fuzzy membership values instead of FCM probabilistic relative fuzzy memberships.

Another potentially interesting relaxation of the matrix factorization form $k$-Means is to relax the $\mathbf{G}\in\{0,1\}^{k\times n}$
to $\mathbf{G}\in\mathbb{R}^{k\times n}$. This type of relaxation is popular in submodular optimization and is closely related to the Lov\'{a}sz extension \cite{bach2013learning}. The objective function we concerned is
\begin{equation}\label{eq:SfM}
\min_{\mathbf{F},\mathbf{G}\succeq 0,\mathbf{G}\mathbb{1}_k = \mathbb{1}_n} \| \mathbf{X} - \mathbf{FG}^\top\|_F^2,
\end{equation}
which is the problem we want to solve in this paper and has been termed Soft $k$-Means (S$k$M) in \cite{arora2013similarity}.
Another line of research that is related to Problem \eqref{eq:SfM} is the fuzzy clustering with proportional membership (FCPM) scheme \cite{nascimento2003modeling} in the literature.
The idea of FCPM is to develop explicit mechanisms for data generation from cluster structures, since such a model can provide a theoretical framework for cluster structures found in data. Indeed, the Soft $k$-Means problem is exactly equivalent to the FCPM problem when the fuzzy factor in FCPM is set to zero.

Yet some work \cite{nascimento2000fuzzy,nascimento2003modeling,nascimento2016applying} has been done for solving the Soft $k$-Means problem in Problem \eqref{eq:SfM},
they usually use an alternating minimization scheme or the projected gradient descent method, 
which cannot guarantee global optimality since the non-convexity of S$k$M.

In this paper, we present a sufficient condition for a feasible solution of Soft $k$-Means problem to be globally optimal
and show the output of the proposed algorithm satisfies it.
Moreover, for the Soft $k$-Means problem, we provide interesting discussions on 
stability, solutions non-uniqueness, and connection with LSC.
Then, a new model, named Minimal Volume Soft $k$-Means (MVS$k$M), is proposed to address the solutions non-uniqueness issue.
Finally, experimental results support our theoretical results.

\textbf{Contributions.} The main contributions of this paper are summarized as follows:
\begin{itemize}
	\item We report sufficient condition under which $\mathbf{F}$ and $\mathbf{G}$ are a global solution of the Soft $k$-Means problem. As a byproduct, we provide sufficient and necessary conditions for a data matrix to be S$k$Mable.
	\item We propose a simple algorithm with computational complexity $\mathcal{O}(nd^2)$. The output of this algorithm satisfies the aforementioned sufficient condition, thus it is a global solution of the Soft $k$-Means problem with theoretical guarantee.
	\item Using the sufficient condition, we provide interesting discussions on the stability, optimal non-uniqueness, and connection with Left Stochastic Clustering proposed in \cite{arora2013similarity}. Moreover, we proposed an model, named Minimal Volume Soft $k$-Means to address the optimal non-uniqueness problem.
\end{itemize}

\textbf{Organization.} 
The rest of this paper is organized as follows. 
We review some closely related prior work in \Cref{sec:prior}.
Notations description and the proposed algorithm to globally solve Soft $k$-Means are put in \Cref{sec:algorithm}.
The main theory which gives the sufficient optimality condition is given in \Cref{sec:theory} along with its proof sketch.
Several interesting discussions on stability, solutions non-uniqueness, and connection with LSC in \Cref{sec:discuss}.
Experimental results on both synthetic dataset and real-world datasets are provided in \Cref{sec:exps}.
Finally, conclude the paper in \Cref{sec:conclusion}.

\section{Prior Work}\label{sec:prior}
In this section, we review methods which are closely related to the Soft $k$-Means problem.

\subsection{NMF and $k$-Means}
This line of research work includes Convex-NMF, Symmetric-NMF, etc.
A note on matrix factorization representation is in \cite{bauckhage2015k}.
\cite{zha2002spectral} proposed  relaxing the constraints on $\mathbf{G}$ in the $k$-Means optimization problem to an orthogonality constraint:
\[
\min_{\mathbf{F},\mathbf{G}\succeq 0,\mathbf{G}^\top\mathbf{G} = \mathbb{I}_{k}} \| \mathbf{X} - \mathbf{FG}^\top\|_F^2.
\]
\cite{ding2005equivalence} considered the kernelized clustering objective:
\[
\min_{\mathbf{G}\succeq 0,\mathbf{G}^\top\mathbf{G} = \mathbb{I}_{k}} \| \mathbf{K} - \mathbf{GG}^\top\|_F^2.
\]
Then, \cite{ding2010convex} considered changing the constraints on $\mathbf{G}$ in the $k$-Means problem to only require that $\mathbf{G}$ be positive. They explored a number of approximations to the $k$-Means problem that imposed different constraints on $\mathbf{F}$. One such variant that they deemed particularly worthy of further investigation was convex NMF. Convex NMF restricts the columns of $\mathbf{F}$ (the cluster centroids) to be convex combinations of the columns of $\mathbf{X}$:
\[
\min_{\mathbf{
		W}\succeq 0, \mathbf{G} \succeq 0} \| \mathbf{X} - \mathbf{XWG}^\top\|_F^2.
\]
\subsection{Fuzzy Proportional Membership Clustering}
The fuzzy clustering with proportional membership (FCPM) \cite{nascimento2000fuzzy,nascimento2003modeling,nascimento2016applying}. scheme considers explicit mechanisms for data generation from cluster structures.
The objective for general FCPM$_m$ problem is
\[
\min_{\mathbf{F}, \mathbf{G}\succeq 0,\mathbf{G}\mathbb{1}_k = \mathbb{1}_n} 
\sum_{i=1}^n\sum_{j=1}^k g_{ij}^m \| \mathbf{x}_i - \mathbf{f}_j g_{ij} \|_2^2.
\]
It is notable that when for $m=0$, FCPM$_0$ is equivalent to Soft $k$-Means problem in Problem \eqref{eq:SfM}.
Thus, the algorithm for solving Problem \eqref{eq:SfM} can be used to solve FCPM$_0$.

\subsection{Left Stochastic Clustering}
Left Stochastic Clustering (LSC) proposed by \cite{arora2013similarity} provide directly left stochastic similarity matrix factorization.
They perform clustering by solving a non-negative matrix factorization problem for the best cluster assignment matrix $\mathbf{G}$. That is, given a matrix $\mathbf{K}$, clustering by finding a scaling factor $c$ and a cluster probability matrix $\mathbf{G}$ that best solve
\begin{equation}\label{eq:LSD}
\min_{c\geq 0, \mathbf{G}\succeq 0,\mathbf{G}\mathbb{1}_k = \mathbb{1}_n} \left\| \mathbf{K} - \frac{1}{c}\mathbf{GG}^\top \right\|_F^2.
\end{equation}
It has been shown that the Problem \eqref{eq:LSD} is closely related to Soft $k$-Means (Theorem \ref{thm:jmlr13prop}).
Specifically, if the Gram of given matrix is LSDable then this matrix is S$k$Mable (see details in \Cref{sec:leftsto}).

\section{Notations and Algorithm} \label{sec:algorithm}
In this section, we introduce the notations used in the paper and report the proposed algorithm for solving the soft $k$-Means problem.
Some remarks just follows.
\subsection{Notations}
Throughout this paper, scalars, vectors and matrices are denoted by lowercase letters, boldface lowercase letters and boldface uppercase letters, respectively; for a matrix $\mathbf{A} \in \mathbb{R}^{n \times n}$, $\mathbf{A}^\top$ denotes the transpose of $\mathbf{A}$, $\text{Tr}(\mathbf{A}) = \sum_{i=1}^n a_{ii}$, $\| \mathbf{A} \|_F^2 = \text{Trace}(\mathbf{A}^\top\mathbf{A})$; $\mathbb{1}_n\in\mathbb{R}^n$ denotes vector with all ones; $\|\mathbf{x}\|_0$ denotes the number of non-zero elements; $\|\mathbf{A}\|_{p,q} = \left( \sum_{i=1}^n \|\mathbf{a}\|_p^q \right)^{1/q}$; $\mathbb{I}_n \in \mathbb{R}^{n\times n}$ denotes the identity matrix; $\mathbf{H}_n = \mathbb{I}_{n} - \frac{1}{n}\mathbb{1}_n\mathbb{1}_n^\top$ is the centralization matrix; $\mathbf{A}\succeq 0$ indicates all elements of $\mathbf{A}$ are non-negative; diag$(\mathbf{A})$ is the 
diagonal elements of $\mathbf{A}$; diag$(\mathbf{v})$ is a matrix with diagonal elements $\mathbf{v}$.

\subsection{Algorithm}
In this subsection, we provide an algorithm to solve the soft $k$-Means Problem \eqref{eq:SfM}.
In \Cref{sec:mainTheory}, we will show that the output prototype $\mathbf{F}$ and membership $\mathbf{G}$ of Algorithm \ref{alg:global} is a global solution of Problem \eqref{eq:SfM}.
\begin{algorithm}[H]  
	\caption{Solve the Soft $k$-Means Problem \eqref{eq:SfM}}  
	\label{alg:global}
	\begin{algorithmic}[1]  
		\Require $\mathbf{X}\in\mathbb{R}^{d\times n}$ and the clusters number $k$
		\State $\overline{\mathbf{x}} \leftarrow \frac{1}{n}\mathbf{X}\mathbb{1}_n$
		\State $\mathbf{U}_{k-1}\leftarrow$  the $(k-1)$-truncated left SVs of $\mathbf{X}$
		\State $\mathbf{B} \leftarrow$ the $(k-1)$-truncated EVs of $\mathbf{H}_k$
		\State Ensure $r \geq \|\mathbf{U}_{k-1}^\top\mathbf{X} \|_{2,\infty}$
		\State $\mathbf{F}\leftarrow r\sqrt{k(k-1)}\mathbf{U}_{k-1}\mathbf{B}^\top + \overline{\mathbf{x}}\mathbb{1}_k^\top$
		\State $\mathbf{G}^\top \leftarrow \frac{1}{r\sqrt{k(k-1)}}\mathbf{BU}_{k-1}^\top \mathbf{X} + \frac{1}{k}\mathbb{1}_k\mathbb{1}_n^\top$
		\Ensure Prototypes $\mathbf{F} \in \mathbb{R}^{d\times k}$, membership $\mathbf{G} \in \mathbb{R}^{k \times n}$
		
		\label{code:recentEnd}  
	\end{algorithmic}  
\end{algorithm} 
\newtheorem{remark}{\bf Remark}
\begin{remark} Some remarks just follows.
	
	\begin{itemize}
		\item The computational complexity of Algorithm \ref{alg:global} is $\mathcal{O}(nkd)$, which is linear with respect to $n$, 
		which shows that our algorithm is suitable for large-scale data sets.
		\item It is notable that, unlike existing method to solve Problem \eqref{eq:SfM}, the newly proposed Algorithm \ref{alg:global} is non-iterative and it is easily implementable with few lines of basic linear algebra manipulations.
		While its simpleness, in \Cref{sec:mainRes}, we will show the output of Algorithm \ref{alg:global} is a globally solution of Problem \eqref{eq:SfM}.
		\item We should point out that the output of Algorithm \ref{alg:global} is not unique.
		The reason is that the $(k-1)$-truncated EVs of $\mathbf{H}_k$, denoted by $\mathbf{B}$, are not unique since the leading $(k-1)$ eigenvalues of $\mathbf{H}_k$ are equal. But the output $\mathbf{F}$ and $\mathbf{G}$ are still optimal, which reveals the solution non-uniqueness issue of the Soft $k$-Means problem.
		This issue would be mentioned again in \Cref{remark:proof_sketch_unique} and clearly explained and discussed in \Cref{sec:arbitray}.
	\end{itemize}
	
	
	
\end{remark}
%
\section{Main Theory} \label{sec:mainTheory} \label{sec:theory}
In this section, we present our main results on the sufficient condition for $\mathbf{F}$ and $\mathbf{G}$ to globally solve the Problem \eqref{eq:SfM}. 
Then, we show the output returned by Algorithm \ref{alg:global} satisfies the condition.
Finally, the proof sketch for the main result is provided in \Cref{sec:proofSketch}.

\subsection{Our Results}\label{sec:mainRes}
\newtheorem{thm}{\bf Theorem}
\begin{thm}\label{thm:mainRes}
	Let the centralized data $\overline{\mathbf{X}} \in\mathbb{R}^{d\times n}$ be $\mathbf{X}\mathbf{H}_n$.
	Let the $(k-1)$-truncated SVD of the centralized data matrix be $\mathbf{U}_{k-1}\mathbf{\Sigma}_{k-1}\mathbf{V}_{k-1}^\top$. 
	If there exists $\mathbf{S}\in\mathbb{R}^{(k-1)\times k}$ and $\mathbf{G}\in\mathbb{R}^{k\times n}, \mathbf{G}\succeq 0, \mathbf{G}\mathbb{1}_k = \mathbb{1}_n$ satisfy
	\[
	\mathbf{\Sigma}_{k-1}\mathbf{V}_{k-1}^\top = \mathbf{S}\mathbf{G}^\top.
	\]
	Then, $\mathbf{F}=\mathbf{U}_{k-1}\mathbf{S} + \frac{1}{n}\mathbf{X}\mathbb{1}_n\mathbb{1}_k^\top$ and $\mathbf{G}$ is a solution of Problem \eqref{eq:SfM}.
\end{thm}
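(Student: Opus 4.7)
The plan is to exhibit a lower bound on the objective that every feasible pair obeys, and then verify that the specified $(\mathbf{F},\mathbf{G})$ saturates it. Concretely, I want to show
\[
\|\mathbf{X} - \mathbf{F}'\mathbf{G}'^{\top}\|_F^2 \;\geq\; \sum_{i \geq k} \sigma_i^2(\overline{\mathbf{X}})
\]
for every feasible $(\mathbf{F}', \mathbf{G}')$, and that the proposed pair attains equality. Since the right-hand side is the squared Frobenius residual of the best rank-$(k-1)$ approximation to $\overline{\mathbf{X}}$, this will simultaneously certify optimality and pin down the optimal value.

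First, I would use the idempotent decomposition $\mathbb{I}_n = \mathbf{H}_n + \tfrac{1}{n}\mathbb{1}_n\mathbb{1}_n^{\top}$. Because the two projectors annihilate one another, the error splits orthogonally (a Pythagorean identity):
\[
\|\mathbf{X} - \mathbf{F}'\mathbf{G}'^{\top}\|_F^2 \;=\; \|\overline{\mathbf{X}} - \mathbf{F}'\mathbf{G}'^{\top}\mathbf{H}_n\|_F^2 \;+\; n\,\bigl\|\overline{\mathbf{x}} - \tfrac{1}{n}\mathbf{F}'\mathbf{G}'^{\top}\mathbb{1}_n\bigr\|_2^2 .
\]
The second summand is nonnegative, so it suffices to lower-bound the first.

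The key step is to exploit the row-stochasticity constraint $\mathbf{G}'\mathbb{1}_k = \mathbb{1}_n$, equivalently $\mathbb{1}_k^{\top}\mathbf{G}'^{\top} = \mathbb{1}_n^{\top}$. Right-multiplying by $\mathbf{H}_n$ gives $\mathbb{1}_k^{\top}\mathbf{G}'^{\top}\mathbf{H}_n = \mathbb{1}_n^{\top}\mathbf{H}_n = 0$, so $\mathbb{1}_k$ lies in the left null space of $\mathbf{G}'^{\top}\mathbf{H}_n$, whence $\mathrm{rank}(\mathbf{G}'^{\top}\mathbf{H}_n) \leq k-1$. Consequently $\mathbf{F}'(\mathbf{G}'^{\top}\mathbf{H}_n)$ has rank at most $k-1$, and the Eckart--Young theorem delivers the claimed lower bound on the first summand.

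For equality, I would substitute $\mathbf{F} = \mathbf{U}_{k-1}\mathbf{S} + \overline{\mathbf{x}}\mathbb{1}_k^{\top}$ and the hypothesized $\mathbf{G}$. The identity $\mathbf{G}\mathbb{1}_k = \mathbb{1}_n$ collapses the mean-column piece into $\overline{\mathbf{x}}\mathbb{1}_n^{\top}$, and the hypothesis $\mathbf{S}\mathbf{G}^{\top} = \mathbf{\Sigma}_{k-1}\mathbf{V}_{k-1}^{\top}$ yields
\[
\mathbf{F}\mathbf{G}^{\top} \;=\; \mathbf{U}_{k-1}\mathbf{\Sigma}_{k-1}\mathbf{V}_{k-1}^{\top} + \overline{\mathbf{x}}\mathbb{1}_n^{\top},
\]
so $\mathbf{X} - \mathbf{F}\mathbf{G}^{\top}$ equals the SVD residual $\overline{\mathbf{X}} - \mathbf{U}_{k-1}\mathbf{\Sigma}_{k-1}\mathbf{V}_{k-1}^{\top}$ with squared Frobenius norm $\sum_{i\geq k}\sigma_i^2(\overline{\mathbf{X}})$; the mean-term in the Pythagorean splitting vanishes simultaneously.

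The conceptual obstacle I expect is that the naive rank bound on $\mathbf{F}'\mathbf{G}'^{\top}$ is $k$, which through Eckart--Young gives only $\sum_{i\geq k+1}\sigma_i^2(\overline{\mathbf{X}})$, one singular value short of the value that the constructed pair achieves. The centering decomposition combined with the constraint $\mathbf{G}'\mathbb{1}_k = \mathbb{1}_n$ is precisely what shaves off one rank and closes this gap; without recognizing this reduction the Eckart--Young bound cannot be matched.
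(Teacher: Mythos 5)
Your proof is correct, and it takes a genuinely different route from the paper. The paper proceeds by a chain of problem reformulations: it first drops the nonnegativity constraint, uses translation invariance to pass to an equivalent problem with a shift $\mathbf{s}$ and the constraint $\mathrm{rank}(\mathbf{F})\leq k-1$, derives a closed-form optimal $\mathbf{G}$ for any such rank-deficient $\mathbf{F}$, reduces the remaining problem to the regression form of PCA (so that the optimal shift is the mean and the optimal subspace is spanned by $\mathbf{U}_{k-1}$ of $\mathbf{X}\mathbf{H}_n$), and finally observes that the hypotheses of the theorem let the inner residual $\|\mathbf{U}_{k-1}^{\top}\overline{\mathbf{X}}-\mathbf{S}\mathbf{G}^{\top}\|_F^2$ be driven to zero. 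You instead give a one-shot optimality certificate: the Pythagorean split of the error through the projectors $\mathbf{H}_n$ and $\tfrac{1}{n}\mathbb{1}_n\mathbb{1}_n^{\top}$, the observation that $\mathbf{G}'\mathbb{1}_k=\mathbb{1}_n$ forces $\mathbb{1}_k^{\top}\mathbf{G}'^{\top}\mathbf{H}_n=0$ and hence $\mathrm{rank}(\mathbf{F}'\mathbf{G}'^{\top}\mathbf{H}_n)\leq k-1$, Eckart--Young to get the universal lower bound $\sum_{i\geq k}\sigma_i^2(\mathbf{X}\mathbf{H}_n)$, and a direct computation showing the stated $(\mathbf{F},\mathbf{G})$ attains it (with $\mathbf{G}\succeq 0$ needed only for feasibility of the candidate). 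Your correctly identified "obstacle" --- that the naive rank-$k$ bound falls one singular value short and that centering plus the row-sum constraint shaves off exactly one rank --- is precisely the crux, and it is the same mechanism the paper exploits via its translation-invariance lemma. What your argument buys is brevity and self-containment, plus an explicit formula for the optimal objective value; what the paper's longer route buys is structural byproducts (the closed-form family of optimal $\mathbf{G}$ for a given rank-deficient $\mathbf{F}$ and the identification of the optimal affine span) that it reuses later for the non-uniqueness lower bound and the MVS$k$M discussion.
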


\begin{remark}
	In Theorem \ref{thm:mainRes}, we report a sufficient condition for $\mathbf{F}$ and $\mathbf{G}$ to be a solution of Problem \eqref{eq:SfM}.
	In detail, we require that there exists $\mathbf{S}\in\mathbb{R}^{(k-1)\times k}$ and $\mathbf{G}$ satisfy (a) $\mathbf{\Sigma}_{k-1}\mathbf{V}_{k-1}^\top = \mathbf{S}\mathbf{G}^\top$; (b) $\mathbf{G}\mathbb{1}_k = \mathbb{1}_n$; (c) $\mathbf{G} \succeq 0$.
	In the following corollary, we justify the existence of such $\mathbf{F}$ and $\mathbf{G}$ by showing the Algorithm \ref{alg:global} produces
	a pair of $\mathbf{F}$ and $\mathbf{G}$ satisfying the conditions in Theorem \ref{thm:mainRes}.
\end{remark}

\newtheorem{corollary}{\bf Corollary}
\begin{corollary}\label{thm:mainCorollary}
	Let $\mathbf{F}^*, \mathbf{G}^*$ be the output of Algorithm \ref{alg:global}. We have
	\[
	\mathbf{F}^*, \mathbf{G}^* \in \mathop{\arg\min}_{\mathbf{F},\mathbf{G}\succeq 0,\mathbf{G}\mathbb{1}_k = \mathbb{1}_n} \| \mathbf{X} - \mathbf{FG}^\top\|_F^2.
	\]
\end{corollary}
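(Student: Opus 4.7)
The plan is to invoke \Cref{thm:mainRes} with the identification $\mathbf{S} = r\sqrt{k(k-1)}\,\mathbf{B}^\top$. With this choice, the prototype formula $\mathbf{U}_{k-1}\mathbf{S}+\tfrac{1}{n}\mathbf{X}\mathbb{1}_n\mathbb{1}_k^\top$ in the theorem reproduces the $\mathbf{F}^*$ from Step~5 of \Cref{alg:global} verbatim, so the corollary reduces to three checks on the pair $(\mathbf{S},\mathbf{G}^*)$: (a) $\mathbf{G}^*\mathbb{1}_k=\mathbb{1}_n$, (b) $\mathbf{G}^*\succeq 0$, and (c) $\mathbf{\Sigma}_{k-1}\mathbf{V}_{k-1}^\top = \mathbf{S}(\mathbf{G}^*)^\top$.

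All three checks ride on the spectral structure of $\mathbf{H}_k=\mathbb{I}_k-\tfrac{1}{k}\mathbb{1}_k\mathbb{1}_k^\top$. Because $\mathbf{H}_k$ is the orthogonal projector onto the hyperplane perpendicular to $\mathbb{1}_k$, with eigenvalue $1$ of multiplicity $k-1$ and eigenvector $\mathbb{1}_k$ of eigenvalue $0$, the truncated eigenbasis $\mathbf{B}$ obeys $\mathbf{B}^\top\mathbf{B}=\mathbb{I}_{k-1}$, $\mathbf{B}^\top\mathbb{1}_k=0$, and $\mathbf{B}\mathbf{B}^\top=\mathbf{H}_k$. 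Reading the last identity on the diagonal pins down every row norm, $\|\mathbf{b}_j\|^2=(\mathbf{H}_k)_{jj}=(k-1)/k$. These four facts are the whole toolkit.

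Conditions (a) and (c) are then brief computations. Left-multiplying $(\mathbf{G}^*)^\top$ by $\mathbb{1}_k^\top$ annihilates the $\mathbf{B}\mathbf{U}_{k-1}^\top\overline{\mathbf{X}}$ summand via $\mathbb{1}_k^\top\mathbf{B}=0$ and collapses the constant term to $\mathbb{1}_n^\top$, giving (a). For (c), expanding $\mathbf{S}(\mathbf{G}^*)^\top$ and applying $\mathbf{B}^\top\mathbf{B}=\mathbb{I}_{k-1}$ together with $\mathbf{B}^\top\mathbb{1}_k=0$ collapses it to $\mathbf{U}_{k-1}^\top\overline{\mathbf{X}}=\mathbf{\Sigma}_{k-1}\mathbf{V}_{k-1}^\top$; here I would read Step~2 as the truncated left SVD of the centered matrix $\overline{\mathbf{X}}$, since that is the only reading that makes Steps~2--6 of the algorithm compatible with the hypothesis of \Cref{thm:mainRes}.

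The only genuine check is the non-negativity condition (b). Each entry of $(\mathbf{G}^*)^\top$ has the form $\tfrac{1}{r\sqrt{k(k-1)}}\,\mathbf{b}_j^\top\mathbf{U}_{k-1}^\top\overline{\mathbf{x}}_i + \tfrac{1}{k}$, and Cauchy--Schwarz combined with $\|\mathbf{b}_j\|\le\sqrt{(k-1)/k}$ and $\|\mathbf{U}_{k-1}^\top\overline{\mathbf{x}}_i\|\le r$ (the defining bound from Step~4) pushes the first summand into $[-1/k,\,1/k]$, so every entry is $\ge 0$. This tight cancellation is the one moment where the prefactor $\sqrt{k(k-1)}$ and the threshold $r$ hard-coded into \Cref{alg:global} are forced on us by the proof; it is the main subtlety worth flagging, while the rest is mechanical linear algebra. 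Once (a)--(c) are verified, \Cref{thm:mainRes} delivers the claimed global optimality of $(\mathbf{F}^*,\mathbf{G}^*)$.
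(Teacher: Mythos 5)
Your proposal is correct and follows essentially the same route as the paper: you identify $\mathbf{S} = r\sqrt{k(k-1)}\,\mathbf{B}^\top$ and verify the three conditions of Theorem \ref{thm:mainRes} — stochasticity and the factorization identity via $\mathbf{B}^\top\mathbf{B}=\mathbb{I}_{k-1}$ and $\mathbb{1}_k^\top\mathbf{B}=0$, exactly as in the paper's own proof (and your reading of Step~2 as the SVD of the centered data is indeed the intended one). The only localized difference is the non-negativity step: the paper applies its Lemma \ref{lem:numerical} to the mean-zero columns $\mathbf{B}\mathbf{w}_i/r$, whereas you apply Cauchy--Schwarz with the row-norm bound $\|\mathbf{b}_j\|^2=(\mathbf{H}_k)_{jj}=(k-1)/k$ extracted from $\mathbf{B}\mathbf{B}^\top=\mathbf{H}_k$; both yield the same threshold $1/k$, so this is an equivalent, equally valid derivation of the same estimate.
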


The Corollary \ref{thm:mainCorollary} justifies the merit of the proposed Algorithm \ref{alg:global}, thus we prove it here. Before the proof of this main corollary, we need following numerical estimation.
\newtheorem{Lemma1}{\bf Lemma}
\begin{Lemma1}\label{lem:numerical}
	Let $\mathbf{x} \in \mathbb{R}^k$.
	If $\mathbb{1}_k^\top\mathbf{x}=0$ and $\|\mathbf{x}\|_2 \leq 1$, then we have $\|\mathbf{x}\|_\infty \leq \frac{\sqrt{k(k-1)}}{k}.$
\end{Lemma1}

\begin{figure*}
	\center
	\begin{tikzpicture}[
	block/.style={
		thick,
		draw,
		drop shadow={opacity=0.75},
		fill=white,
		minimum height=0.7cm,
		text centered
	},
	pil/.style={
		thick,
		->,
		in=180,
		out=0,
		shorten <=2pt,
		shorten >=1pt
	},
	npil/.style={
		thick,
		in=180,
		out=0,
		shorten <=2pt,
		shorten >=1pt
	}
	]
	\node[block] (original) at (4.5,1.5) {Problem \eqref{eq:SfM}};
	\node[block,right=1.3cm of original] (less) {Problem \eqref{eq:lessConstrained}};
	\node[block,right=1.9cm of less] (expected) {Problem \eqref{eq:rankProblem}};
	\node[right=2.8cm of expected] (dummy) {};
	\node[block, right=1.7cm of expected] (B) {Problem \eqref{eq:pcaProblem}};
	\node[above=0.cm of B] (A) {Enforce $\mathbf{G}\succeq 0$};
	\node[block, right=1.7cm of B] (C) {Theorem \ref{thm:mainRes}};
	\path (original) edge[pil] node[thick,above] {Relax} (less);
	\path (less) edge[pil] node[thick,above] {Lemma \ref{eq:transLemma}} (expected);
	\path (expected.east) edge[npil] (A.west);
	\path (expected.east) edge[pil] node[thick,below] {Lemma \ref{lem:optimizedG}} (B.west);
	\path (A.east) edge[pil] (C.west);
	\path (B.east) edge[pil] (C.west);
	\end{tikzpicture}
	\caption{Flowchart of the proof of Theorem \ref{thm:mainRes}.}
\end{figure*}
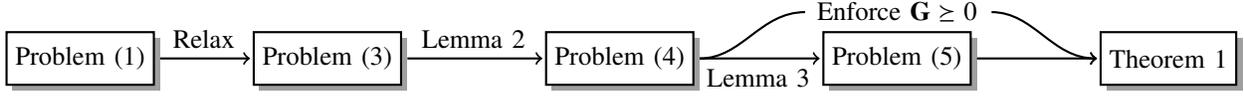

\begin{proof} of Corollary \ref{thm:mainCorollary}.
	
	We prove by verifying the three conditions in Theorem \ref{thm:mainRes}: (a) $\mathbf{\Sigma}_{k-1}\mathbf{V}_{k-1}^\top=\mathbf{S}\mathbf{G}^\top$; 
	(b) $\mathbf{G}\mathbb{1}_k = \mathbb{1}_n$;
	(c) $\mathbf{G}\succeq 0$.
	
	For (a), using the definition of $\mathbf{F}$, we can see $\mathbf{S} = r\sqrt{k(k-1)}\mathbf{B}^\top$.
	Thus, 
	\[
	\begin{aligned}
	\mathbf{SG}^\top =& r\sqrt{k(k-1)}\mathbf{B}^\top \left(\frac{\mathbf{B}\mathbf{U}_{k-1}^\top\mathbf{X}}{r\sqrt{k(k-1)}}+\frac{1}{k}\mathbb{1}_k\mathbb{1}_n^\top\right) \\
	=& \mathbf{B}^\top\mathbf{B}\mathbf{U}_{k-1}^\top\mathbf{X}+\frac{r\sqrt{k(k-1)}}{k}\mathbf{B}^\top\mathbb{1}_k\mathbb{1}_n^\top \\
	=& \mathbf{U}_{k-1}^\top\mathbf{X} = \mathbf{\Sigma}_{k-1}\mathbf{V}_{k-1}^\top,
	\end{aligned}
	\]
	where the last equality holds since $\mathbf{B}^\top\mathbf{B}=\mathbb{I}_{(k-1)}$ and $\mathbb{1}_k^\top\mathbf{B}=0$ since $\mathbf{H}_k \mathbb{1}_k=0$.

	For (b), a direct calculation gives
	\[
	\mathbf{G}\mathbb{1}_k = \frac{\mathbf{X}^\top \mathbf{U}_{k-1} \mathbf{B}^\top \mathbb{1}_k}{r\sqrt{k(k-1)}}+\frac{1}{k}\mathbb{1}_n\mathbb{1}_k^\top\mathbb{1}_k = \mathbb{1}_n,
	\]
	where we use the fact $\mathbf{G}\mathbb{1}_k = \mathbb{1}_n$.
	
	For (c), let $\mathbf{w}_i$ be the $i$th column of $\mathbf{\Sigma}_{k-1}\mathbf{V}_{k-1}^\top$.
	Note that $r \geq \| \mathbf{\Sigma}_{k-1}\mathbf{V}_{k-1}^\top \|_{2,\infty}$ by definition.
	Thus $\left\| \frac{\mathbf{B} \mathbf{w}_i}{r}\right\| = \left\| \frac{\mathbf{w}_i }{r} \right\| \leq 1$.
	Meanwhile, we note that $\mathbb{1}_k^\top \mathbf{B}\mathbf{w}_i = 0$. 
	Let $\mathbf{x} \in \mathbb{R}^k$ be $\frac{\mathbf{B}\mathbf{w}_i}{r}$.
	We have $\mathbf{x}^\top\mathbb{1}_k = 0$ and $\|\mathbf{x}\|_2\leq 1$.
	Using Lemma \ref{lem:numerical}, we conclude that $\left\|\frac{\mathbf{B}\mathbf{w}_i}{r}\right\|_\infty \leq \frac{\sqrt{k(k-1)}}{k}$.
	Then we have $\frac{\mathbf{B}\mathbf{w}_i}{r\sqrt{k(k-1)}} + \frac{1}{k}\mathbb{1}_k \succeq 0$,
	which directly indicates 
	$\mathbf{G}^\top = \frac{\mathbf{B}\mathbf{\Sigma}_{k-1}\mathbf{V}_{k-1}^\top}{r\sqrt{k(k-1)}} + \frac{1}{k}\mathbb{1}_k\mathbb{1}_n^\top = \frac{\mathbf{B}\mathbf{W}}{r\sqrt{k(k-1)}} + \frac{1}{k}\mathbb{1}_k\mathbb{1}_n^\top \succeq 0$.
	This completes the proof by Theorem \ref{thm:mainRes}.
\end{proof}

\begin{remark}\label{remark:proof_sketch_unique}
	Corollary \ref{thm:mainCorollary} shows that the output $\mathbf{F}$ and $\mathbf{G}$ by Algorithm \ref{alg:global}
	are an optimal solution of Problem \eqref{eq:SfM}. But we note that the solution of Problem \eqref{eq:SfM}
	is not unique since for the matrix $\mathbf{B}\in\mathbb{R}^{k\times (k-1)}$ defined in Algorithm \ref{alg:global} as
	the $(k-1)$-truncated eigenvectors of centralization matrix $\mathbf{H}_k$ is not unique. Indeed, for any $\mathbf{B}$, 
	matrix $\mathbf{BR}$ is still $(k-1)$-truncated eigenvectors of centralization matrix $\mathbf{H}_k$, where $\mathbf{R}^\top\mathbf{R}=\mathbb{I}_{(k-1)}$.
	That reveals there is non-uniqueness in the solution set of the Problem \eqref{eq:SfM}.
	Indeed, we develop a lower bound (\Cref{sec:arbitray}) on the distance between oracle $\mathbf{G}^*$ and the solution of Problem \eqref{eq:SfM} in Frobenius norm.
\end{remark}

\begin{remark}\label{remark:intuition}
	Here we provide some intuition behind the Algorithm \ref{alg:global}.
	For the Soft $k$-Means problem in \eqref{eq:SfM}, we want to find a best fit $k$ simplex constructed by prototype $\mathbf{F}$.
	If we have identified the optimal affine span of $\mathbf{F}$, then we can simply enlarge the simplex without change the
	objective function value. Thus, we choose the smallest regular $k$ simplex which contain the sphere with $\|\mathbf{X}\|_{2,\infty}$ as
	radius. Such a choices gives the $\mathbf{F}$ in Algorithm \ref{alg:global}.
\end{remark}

It is interesting to consider the situation when we can get the objective function minimized to zero.
Formally, we make a notion of S$k$Mable for data $\mathbf{X}$.

\newenvironment{definition}[1]{\par\noindent\textbf{Definition 1} \it}{}
\begin{definition}((S$k$Mable).
	We say $\mathbf{X} \in \mathbb{R}^{d\times n}$ is S$k$Mable if
	$
	\exists \mathbf{F}\in\mathbb{R}^{d\times k}, \mathbf{G}\in\mathbb{R}^{n\times k}:\mathbf{G}\mathbb{1}_k=\mathbb{1}_n
	$
	such that
	$
	\mathbf{X}=\mathbf{FG}^\top.
	$
\end{definition}

S$k$Mable is closely related to the notion LSDable introduced in \cite{arora2013similarity}.
Indeed, if a matrix is LSDable, then it is S$k$Mable (Theorem \ref{thm:jmlr13prop}).
We will discuss the relation between them and introduce a more reasonable and more understandable notion TI-LSDable in \Cref{sec:leftsto}.

Following result gives the necessary and sufficient condition for a data matrix $\mathbf{X}$ to be S$k$Mable.

\begin{thm}[Condition of S$k$Mable]\label{thm:sfmable}
	For $\mathbf{X}\in\mathbb{R}^{d\times n}$, there exists $\mathbf{F}\in\mathbb{R}^{d\times k}$ and $\mathbf{G}\in\mathbb{R}^{k\times d}, \mathbf{G}\succeq 0,\mathbf{G}\mathbb{1}_k = \mathbb{1}_n$ such that $\mathbf{X}=\mathbf{FG}^\top$ if and only if rank$(\mathbf{X}\mathbf{H}_n) \leq k-1$.
\end{thm}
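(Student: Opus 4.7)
\textbf{Proof proposal for Theorem \ref{thm:sfmable}.} The plan is to prove the two implications separately. The forward direction reduces to a short rank computation after centering, while the reverse direction reuses the constructive existence that Corollary \ref{thm:mainCorollary} has already established via Algorithm \ref{alg:global}.

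For the forward direction ($\Rightarrow$), I would start from an assumed factorization $\mathbf{X}=\mathbf{F}\mathbf{G}^\top$ with $\mathbf{G}\succeq 0$ and $\mathbf{G}\mathbb{1}_k=\mathbb{1}_n$, and split $\mathbf{G}=\widetilde{\mathbf{G}}+\tfrac{1}{k}\mathbb{1}_n\mathbb{1}_k^\top$. The row-sum constraint forces $\widetilde{\mathbf{G}}\mathbb{1}_k=\mathbf{0}$, so the $k$ columns of $\widetilde{\mathbf{G}}$ are linearly dependent and $\text{rank}(\widetilde{\mathbf{G}})\le k-1$. Right-multiplying by $\mathbf{H}_n$ annihilates the rank-one piece, since $\mathbb{1}_n^\top\mathbf{H}_n=\mathbf{0}$, leaving $\mathbf{X}\mathbf{H}_n=\mathbf{F}\widetilde{\mathbf{G}}^\top\mathbf{H}_n$, whose rank is at most $\text{rank}(\widetilde{\mathbf{G}})\le k-1$.

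For the reverse direction ($\Leftarrow$), I would observe that $\text{rank}(\mathbf{X}\mathbf{H}_n)\le k-1$ makes the $(k-1)$-truncated SVD of $\overline{\mathbf{X}}:=\mathbf{X}\mathbf{H}_n$ exact, i.e., $\mathbf{X}\mathbf{H}_n=\mathbf{U}_{k-1}\mathbf{\Sigma}_{k-1}\mathbf{V}_{k-1}^\top$ with no residual. Corollary \ref{thm:mainCorollary} (equivalently, running Algorithm \ref{alg:global}) then produces $\mathbf{S}\in\mathbb{R}^{(k-1)\times k}$ and a feasible $\mathbf{G}$ (with $\mathbf{G}\succeq 0$ and $\mathbf{G}\mathbb{1}_k=\mathbb{1}_n$) satisfying the factorization identity $\mathbf{\Sigma}_{k-1}\mathbf{V}_{k-1}^\top=\mathbf{S}\mathbf{G}^\top$. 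Reabsorbing the sample mean via the stochasticity relation $\mathbb{1}_n^\top=\mathbb{1}_k^\top\mathbf{G}^\top$ then yields
\[
\mathbf{X}=\mathbf{X}\mathbf{H}_n+\overline{\mathbf{x}}\mathbb{1}_n^\top=\mathbf{U}_{k-1}\mathbf{S}\mathbf{G}^\top+\overline{\mathbf{x}}\mathbb{1}_k^\top\mathbf{G}^\top=\bigl(\mathbf{U}_{k-1}\mathbf{S}+\overline{\mathbf{x}}\mathbb{1}_k^\top\bigr)\mathbf{G}^\top,
\]
so defining $\mathbf{F}:=\mathbf{U}_{k-1}\mathbf{S}+\overline{\mathbf{x}}\mathbb{1}_k^\top$ gives the required S$k$M factorization with zero residual.

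The substantive obstacle lies entirely in the reverse implication: from a rank bound alone it is not a priori clear that the $(k-1)$-dimensional row space of $\mathbf{\Sigma}_{k-1}\mathbf{V}_{k-1}^\top$ can be realized through a nonnegative, row-stochastic $\mathbf{G}$. That obstacle is exactly what the regular-simplex construction in Algorithm \ref{alg:global} resolves, with positivity guaranteed by Lemma \ref{lem:numerical}; so invoking Corollary \ref{thm:mainCorollary} is the natural way to close the argument without reproducing the positivity check.
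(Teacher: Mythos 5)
Your proposal is correct and follows essentially the same route as the paper: the reverse direction is exactly the paper's argument (the Algorithm~\ref{alg:global} construction, whose feasibility and the identity $\mathbf{\Sigma}_{k-1}\mathbf{V}_{k-1}^\top=\mathbf{S}\mathbf{G}^\top$ are what Corollary~\ref{thm:mainCorollary} verifies, combined with exactness of the $(k-1)$-truncated SVD under the rank bound). For the forward direction you use the same centering idea as the paper (your $\widetilde{\mathbf{G}}^\top$ is precisely $\mathbf{H}_k\mathbf{G}^\top$), but you close it a bit more directly by right-multiplying by $\mathbf{H}_n$ so that $\mathbb{1}_n^\top\mathbf{H}_n=0$ annihilates the rank-one part, thereby avoiding the paper's detour through Lemma~\ref{lem:rankXH} and the rank-sum inequality --- a minor simplification of the same argument.
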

The condition shown above is importance since it connects the abstract definition S$k$Kable to an easily checkable condition rank$(\mathbf{XH}_n) \leq k-1$, which will show its usefulness when we analyze the connection between S$k$Mable and LSDable in \Cref{sec:leftsto}.

\subsection{Proof Sketch} \label{sec:proofSketch}
In this subsection, we lay out the proof sketch of Theorem \ref{thm:mainRes}.

The strategy is that we first ignore the nonnegative constraint on $\mathbf{G}$ to 
identify the optimal affine span of $\mathbf{F}$.
Then, we enforce the nonnegative constraint to hold by enlarge $\mathbf{F}$ in its affine span.
Specifically, we consider following less constrained problem:
\begin{equation}\label{eq:lessConstrained}
\min_{\mathbf{F},\mathbf{G}\mathbb{1}_k = \mathbb{1}_n} \| \mathbf{X} - \mathbf{FG}^\top\|_F^2.
\end{equation}
Note that there is a translation invariance property for above objective function with the constraint $\mathbf{G}\mathbb{1}_k=\mathbb{1}_n$, that is, for any $\mathbf{s} \in \mathbb{R}^d$, we have
\[
\|\mathbf{X} - \mathbf{F}\mathbf{G}^\top\|_F^2 = \|\mathbf{X} - \mathbf{s}\mathbb{1}_n^\top - (\mathbf{F}-\mathbf{s}\mathbb{1}_k^\top)\mathbf{G}^\top\|_F^2.
\]
This property inspire us to build a reformulation of Problem \eqref{eq:lessConstrained} by finding a proper translation $\mathbf{s}$ which makes $\text{rank}(\mathbf{F}) \leq k-1$. 
In detail, we have following lemma:
\begin{Lemma1}\label{eq:transLemma}
	Following equivalence can be shown using the translation invariance of Problem \eqref{eq:lessConstrained}:
	\[
	\min_{\mathbf{F},\mathbf{G}\mathbb{1}_k = \mathbb{1}_n} \| \mathbf{X} - \mathbf{FG}^\top\|_F^2
	\Leftrightarrow
	\min_{\substack{\mathbf{F},\mathbf{G}\mathbb{1}_k = \mathbb{1}_n \\ \mathbf{s}, \text{rank}(\mathbf{F}) \leq k-1 }} \| \mathbf{X} - \mathbf{s}\mathbb{1}_n^\top- \mathbf{FG}^\top\|_F^2.
	\]
\end{Lemma1}
Thus, we are now interested in solving
\begin{equation}\label{eq:rankProblem}
\min_{\substack{\mathbf{F},\mathbf{G}\mathbb{1}_k = \mathbb{1}_n \\ \mathbf{s}, \text{rank}(\mathbf{F}) \leq k-1 }} \| \mathbf{X} - \mathbf{s}\mathbb{1}_n^\top- \mathbf{FG}^\top\|_F^2.
\end{equation}
The rank deficiency constraint of $\mathbf{F}$ gives us an opportunity to optimize $\mathbf{G}$ first. Specifically, we have following lemma:
\begin{Lemma1}\label{lem:optimizedG}
	Let $\mathbf{F}\in\mathbb{R}^{d\times k}$ be the prototype matrix satisfying the constraint $\text{rank}(\mathbf{F}) \leq k-1$ and has the thin SVD $\mathbf{F}=\mathbf{U}\mathbf{\Sigma}\mathbf{V}^\top$, where $\mathbf{\Sigma}\in\mathbb{R}^{(k-1)\times(k-1)}$. Let $ \mathbf{\Phi} = \mathbf{V}\mathbf{\Sigma}^{-1}\mathbf{U}^\top \left(\mathbf{X}-\mathbf{s}\mathbb{1}_n^\top\right)$. Define
	\[
	\mathbf{G}_*^\top = \mathbf{\Phi} 
	+
	\frac{\mathbf{v}_\bot\left(\mathbb{1}_n^\top - \mathbb{1}_k^\top \mathbf{\Phi} \right) }{\mathbb{1}_k^\top\mathbf{v}_\bot },
	\]
	where $\mathbf{v}_\bot \in \mathbb{R}^k$ satisfies $\|\mathbf{v}_\bot\|_2 = 1$ and $\mathbf{V}^\top \mathbf{v}_\bot = 0$.
	Then
	\[
	\mathbf{G}_* \in \mathop{\arg\min}_{\mathbf{G}\mathbb{1}_k = \mathbb{1}_n} \| \mathbf{X} - \mathbf{s}\mathbb{1}_n^\top- \mathbf{FG}^\top\|_F^2.
	\]
\end{Lemma1}
Plug the $\mathbf{G}_*$ in Lemma \ref{lem:optimizedG} into the objective function of Problem \eqref{eq:rankProblem} and use the easily checkable fact $\mathbf{F}\mathbf{\Phi} = \mathbf{U}\mathbf{U}^\top \left(\mathbf{X}-\mathbf{s}\mathbb{1}_n^\top\right), \mathbf{F}\mathbf{v}_\bot = 0$. We have
\[
\begin{aligned}
&\left\| \mathbf{X} - \mathbf{s}\mathbb{1}_n^\top- \mathbf{FG}^\top_*\right\|_F^2 \\
=& \left\| \mathbf{X} - \mathbf{s}\mathbb{1}_n^\top- \mathbf{F}\left(\mathbf{\Phi} 
+
\frac{\mathbf{v}_\bot\left(\mathbb{1}_n^\top - \mathbb{1}_k^\top \mathbf{\Phi} \right) }{\mathbb{1}_k^\top \mathbf{v}_\bot}\right) \right\|_F^2 \\
=& \left\| \mathbf{X} - \mathbf{s}\mathbb{1}_n^\top- \mathbf{U}\mathbf{U}^\top \left(\mathbf{X}-\mathbf{s}\mathbb{1}_n^\top\right) \right\|_F^2.
\end{aligned}
\]
Then, the Problem \eqref{eq:rankProblem} can be reformulated as
\begin{equation}\label{eq:pcaProblem}
\min_{\mathbf{s},\mathbf{U}^\top\mathbf{U}=\mathbb{I}_{(k-1)}} \left\| \left(\mathbb{I}_{d} - \mathbf{U}\mathbf{U}^\top\right)\left( \mathbf{X} - \mathbf{s}\mathbb{1}_n^\top \right) \right\|_F^2,
\end{equation}
where we omit the variables ($\mathbf{\Sigma}$ and $\mathbf{V}$) since they cannot effect the objective function.
Interestingly, the Problem \eqref{eq:pcaProblem} is the objective of the classical principal components analysis in the regression form. Thus the optimal $\mathbf{s}_* = \frac{1}{n}\mathbf{X}\mathbb{1}_n$ and $\mathbf{U}_*$ is the leading $(k-1)$ left singular vectors of matrix $\left(\mathbf{X} - \mathbf{s}_*\mathbb{1}_n^\top\right) = \mathbf{X}\mathbf{H}_n$. 
Now we have shown $\mathbf{F}=\mathbf{U}^*\mathbf{\Sigma}\mathbf{V}^\top$ and $\mathbf{G}=\mathbf{V}\mathbf{\Sigma}^{-1}\mathbf{U}^{*\top }
+
\frac{\mathbf{v}_\bot\left(\mathbb{1}_n^\top - \mathbb{1}_k^\top \mathbf{V}\mathbf{\Sigma}^{-1}\mathbf{U}^{*\top } \right) }{\mathbb{1}_k^\top\mathbf{v}_\bot }$ are a solution for the less constrained Problem \eqref{eq:SfM}.
But there are still unused degree of freedom in $\mathbf{\Sigma}$ and $\mathbf{V}$. Next, we will make use of these degree of freedom to enforce the constraint $\mathbf{G}\succeq 0$ satisfied.

For ease of notation, let $\overline{\mathbf{X}}=\mathbf{X} - \frac{1}{n}\mathbf{X}\mathbb{1}_n\mathbb{1}_n^\top = \mathbf{X}-\mathbf{s}^*\mathbb{1}_n^\top$ be centralized $\mathbf{X}$ and $\mathbf{\Sigma V}^\top$ be $\mathbf{S}$. We have
\[
\begin{aligned}
&\left\| \mathbf{X} - \mathbf{s}^*\mathbb{1}_n^\top - \mathbf{F}\mathbf{G}^\top \right\|_F^2 \\
=& \left\| \overline{\mathbf{X}} - \mathbf{U}^*\mathbf{S}\mathbf{G}^\top \right\|_F^2 \\
=& \left\| \mathbf{U}^*\mathbf{U}^{*\top} \overline{\mathbf{X}} - \mathbf{U}^*\mathbf{S}\mathbf{G}^\top + \left(\mathbb{I}_{d}-\mathbf{U}^*\mathbf{U}^{*\top}\right) \overline{\mathbf{X}} \right\|_F^2 \\
=& \left\| \mathbf{U}^*\mathbf{U}^{*\top} \overline{\mathbf{X}} - \mathbf{U}^*\mathbf{S}\mathbf{G}^\top \right\|_F^2 + \left\| \left(\mathbb{I}_{d}-\mathbf{U}^*\mathbf{U}^{*\top}\right) \overline{\mathbf{X}} \right\|_F^2,
\end{aligned}
\]
where the last equality uses the easily checkable fact 
\[
\left\langle  \mathbf{U}^*(\mathbf{U}^{*\top} \overline{\mathbf{X}} - \mathbf{S}\mathbf{G}^\top), (\mathbb{I}_{d}-\mathbf{U}^*\mathbf{U}^{*\top}) \overline{\mathbf{X}} \right\rangle = 0.
\]
Note that $ \left\| \left(\mathbb{I}_{d}-\mathbf{U}^*\mathbf{U}^{*\top}\right) \overline{\mathbf{X}} \right\|_F^2$ is constant with respect to $\mathbf{S}$ and $\left\| \mathbf{U}^*\mathbf{U}^{*\top} \overline{\mathbf{X}} - \mathbf{U}^*\mathbf{S}\mathbf{G}^\top \right\|_F^2 = \left\|\mathbf{U}^{*\top} \overline{\mathbf{X}} - \mathbf{S}\mathbf{G}^\top \right\|_F^2$.
Thus, we can solve $\mathbf{S}$ and $\mathbf{G}$ by solving following problem
\begin{equation}\label{eq:add_constraint_G}
\min_{\mathbf{S},\mathbf{G}\mathbb{1}_k = \mathbb{1}_n} \left\|\mathbf{U}_{k-1}^\top \overline{\mathbf{X}} - \mathbf{S}\mathbf{G}^\top \right\|_F^2.
\end{equation}
Indeed, the objective function in Problem \eqref{eq:add_constraint_G} can be minimized to zero with $\mathbf{G}$ has the form mentioned in Lemma 
\ref{lem:optimizedG}.
That is the reason why $\mathbf{\Sigma V}^\top$ disappeared in Problem \eqref{eq:pcaProblem}, since in that case $\left\| \mathbf{X} - \mathbf{s}^*\mathbb{1}_n^\top - \mathbf{F}\mathbf{G}^\top \right\|_F^2=\left\| \left(\mathbb{I}_{d}-\mathbf{U}^*\mathbf{U}^{*\top}\right) \overline{\mathbf{X}} \right\|_F^2$.

Now we back to Theorem \ref{thm:mainRes}.
Note that if there exists $\mathbf{S}\in\mathbb{R}^{(k-1)\times k}$ and $\mathbf{G}$ satisfy (a) $\mathbf{\Sigma}_{k-1}\mathbf{V}_{k-1}^\top = \mathbf{S}\mathbf{G}^\top$; (b) $\mathbf{G}\mathbb{1}_k = \mathbb{1}_n$; (c) $\mathbf{G} \succeq 0$,
then the objective function in Problem \eqref{eq:add_constraint_G} can still be minimized to zero.
Combining with $\mathbf{U}^*$ minimizing $\left\| \left(\mathbb{I}_{d}-\mathbf{U}^*\mathbf{U}^{*\top}\right) \overline{\mathbf{X}} \right\|_F^2$, that completes the proof of Theorem \ref{thm:mainRes}.

\section{Discussions} \label{sec:discuss}
In this section, we discuss the solution non-uniqueness issue and the stability of the soft $k$-Means modal (\Cref{sec:arbitray,sec:stability}) by providing lower and upper bound. Then a new heuristic model, named Minimal Volume Soft $k$-Means, is proposed in \Cref{sec:MVSkM} with an optimization algorithm, which is guaranteed to be descent, to address the non-uniqueness issue. Finally, we proposed the notion of Translation Invariant LSDable towards solving an open problem mentioned in \cite{arora2013similarity}.

\subsection{Stability}\label{sec:stability}
As a byproduct of  \Cref{sec:proofSketch},
we can analyze the stability of the soft $k$-Means problem 
by consider the perturbation in data matrix.
With \Cref{sec:arbitray}, we know that bound the optimal variables for perturbed data is hopeless.
We provide a stability upper bound for the optimal objective value as follows.
\begin{thm}[Stability]\label{thm:stability}
	Assume additively perturbed data $\widetilde{\mathbf{X}} = \mathbf{X} + \mathbf{E}$. 
	We have
	\[
	\|\mathbf{X} - \widetilde{\mathbf{F}}_*\widetilde{\mathbf{G}}_*^\top \|_F^2 \leq  2\|\mathbf{E}\|_F^2 + \|\mathbf{X} - \mathbf{F}_*\mathbf{G}_*^\top \|_F^2.
	\]
\end{thm}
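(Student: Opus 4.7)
The plan is to combine the triangle inequality with the global optimality of $\widetilde{\mathbf{F}}_*\widetilde{\mathbf{G}}_*^\top$ on the perturbed instance, which is guaranteed by Corollary~\ref{thm:mainCorollary}. Specifically, I would start by writing
\[
\mathbf{X} - \widetilde{\mathbf{F}}_*\widetilde{\mathbf{G}}_*^\top \;=\; -\mathbf{E} + \bigl(\widetilde{\mathbf{X}} - \widetilde{\mathbf{F}}_*\widetilde{\mathbf{G}}_*^\top\bigr)
\]
and applying the Young-type inequality $\|\mathbf{A}+\mathbf{B}\|_F^2 \leq 2\|\mathbf{A}\|_F^2 + 2\|\mathbf{B}\|_F^2$. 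This reduces the task to controlling the squared residual $\|\widetilde{\mathbf{X}} - \widetilde{\mathbf{F}}_*\widetilde{\mathbf{G}}_*^\top\|_F^2$ up to an additive $2\|\mathbf{E}\|_F^2$.

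For the reduced residual, I would invoke the fact that the feasible set for Problem~\eqref{eq:SfM} is data-independent, so $(\mathbf{F}_*, \mathbf{G}_*)$ is a feasible candidate for the perturbed problem, and global optimality gives
\[
\|\widetilde{\mathbf{X}} - \widetilde{\mathbf{F}}_*\widetilde{\mathbf{G}}_*^\top\|_F^2 \;\leq\; \|\widetilde{\mathbf{X}} - \mathbf{F}_*\mathbf{G}_*^\top\|_F^2.
\]
A second splitting $\widetilde{\mathbf{X}} - \mathbf{F}_*\mathbf{G}_*^\top = \mathbf{E} + (\mathbf{X} - \mathbf{F}_*\mathbf{G}_*^\top)$ then expresses this in terms of $\|\mathbf{E}\|_F^2$ and the clean optimum $\|\mathbf{X} - \mathbf{F}_*\mathbf{G}_*^\top\|_F^2$, and chaining the two estimates produces the desired stability bound.

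The hard part will be matching the exact constants stated, in particular coefficient $1$ on $\|\mathbf{X} - \mathbf{F}_*\mathbf{G}_*^\top\|_F^2$: a naive iteration of the Young-type inequality inflates this coefficient beyond $1$. To recover the tight constant I would instead substitute the explicit closed form of the minimizer recorded in the proof of Corollary~\ref{thm:mainCorollary}, namely $\widetilde{\mathbf{F}}_*\widetilde{\mathbf{G}}_*^\top = \widetilde{\mathbf{s}}_*\mathbb{1}_n^\top + \widetilde{\mathbf{U}}^*\widetilde{\mathbf{U}}^{*\top}\widetilde{\mathbf{X}}\mathbf{H}_n$, expand the target residual using $\widetilde{\mathbf{X}} = \mathbf{X} + \mathbf{E}$, and split $\mathbf{E}$ along the orthogonal decomposition $\mathbb{I}_n = \mathbf{H}_n + \tfrac{1}{n}\mathbb{1}_n\mathbb{1}_n^\top$. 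The residual then breaks into three mutually orthogonal pieces; the cross terms vanish, the $\mathbf{E}$-dependent pieces sum to at most $\|\mathbf{E}\|_F^2$ because orthogonal projections are contractions, and the remaining piece $\|(\mathbb{I}_d - \widetilde{\mathbf{U}}^*\widetilde{\mathbf{U}}^{*\top})\mathbf{X}\mathbf{H}_n\|_F^2$ is handled by the variational characterization of PCA (Problem~\eqref{eq:pcaProblem}) to recover $\|\mathbf{X} - \mathbf{F}_*\mathbf{G}_*^\top\|_F^2$ with the promised coefficient.
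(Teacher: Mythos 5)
Your first route (the Young-type inequality plus feasibility of $(\mathbf{F}_*,\mathbf{G}_*)$ for the perturbed instance) only yields $\|\mathbf{X}-\widetilde{\mathbf{F}}_*\widetilde{\mathbf{G}}_*^\top\|_F^2 \leq 6\|\mathbf{E}\|_F^2 + 4\|\mathbf{X}-\mathbf{F}_*\mathbf{G}_*^\top\|_F^2$, as you concede, so everything hinges on your second route. There the orthogonal decomposition is sound: writing $\widetilde{\mathbf{F}}_*\widetilde{\mathbf{G}}_*^\top=\frac1n\widetilde{\mathbf{X}}\mathbb{1}_n\mathbb{1}_n^\top+\widetilde{\mathbf{U}}^*\widetilde{\mathbf{U}}^{*\top}\widetilde{\mathbf{X}}\mathbf{H}_n$, the residual does split into the mutually orthogonal pieces $(\mathbb{I}_d-\widetilde{\mathbf{U}}^*\widetilde{\mathbf{U}}^{*\top})\mathbf{X}\mathbf{H}_n$, $-\widetilde{\mathbf{U}}^*\widetilde{\mathbf{U}}^{*\top}\mathbf{E}\mathbf{H}_n$ and $-\frac1n\mathbf{E}\mathbb{1}_n\mathbb{1}_n^\top$, and the two $\mathbf{E}$-pieces indeed sum to at most $\|\mathbf{E}\|_F^2$. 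The genuine gap is the final step: you invoke the variational characterization of PCA (Problem \eqref{eq:pcaProblem}) to bound $\|(\mathbb{I}_d-\widetilde{\mathbf{U}}^*\widetilde{\mathbf{U}}^{*\top})\mathbf{X}\mathbf{H}_n\|_F^2$ \emph{above} by $\|\mathbf{X}-\mathbf{F}_*\mathbf{G}_*^\top\|_F^2=\|(\mathbb{I}_d-\mathbf{U}^*\mathbf{U}^{*\top})\mathbf{X}\mathbf{H}_n\|_F^2$, but that characterization says exactly that $\mathbf{U}^*$, the top singular subspace of the \emph{clean} $\mathbf{X}\mathbf{H}_n$, \emph{minimizes} this residual; substituting $\widetilde{\mathbf{U}}^*$, which is optimal for $\widetilde{\mathbf{X}}\mathbf{H}_n$ rather than $\mathbf{X}\mathbf{H}_n$, gives an inequality in the opposite direction ($\geq$, not $\leq$). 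To upper-bound that piece you would have to pass through the optimality of $\widetilde{\mathbf{U}}^*$ on the perturbed data and then translate back to the clean data, which reintroduces precisely the cross terms and constant inflation you set out to avoid.

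The missing ingredient, and the point where the paper's argument differs, is a singular-value perturbation bound. The paper identifies the perturbed optimum $\|\widetilde{\mathbf{X}}-\widetilde{\mathbf{F}}_*\widetilde{\mathbf{G}}_*^\top\|_F^2$ with a tail sum of squared singular values $\sum_{i\geq k}\tilde{\sigma}_i^2$, compares it with the clean tail sum $\sum_{i\geq k}\sigma_i^2=\|\mathbf{X}-\mathbf{F}_*\mathbf{G}_*^\top\|_F^2$ through the reverse triangle inequality, and controls the difference of tail norms by Mirsky's inequality (Lemma \ref{lem:mirsky}), which is what produces the extra $\|\mathbf{E}\|_F^2$ term. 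Nothing in your proposal plays this role, so as written the argument cannot reach the stated bound: you either need to import a Mirsky/Weyl-type comparison between the singular values of $\mathbf{X}\mathbf{H}_n$ and $\widetilde{\mathbf{X}}\mathbf{H}_n$, or accept the weaker constants delivered by your first route.
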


\begin{remark}
	Theorem \ref{thm:sfmable} shows that the objective function value of Soft $k$-Means is stable and up to a constant factor of the oracle
	difference $\|\mathbf{E}\|_F^2$. This is definitely a good news.
	But for the variables, that is prototype $\mathbf{F}$ and membership $\mathbf{G}$, we cannot expect such a upper bound.
	Indeed, even for unperturbed data, we can prove a lower bound on the distance between a solution of Problem \eqref{thm:mainRes} and the oracle membership $\mathbf{G}^*$.
\end{remark}

\subsection{Solutions Non-uniqueness}\label{sec:arbitray}

In this section, we report a bad news on the soft $k$-Means model, that is,
there is substantial non-uniqueness in its optima. 
Formally, we provide following lower bound on the distance between its optima and oracle 
member matrix $\mathbf{G}^*$ in Frobenius norm.
\begin{thm}[Lower bound]
	Let the best $(k-1)$-rank approximation of $\mathbf{X}$ be $\mathbf{X}_{k-1}$.
	For any oracle $\mathbf{G}^*$, there exists $\mathbf{G}$, which is a solution of Problem \eqref{eq:SfM},  such that 
	\[
	\|\mathbf{G} - \mathbf{G}^*\|_F^2 \geq \frac{1}{r\sqrt{k(k-1)}} \| \mathbf{X}_{k-1}\|_F^2.
	\]
\end{thm}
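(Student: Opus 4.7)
The plan is to exploit the rotational ambiguity already flagged in Remark 2. Since the centralization matrix $\mathbf{H}_k$ has eigenvalue $1$ with multiplicity $k-1$, every truncated eigenbasis $\mathbf{B}$ used in Algorithm~\ref{alg:global} can be replaced by $\mathbf{B}\mathbf{Q}$ with $\mathbf{Q}\in O(k-1)$. The argument proving Corollary~\ref{thm:mainCorollary} uses $\mathbf{B}$ only through the identities $\mathbf{B}^\top\mathbf{B}=\mathbb{I}_{k-1}$ and $\mathbb{1}_k^\top\mathbf{B}=0$, both preserved under right multiplication by an orthogonal matrix. Hence
\[
\mathbf{G}_\mathbf{Q}^\top = \frac{1}{r\sqrt{k(k-1)}}\mathbf{B}\mathbf{Q}\mathbf{U}_{k-1}^\top\mathbf{X}+\frac{1}{k}\mathbb{1}_k\mathbb{1}_n^\top
\]
is a genuine global minimizer of Problem~\eqref{eq:SfM} for every $\mathbf{Q}\in O(k-1)$, producing a continuous family of optima.

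Next I would compute the distance between two such family members. Because the columns of $\mathbf{B}$ are orthonormal,
\[
\|\mathbf{G}_\mathbf{Q}-\mathbf{G}_{\mathbf{Q}'}\|_F = \frac{1}{r\sqrt{k(k-1)}}\left\|(\mathbf{Q}-\mathbf{Q}')\mathbf{U}_{k-1}^\top\mathbf{X}\right\|_F,
\]
and $\|\mathbf{U}_{k-1}^\top\mathbf{X}\|_F = \|\mathbf{X}_{k-1}\|_F$ because $\mathbf{X}_{k-1}=\mathbf{U}_{k-1}\mathbf{U}_{k-1}^\top\mathbf{X}$ is the best rank-$(k-1)$ approximation of $\mathbf{X}$. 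Choosing $\mathbf{Q}=\mathbb{I}_{k-1}$ and $\mathbf{Q}'=-\mathbb{I}_{k-1}$ (which is orthogonal for every $k$) produces two valid solutions separated by $\|\mathbf{G}_\mathbf{Q}-\mathbf{G}_{\mathbf{Q}'}\|_F = 2\|\mathbf{X}_{k-1}\|_F/(r\sqrt{k(k-1)})$.

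For any oracle $\mathbf{G}^*$ I would then invoke the triangle inequality: at least one of $\|\mathbf{G}_\mathbf{Q}-\mathbf{G}^*\|_F$ and $\|\mathbf{G}_{\mathbf{Q}'}-\mathbf{G}^*\|_F$ must be at least half the mutual gap, hence at least $\|\mathbf{X}_{k-1}\|_F/(r\sqrt{k(k-1)})$. Squaring yields a lower bound of exactly the shape claimed in the theorem, with the $\mathbf{G}$ promised by the statement being whichever of the two candidates attains the farther distance.

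Two obstacles remain. First, I must verify that $\mathbf{B}\mathbf{Q}$ inherits the nonnegativity step (c) in the proof of Corollary~\ref{thm:mainCorollary}; this should be routine because that step only used $\|\mathbf{B}\mathbf{w}_i/r\|_2\le 1$ and $\mathbb{1}_k^\top\mathbf{B}\mathbf{w}_i=0$, both stable under orthogonal right multiplication, so Lemma~\ref{lem:numerical} continues to apply. Second, the stated constant $1/(r\sqrt{k(k-1)})$ rather than $1/(r^2k(k-1))$ suggests either a rescaling of $\|\mathbf{X}_{k-1}\|_F^2$ that absorbs one of the $r\sqrt{k(k-1)}$ factors, or a sharper maximization over $O(k-1)$ than the two-point triangle-inequality argument above; pinning down the exact constant is the chief bookkeeping hurdle, whereas the non-uniqueness mechanism itself follows transparently from the rotational degree of freedom in $\mathbf{B}$.
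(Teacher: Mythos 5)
Your proposal follows essentially the same route as the paper's own proof: the paper also generates a family of global optima by replacing $\mathbf{B}$ with $\mathbf{B}\mathbf{R}$, $\mathbf{R}^\top\mathbf{R}=\mathbb{I}_{(k-1)}$ (exactly your $O(k-1)$ freedom, justified through Theorem \ref{thm:mainRes}), picks two far-apart members of this family, and plays them off against the oracle $\mathbf{G}^*$ by a max/triangle-type argument. The ``chief bookkeeping hurdle'' you flag is not a defect of your reasoning but an inconsistency in the paper's own derivation: its chain asserts $\bigl\|\tfrac{1}{r\sqrt{k(k-1)}}\mathbf{B}(\mathbf{R}-\mathbb{I})\mathbf{U}_{k-1}^\top\mathbf{X}\bigr\|_F^2=\tfrac{1}{r\sqrt{k(k-1)}}\|(\mathbf{R}-\mathbb{I})\mathbf{U}_{k-1}^\top\mathbf{X}\|_F^2$, i.e.\ the scalar prefactor is not squared, and the final step mixes $\|\cdot\|_F$ with $\|\cdot\|_F^2$ (bounding the maximum by $\tfrac12\bigl(\|\mathbf{G}_1-\mathbf{G}^*\|_F+\|\mathbf{G}_2-\mathbf{G}^*\|_F^2\bigr)$ and then by $\tfrac12\|\mathbf{G}_1-\mathbf{G}_2\|_F^2$). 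Your careful version of the identical mechanism, with $\mathbf{Q}'=-\mathbb{I}_{(k-1)}$ and the two-point argument, yields $\|\mathbf{G}-\mathbf{G}^*\|_F^2\geq\tfrac{1}{r^2k(k-1)}\|\mathbf{X}_{k-1}\|_F^2$, which is what this argument genuinely supports; the constant $\tfrac{1}{r\sqrt{k(k-1)}}$ in the statement is an artifact of those slips, not something recoverable by a sharper maximization over $O(k-1)$. Your remaining verification is indeed routine: $\|\mathbf{B}\mathbf{Q}\mathbf{w}_i\|_2=\|\mathbf{w}_i\|_2$ and $\mathbb{1}_k^\top\mathbf{B}\mathbf{Q}=0$, so step (c) of the proof of Corollary \ref{thm:mainCorollary} and Lemma \ref{lem:numerical} apply verbatim, confirming every $\mathbf{G}_{\mathbf{Q}}$ is feasible and optimal.
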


\begin{remark}
	The solutions non-uniqueness issue is mainly caused since
	the volume\footnote{We abuse the notion volume here to refer to the volume relative to the affine subspace.} of the polyhedron of $\mathbf{F}$ can be made infinitely
	large to minimize the Problem \eqref{eq:add_constraint_G} in the affine span of $\mathbf{F}$,.
	We note that this issue has been realized in \cite{nascimento2003modeling} by arguing
	that for any $\alpha > 0$, $\alpha\mathbf{f}_i \cdot \frac{1}{\alpha} g_{ij}$ remain unchanged.
	Thus $\|\mathbf{f}_i\|$ can tend to infinity without change the objective value.
	In the next subsection, we try to address this issue by jointly minimizing the objective function of Soft $k$-Means
	and the volume of the $\mathbf{F}$ polyhedron.
\end{remark}

\subsection{The Minimal Volume Soft $k$-Means}\label{sec:MVSkM}
In this subsection, we report a new model which minimizes the objective function of soft $k$-Means while keep the volume of polyhedron
of prototypes $\mathbf{F}$ minimized. While such a problem has been proved to be NP-hard to solve \cite{zhou2002algorithms}, 
we propose an algorithm that is guaranteed to make objective function value descent. For simplicity, in this subsection, we assume the data is centered, that is, $\mathbf{X}\mathbb{1}_n = 0$.

Since the volume of a polyhedron with affine dimension less than the dimension ambient is always $0$, we consider the volume in the affine space rather than in the ambient space, that is, in $\text{span}(\mathbf{U})$ rather than $\mathbb{R}^d$. 

The volume of polyhedron generated by column vector of $\mathbf{F}$ in subspace $\text{span}(\mathbf{U})$ can be written as
\[
\text{vol}_{\text{span}(\mathbf{U})}(\mathbf{F}) = \frac{1}{(k-1)!} \det 
\left[ \begin{array}{c}
\mathbf{\Sigma V}^\top \\ \mathbb{1}_k^\top \end{array} 
\right]
\]
Without loss of generality, we minimize it within $\log$ and absorb the $1/(k-1)!$ into a hyper-parameter $\lambda$. The problem we concern is
\[
\min_{\mathbf{F},\mathbf{G}\succeq 0,\mathbf{G}\mathbb{1}_k = \mathbb{1}_n} \| \mathbf{X} - \mathbf{FG}^\top\|_F^2 + \lambda \log \det 
\left[ \begin{array}{c}
\mathbf{\Sigma V}^\top \\ \mathbb{1}_k^\top \end{array}
\right].
\]
To solve above problem, we make following observation
\begin{Lemma1}
	Assume $\mathbf{F}=\mathbf{U\Sigma V}^\top$ such that $\mathbf{F}\mathbb{1}_k = 0$. Then,
	\[
	\log \det 
	\left[ \begin{array}{c}
	\mathbf{\Sigma V}^\top \\ \mathbb{1}_k^\top \end{array}
	\right] = \log(\sqrt{k}) +  \sum_{i=1}^{k-1} \log(\sigma_i(\mathbf{F})).
	\]
\end{Lemma1}
Using $\mathbf{X}\mathbb{1}_n = 0$, we can see from \Cref{sec:proofSketch} that the condition $\mathbf{F}\mathbb{1}_k = 0$ satisfied,
which leads to a possible interesting problem
\[
\min_{\mathbf{F},\mathbf{G}\succeq 0,\mathbf{G}\mathbb{1}_k = \mathbb{1}_n} \| \mathbf{X} - \mathbf{FG}^\top\|_F^2 + \frac{\lambda}{2} \sum_{i=1}^k \log(\sigma_i^2(\mathbf{F})).
\]
But sadly, above problem approach $-\infty$ when the singular value of $\mathbf{F}$ tends to zero, which is a trivial solution due to the lower unboundedness of logarithm.
Thus, we solve following problem with an $\epsilon > 0$ modification:
\begin{equation}\label{eq:mvsfm}
\min_{\mathbf{F},\mathbf{G}\succeq 0,\mathbf{G}\mathbb{1}_k = \mathbb{1}_n} \| \mathbf{X} - \mathbf{FG}^\top\|_F^2 + \frac{\lambda}{2} \sum_{i=1}^k \log(\sigma_i^2(\mathbf{F}) + \varepsilon).
\end{equation}
Inspired by recent research on multi-task learning \cite{nie2018calibrated}, we design an iteratively re-weighted strategy  by performing alternating minimization on $\mathbf{F}$ and $\mathbf{G}$. Specifically, we update $\mathbf{F}_{t+1}$ by solving 
\[
\mathbf{F}_{t+1} = \mathop{\arg\min}_{F} \| \mathbf{X} - \mathbf{F}\mathbf{G}_t^\top \|^2_F + \lambda \text{Tr}\left(\mathbf{D}_t\mathbf{F}^\top\mathbf{F}\right),
\]
where $\mathbf{D} = \mathbf{V}_t \text{diag}\left( \{ 1/(\sigma_i^2(\mathbf{F}_t)+\varepsilon)  \}_{i=1}^{k} \right)$ and $\mathbf{F}_t = \mathbf{U}_t\mathbf{\Sigma}_t\mathbf{V}^\top$. Above subproblem for $\mathbf{F}$ can be solve easily using first-order optimal condition. Then, we update $\mathbf{G}_{t+1}$, by solve following $n$ problem
\[
(\mathbf{g}_i)_t = \mathop{\arg\min}_{\mathbf{g}\mathbb{1}_k = 1, \mathbf{g} \succeq 0} \| \mathbf{x}_i - \mathbf{F}_{t+1}\mathbf{g}^\top \|_2^2,
\]
which is convex and can be solved with Projected Gradient Descent or even with Nesterov's acceleration.

We summarize the procedure to solve the MVS$k$M model in following algorithm.

\begin{algorithm}[H]  
	\caption{Solve the MVS$k$M Problem}  
	\label{alg:MVSfM}
	\begin{algorithmic}[1]  
		\Require $\mathbf{X}\in\mathbb{R}^{d\times n}$, $\lambda$, and the clusters number $k$
		\State $\mathbf{F}_0 \leftarrow$ random initialized prototypes.
		\Repeat
		\State $\mathbf{V}_t \leftarrow$ left SV of $\mathbf{F}_t$
		\State $\mathbf{D}_t \leftarrow\mathbf{V}_t \text{diag}(\{ 1/(\sigma_i^2(\mathbf{F}_t) + \varepsilon) \}_{i=1}^k) \mathbf{V}_t^\top$
		\State $\mathbf{F}_{t+1} \leftarrow \mathbf{X}\mathbf{G}_t(\mathbf{G}_t^\top\mathbf{G}_t + \lambda \mathbf{D}_t)^{-1}$
		\State $\displaystyle \mathbf{G}_{t+1} \leftarrow \mathop{\arg\min}_{\mathbf{G}\succeq 0,\mathbf{G}\mathbb{1}_k = \mathbb{1}_n} \| \mathbf{X} - \mathbf{F}_{t+1}\mathbf{G}^\top\|_F^2.$
		\Until{converge}
		
		\Ensure Prototypes $\mathbf{F} \in \mathbb{R}^{d\times c}$, membership $\mathbf{G} \in \mathbb{R}^{c \times n}$.
	\end{algorithmic}  
\end{algorithm} 

Here, we provide some analysis for above algorithm. For ease of notation, we shall write the objective function as
\[
\mathcal{L}(\mathbf{F}, \mathbf{G})=\| \mathbf{X} - \mathbf{F}\mathbf{G}^\top\|_F^2 + \frac{\lambda}{2} \sum_{i=1}^k \log(\sigma_i^2(\mathbf{F}) + \varepsilon).
\]
Following result shows that our algorithm to solving MVS$k$M is a descent method.
\begin{thm}[Descent]\label{thm:mvsfmDescent}
	Algorithm \ref{alg:MVSfM} is a descent method, that is,
	\[
	\mathcal{L}(\mathbf{F}_{t+1}, \mathbf{G}_{t+1}) \leq \mathcal{L}(\mathbf{F}_{t}, \mathbf{G}_{t}).
	\]
\end{thm}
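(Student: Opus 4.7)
The plan is to establish descent via the standard majorization-minimization (MM) argument, split into the two alternating steps. Decompose the inequality as
\[
\mathcal{L}(\mathbf{F}_{t+1},\mathbf{G}_{t+1}) \le \mathcal{L}(\mathbf{F}_{t+1},\mathbf{G}_t) \le \mathcal{L}(\mathbf{F}_t,\mathbf{G}_t),
\]
so it suffices to handle the $\mathbf{G}$-step and the $\mathbf{F}$-step separately. The $\mathbf{G}$-step is easy: the log-spectrum term $\tfrac{\lambda}{2}\sum_i \log(\sigma_i^2(\mathbf{F})+\varepsilon)$ does not involve $\mathbf{G}$, so by construction $\mathbf{G}_{t+1}$ is a minimizer of $\mathbf{G}\mapsto \|\mathbf{X}-\mathbf{F}_{t+1}\mathbf{G}^\top\|_F^2$ over the feasible set and in particular $\mathcal{L}(\mathbf{F}_{t+1},\mathbf{G}_{t+1}) \le \mathcal{L}(\mathbf{F}_{t+1},\mathbf{G}_t)$.

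The crux is the $\mathbf{F}$-step. The idea is to rewrite the spectral regularizer as $\sum_{i=1}^{k}\log(\sigma_i^2(\mathbf{F})+\varepsilon) = \log\det(\mathbf{F}^\top\mathbf{F}+\varepsilon\mathbb{I}_k)$ and then invoke concavity of $\log\det$ on the positive-definite cone. The standard first-order inequality gives, for any positive definite $\mathbf{A},\mathbf{A}_0$,
\[
\log\det(\mathbf{A}) \le \log\det(\mathbf{A}_0) + \mathrm{Tr}\!\left(\mathbf{A}_0^{-1}(\mathbf{A}-\mathbf{A}_0)\right).
\]
Applying this with $\mathbf{A}=\mathbf{F}^\top\mathbf{F}+\varepsilon\mathbb{I}_k$ and $\mathbf{A}_0=\mathbf{F}_t^\top\mathbf{F}_t+\varepsilon\mathbb{I}_k$, and using the eigendecomposition $\mathbf{A}_0^{-1}=\mathbf{V}_t\,\mathrm{diag}\!\left(1/(\sigma_i^2(\mathbf{F}_t)+\varepsilon)\right)\mathbf{V}_t^\top = \mathbf{D}_t$, yields a quadratic majorizer
\[
\tfrac{\lambda}{2}\log\det(\mathbf{F}^\top\mathbf{F}+\varepsilon\mathbb{I}_k) \le \tfrac{\lambda}{2}\,\mathrm{Tr}(\mathbf{D}_t\mathbf{F}^\top\mathbf{F}) + C_t,
\]
with equality at $\mathbf{F}=\mathbf{F}_t$. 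Adding the reconstruction term, this produces a surrogate $\widetilde{\mathcal{L}}_t(\mathbf{F})$ which is convex and quadratic in $\mathbf{F}$, tight at $\mathbf{F}_t$, and an upper bound on $\mathcal{L}(\mathbf{F},\mathbf{G}_t)$ everywhere.

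Next I would verify that the closed-form update in line~5 of Algorithm~\ref{alg:MVSfM} is the unconstrained minimizer of this surrogate: setting the gradient of $\|\mathbf{X}-\mathbf{F}\mathbf{G}_t^\top\|_F^2 + \lambda\,\mathrm{Tr}(\mathbf{D}_t\mathbf{F}^\top\mathbf{F})$ (up to the overall factor that the paper absorbs) to zero gives $\mathbf{F}(\mathbf{G}_t^\top\mathbf{G}_t+\lambda\mathbf{D}_t) = \mathbf{X}\mathbf{G}_t$, whose unique solution (since $\mathbf{G}_t^\top\mathbf{G}_t+\lambda\mathbf{D}_t\succ 0$) coincides with $\mathbf{F}_{t+1}$. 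Combining tightness at $\mathbf{F}_t$, minimization of the surrogate by $\mathbf{F}_{t+1}$, and the majorization inequality gives the standard MM chain
\[
\mathcal{L}(\mathbf{F}_{t+1},\mathbf{G}_t)\le \widetilde{\mathcal{L}}_t(\mathbf{F}_{t+1}) \le \widetilde{\mathcal{L}}_t(\mathbf{F}_t)=\mathcal{L}(\mathbf{F}_t,\mathbf{G}_t),
\]
which completes the proof together with the $\mathbf{G}$-step inequality.

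The main obstacle, and the step that merits the most care, is the algebraic identification of the majorizer: one must confirm that $\mathbf{D}_t$ as defined in line~4 really equals $(\mathbf{F}_t^\top\mathbf{F}_t+\varepsilon\mathbb{I}_k)^{-1}$ (here it helps that $\mathbf{V}_t$ denotes the right singular vectors of $\mathbf{F}_t$, not the left ones as line~3's description might suggest), and that the scaling convention used in the $\mathbf{F}$-update is consistent with the factor $\tfrac{\lambda}{2}$ appearing in $\mathcal{L}$. Secondary care is needed to note that a constant ambient-space term $\log\varepsilon$ may appear depending on whether $d\ge k$ is assumed in writing $\log\det$; this constant cancels in all comparisons and does not affect descent.
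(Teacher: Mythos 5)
Your proof is correct, and it follows the same alternating skeleton as the paper ($\mathbf{G}$-step trivially descends; the $\mathbf{F}$-step descends because line~5 minimizes a quadratic surrogate that is tight at $\mathbf{F}_t$), but the key lemma you use to establish the majorization is genuinely different. You invoke concavity of $\log\det$ on the positive-definite cone, $\log\det(\mathbf{A})\le\log\det(\mathbf{A}_0)+\mathrm{Tr}\bigl(\mathbf{A}_0^{-1}(\mathbf{A}-\mathbf{A}_0)\bigr)$ with $\mathbf{A}_0^{-1}=(\mathbf{F}_t^\top\mathbf{F}_t+\varepsilon\mathbb{I}_k)^{-1}=\mathbf{D}_t$, which yields the surrogate in one step. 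The paper instead works at the level of singular values: it applies the scalar bound $\log x\le x-1$ to the ratios $(\sigma_i^2(\mathbf{F}_{t+1})+\varepsilon)/(\sigma_i^2(\mathbf{F}_t)+\varepsilon)$ and then uses von Neumann's trace inequality (its Lemma on the trace inequality) to dominate $\sum_i \sigma_i^2(\mathbf{F}_{t+1})/(\sigma_i^2(\mathbf{F}_t)+\varepsilon)$ by $\mathrm{Tr}(\mathbf{D}_t^\top\mathbf{F}_{t+1}^\top\mathbf{F}_{t+1})$, arriving at the same surrogate inequality; the final step (optimality of the closed-form $\mathbf{F}_{t+1}$ for the reweighted quadratic problem, then the $\mathbf{G}$-update) is identical in both arguments. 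Your route is the cleaner, more standard MM argument and requires recognizing $\mathbf{D}_t$ as the inverse of $\mathbf{F}_t^\top\mathbf{F}_t+\varepsilon\mathbb{I}_k$ (hence $\mathbf{V}_t$ must be the right singular vectors, as you correctly note against the algorithm's ``left SV'' wording); the paper's route avoids that identification but pays with the singular-value pairing argument. The two caveats you flag are real inconsistencies in the paper's own presentation (the model uses $\tfrac{\lambda}{2}\sum_{i=1}^{k}$ while the proof uses $\lambda\sum_{i=1}^{k-1}$), and descent does require the surrogate's coefficient on $\mathrm{Tr}(\mathbf{D}_t\mathbf{F}^\top\mathbf{F})$ to match the coefficient on the log-spectrum term in $\mathcal{L}$ --- under the convention the paper adopts inside its proof, your argument goes through exactly as written.
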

Since the objective function (after $\varepsilon$ modification) of Problem \eqref{eq:mvsfm} is bounded from below ($\log(\sigma_i^2(\mathbf{F}) + \varepsilon) \geq \log(\varepsilon)$), the iterative scheme will finally converge from Theorem \ref{thm:mvsfmDescent}.
%

\begin{figure*}[!htbp]\label{fig:toy}
	\centering
	\includegraphics[width=1\textwidth]{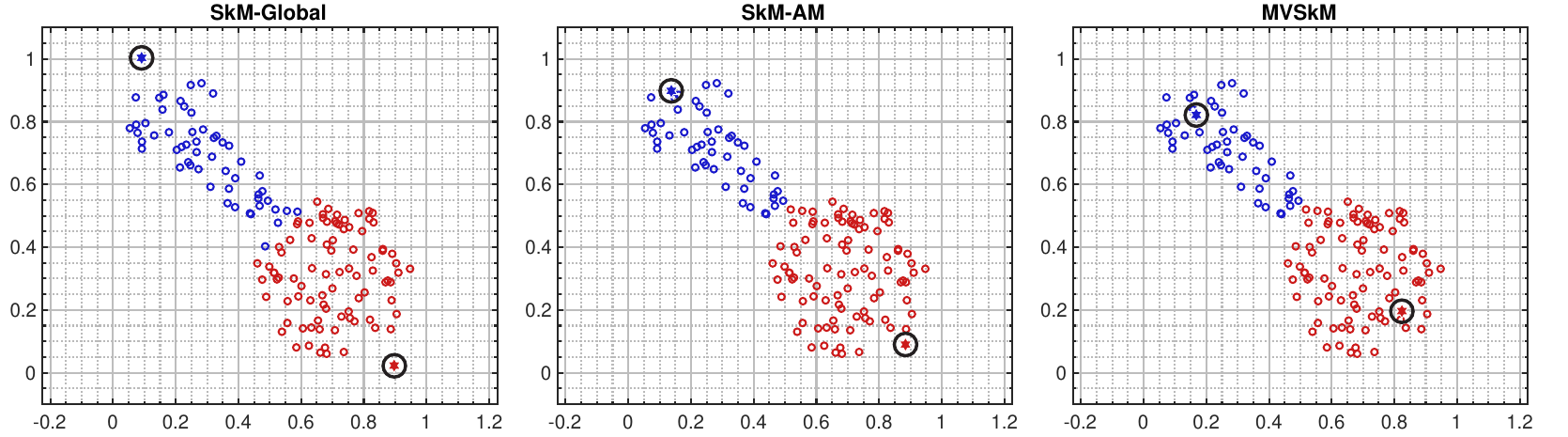}
	\caption{Comparison of solving the Soft $k$-Means problem with the proposed global optimal algorithm (left), alternating minimization (middle), and the Minimal Volume Soft $k$-Means model on synthetic dataset. The synthetic data set is the \texttt{fcmdata.dat} along with MATLAB.}
\end{figure*}

\subsection{On Left Stochastic Clustering}\label{sec:leftsto}
First, we introduce the notion of LSDable appeared in \cite{arora2013similarity}.
\newenvironment{definition2}[1]{\par\noindent\textbf{Definition 2} \it}{}
\begin{definition2}[(LSDable)
	Similarity matrix $\mathbf{K} \in \mathbb{R}^{n\times n}$ is $k$-LSDable if $\exists c \in \mathbb{R}, \mathbf{G} \in \mathbb{R}^{n\times k}: \mathbf{G}\succeq 0,\mathbf{G}\mathbb{1}_k = \mathbb{1}_n$ such that $\mathbf{K} = \frac{1}{c}\mathbf{GG}^\top$.
\end{definition2}

LSDable is closely related to the soft $k$-Means problem considered in this paper by following theorem.
\begin{thm}[\cite{arora2013similarity}]\label{thm:jmlr13prop}
	Assume $\mathbf{K}=\mathbf{X}^\top\mathbf{X}$ is $k$-LSDable. Then the optimal $\mathbf{G}$ in LSD problem is also an optimum in the soft $k$-Means problem of $\mathbf{X}$.
\end{thm}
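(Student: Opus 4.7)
The plan is to show directly that if $\mathbf{X}^\top\mathbf{X}$ is $k$-LSDable and $(c^*,\mathbf{G}^*)$ achieves optimum (necessarily zero) in the LSD Problem \eqref{eq:LSD}, then one can exhibit an $\mathbf{F}^*$ such that $\mathbf{X}=\mathbf{F}^*\mathbf{G}^{*\top}$. Since the soft $k$-Means objective in Problem \eqref{eq:SfM} is non-negative, any such construction drives it to zero and hence $\mathbf{G}^*$ lies in its argmin set. The feasibility of $\mathbf{G}^*$ for Problem \eqref{eq:SfM} is free because the constraints $\mathbf{G}\succeq 0$ and $\mathbf{G}\mathbb{1}_k=\mathbb{1}_n$ are identical in the two problems, and soft $k$-Means places no sign restriction on $\mathbf{F}$.

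The first step is to extract the algebraic identity $\mathbf{X}^\top\mathbf{X}=\tfrac{1}{c^*}\mathbf{G}^*\mathbf{G}^{*\top}$ from LSDability, and then to match column spaces. Specifically, I would invoke the standard identity $\mathrm{col}(\mathbf{A}\mathbf{A}^\top)=\mathrm{col}(\mathbf{A})$ (proved by the rank-matching argument: one inclusion is immediate, the other follows because the two subspaces have equal dimension since $\mathrm{rank}(\mathbf{A}\mathbf{A}^\top)=\mathrm{rank}(\mathbf{A})$). Applied to both sides of the LSDable identity, this yields the chain
\[
\mathrm{col}(\mathbf{G}^*) \;=\; \mathrm{col}(\mathbf{G}^*\mathbf{G}^{*\top}) \;=\; \mathrm{col}(\mathbf{X}^\top\mathbf{X}) \;=\; \mathrm{col}(\mathbf{X}^\top).
\]
This is the pivotal structural fact: the columns of $\mathbf{G}^*$ span exactly the row space of $\mathbf{X}$.

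Next, I would define $\mathbf{F}^* \;=\; \mathbf{X}\,\mathbf{G}^*\,(\mathbf{G}^{*\top}\mathbf{G}^*)^{\dagger}$, using the Moore--Penrose pseudoinverse to accommodate the possibility that $\mathbf{G}^*$ is rank deficient. Then $\mathbf{F}^*\mathbf{G}^{*\top}=\mathbf{X}\,\mathbf{P}_{\mathbf{G}^*}$, where $\mathbf{P}_{\mathbf{G}^*}:=\mathbf{G}^*(\mathbf{G}^{*\top}\mathbf{G}^*)^{\dagger}\mathbf{G}^{*\top}$ is the orthogonal projection onto $\mathrm{col}(\mathbf{G}^*)$. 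Because each row of $\mathbf{X}$ lies in $\mathrm{col}(\mathbf{X}^\top)$, and the previous step identified $\mathrm{col}(\mathbf{X}^\top)$ with $\mathrm{col}(\mathbf{G}^*)$, the projection acts as the identity on the rows of $\mathbf{X}$, giving $\mathbf{X}\,\mathbf{P}_{\mathbf{G}^*}=\mathbf{X}$. Therefore $\mathbf{F}^*\mathbf{G}^{*\top}=\mathbf{X}$.

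Finally, I would conclude: the pair $(\mathbf{F}^*,\mathbf{G}^*)$ is feasible for Problem \eqref{eq:SfM} and achieves $\|\mathbf{X}-\mathbf{F}^*\mathbf{G}^{*\top}\|_F^2=0$, which is the global minimum since the Frobenius norm is nonnegative. Consequently $\mathbf{G}^*$ is an optimum of the soft $k$-Means problem. The main obstacle is the column-space matching in the first step---if I tried to sidestep it and simply wrote $\mathbf{F}^*=\mathbf{X}\mathbf{G}^*(\mathbf{G}^{*\top}\mathbf{G}^*)^{-1}$ under an implicit full-column-rank assumption, the argument would collapse when $\mathbf{G}^*$ is rank deficient; using the Moore--Penrose pseudoinverse together with the identity $\mathrm{col}(\mathbf{G}^*\mathbf{G}^{*\top})=\mathrm{col}(\mathbf{G}^*)$ bypasses this issue cleanly. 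The remainder of the proof is bookkeeping.
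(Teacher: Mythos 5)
Your proposal is correct. Note first that the paper does not prove this statement at all: it is imported verbatim from \cite{arora2013similarity} (it is stated with that citation and no proof appears in the appendix), so there is no in-paper argument to compare against; what you have written is a self-contained verification, and it holds up. The chain is sound: LSDability of $\mathbf{K}=\mathbf{X}^\top\mathbf{X}$ forces the LSD minimum to be zero, so any optimal pair satisfies $\mathbf{X}^\top\mathbf{X}=\frac{1}{c^*}\mathbf{G}^*\mathbf{G}^{*\top}$ exactly (with $c^*>0$, since a zero Gram matrix cannot be LSDable under $\mathbf{G}\mathbb{1}_k=\mathbb{1}_n$); the identities $\mathrm{col}(\mathbf{A}\mathbf{A}^\top)=\mathrm{col}(\mathbf{A})$ then give $\mathrm{col}(\mathbf{G}^*)=\mathrm{col}(\mathbf{X}^\top)$; and with $\mathbf{F}^*=\mathbf{X}\mathbf{G}^*(\mathbf{G}^{*\top}\mathbf{G}^*)^{\dagger}$ the product $\mathbf{F}^*\mathbf{G}^{*\top}=\mathbf{X}\mathbf{P}_{\mathbf{G}^*}$ equals $\mathbf{X}$ because the symmetric projector $\mathbf{P}_{\mathbf{G}^*}=\mathbf{G}^*\mathbf{G}^{*\dagger}$ acts as the identity on $\mathrm{col}(\mathbf{X}^\top)$. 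Since the two problems impose identical constraints on $\mathbf{G}$ and none on $\mathbf{F}$, attaining objective value zero certifies global optimality of $\mathbf{G}^*$ for Problem \eqref{eq:SfM}. Your use of the pseudoinverse to handle a rank-deficient $\mathbf{G}^*$ is exactly the right precaution; the only stylistic remark is that the same conclusion could also be phrased through the paper's own machinery (Theorem \ref{thm:sfmable} shows such an $\mathbf{X}$ is S$k$Mable), but your direct construction is cleaner for this particular statement.
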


Above theorem indicates that if $\mathbf{X}^\top\mathbf{X}$ is LSDable then it is S$k$Mable, which means that
\{LSDable\} $\subseteq$ \{S$k$Mable\}.
Ideally, the clustering property of given $\mathbf{X}$ should not change if we translate all points simultaneously with vector $\mathbf{s}\in\mathbb{R}^d$.
But for a LSDable matrix, say $\mathbf{X}^\top\mathbf{X}$, the simultaneous translation, then it is $\left(\mathbf{X}+\mathbf{s}\mathbb{1}_n^\top\right)^\top\left(\mathbf{X}+\mathbf{s}\mathbb{1}_n^\top\right)$, may make it not LSDable. To make the notion of LSDable translation invariant,
we extend it to following definition.

\newenvironment{definition3}[1]{\par\noindent\textbf{Definition 3} \it}{}
\begin{definition3}((Translation Invariant Set).
	Define 
	\[
	\mathbb{K}(\mathbf{X}) = \left\{ \left( \mathbf{X} + \mathbf{t}\mathbb{1}_n^\top  \right)^\top \left( \mathbf{X} + \mathbf{t}\mathbb{1}_n^\top \right) : \mathbf{t}\in\mathbb{R}^d \right\}.
	\]
\end{definition3}

\newenvironment{definition4}[1]{\par\noindent\textbf{Definition 4} \it}{}
\begin{definition4}((Translation Invariant LSDable).
	We say $\mathbf{K}=\mathbf{X}^\top\mathbf{X}$ is Translation Invariant LSDable (TI-LSDable) if there exists
	$
	\mathbf{K}^\prime \in \mathbb{K}(\mathbf{X})
	$ such that $\mathbf{K}^\prime$ is LSDable.
\end{definition4}

Indeed, one can easily see 
\[
\{\text{LSDable}\} \subseteq \{\text{TI-LSDable}\} \subseteq \{\text{S$k$Mable}\},
\]
where the first subset notation holds since that if $\mathbf{K}$ is LSDable, it must be TI-LSDable 
and the second one holds from the fact that if $\mathbf{X}$ is S$k$Mable then $\mathbf{X}+\mathbf{s}\mathbb{1}_n^\top$ is also S$k$Mable..

The problem we concern in this subsection is 
\begin{equation}\label{eq:TILSD}
\min_{\substack{c,\mathbf{G}\succeq 0, \mathbf{G}\mathbb{1}_k = \mathbb{1}_n \\ \mathbf{K}^\prime \in \mathbb{K}(\mathbf{X}) } }
\| \mathbf{K}^\prime - \frac{1}{c}\mathbf{G}^\top\mathbf{G} \|_F^2
\end{equation}
The main result in this subsection is closely related to an open problem proposed in \cite{arora2013similarity},
in which it asked for the relation between LSD of unLSDable $\mathbf{X}^\top\mathbf{X}$ and the solution of Soft $k$-Means for $\mathbf{X}$.
In the sequel, we provide such a relation but with TI-LSDable rather than LSDable.
\begin{thm}[Condition of TI-LSDable]\label{thm:TILSDable}
	$\mathbf{K}$ is TI-LSDable if and only if rank$(\mathbf{H}\mathbf{K}\mathbf{H}) \leq k-1$.
\end{thm}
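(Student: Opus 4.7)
The plan is to route both directions of Theorem~\ref{thm:TILSDable} through the S$k$Mable characterisation of Theorem~\ref{thm:sfmable}, exploiting the translation invariance of the centred data. The key identity is: for any $\mathbf{t}\in\mathbb{R}^d$, $\mathbf{Y}:=\mathbf{X}+\mathbf{t}\mathbb{1}_n^\top$ satisfies $\mathbf{Y}\mathbf{H}_n=\mathbf{X}\mathbf{H}_n$ since $\mathbb{1}_n^\top\mathbf{H}_n=0$. Hence $\mathbf{H}\mathbf{K}\mathbf{H}=(\mathbf{Y}\mathbf{H}_n)^\top(\mathbf{Y}\mathbf{H}_n)$ has the same rank as $\mathbf{Y}\mathbf{H}_n$, independent of the chosen $\mathbf{t}$; this single identity is the hinge that glues the LSD picture to the S$k$M picture.

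For the forward direction, I will assume $\mathbf{K}$ is TI-LSDable, so some $\mathbf{t}$ gives $\mathbf{Y}^\top\mathbf{Y}=c^{-1}\mathbf{G}\mathbf{G}^\top$ with $\mathbf{G}\succeq 0$ and $\mathbf{G}\mathbb{1}_k=\mathbb{1}_n$. Matching column spaces on both sides of this identity forces $\operatorname{col}(\mathbf{Y}^\top)=\operatorname{col}(\mathbf{G})$, so the linear equation $\mathbf{Y}=\mathbf{F}'\mathbf{G}^\top$ is solvable for some $\mathbf{F}'$. This exhibits $\mathbf{Y}$ as S$k$Mable; Theorem~\ref{thm:sfmable} then yields $\operatorname{rank}(\mathbf{Y}\mathbf{H}_n)\leq k-1$, and the key identity delivers $\operatorname{rank}(\mathbf{H}\mathbf{K}\mathbf{H})\leq k-1$.

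For the backward direction, I will assume $\operatorname{rank}(\mathbf{X}\mathbf{H}_n)\leq k-1$, so $\mathbf{X}$ is S$k$Mable by Theorem~\ref{thm:sfmable}. I will use the explicit factorisation $\mathbf{X}=\mathbf{F}^\circ(\mathbf{G}^\circ)^\top$ from Algorithm~\ref{alg:global}, with $\mathbf{F}^\circ=r\sqrt{k(k-1)}\mathbf{U}_{k-1}\mathbf{B}^\top+\bar{\mathbf{x}}\mathbb{1}_k^\top$. Invoking row-stochasticity $\mathbb{1}_n^\top=\mathbb{1}_k^\top(\mathbf{G}^\circ)^\top$, every translation $\mathbf{t}$ produces
\[
\mathbf{Y}=(\mathbf{F}^\circ+\mathbf{t}\mathbb{1}_k^\top)(\mathbf{G}^\circ)^\top=:\mathbf{F}_{\mathbf{t}}(\mathbf{G}^\circ)^\top,
\]
so $\mathbf{Y}^\top\mathbf{Y}=\mathbf{G}^\circ\mathbf{F}_{\mathbf{t}}^\top\mathbf{F}_{\mathbf{t}}(\mathbf{G}^\circ)^\top$. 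The plan is to pick $\mathbf{t}$ so that the prototype Gram becomes isotropic, $\mathbf{F}_{\mathbf{t}}^\top\mathbf{F}_{\mathbf{t}}=c^{-1}\mathbf{I}_k$; this immediately rewrites $\mathbf{Y}^\top\mathbf{Y}$ as $c^{-1}\mathbf{G}^\circ(\mathbf{G}^\circ)^\top$, an LSD of the required form. Using $\mathbf{B}\mathbf{B}^\top=\mathbf{H}_k$ and expanding $\mathbf{F}_{\mathbf{t}}^\top\mathbf{F}_{\mathbf{t}}$, the isotropy condition decouples cleanly into the two requirements $\mathbf{U}_{k-1}^\top(\bar{\mathbf{x}}+\mathbf{t})=0$ and $\|\bar{\mathbf{x}}+\mathbf{t}\|^2=r^2(k-1)$, with $c=1/[r^2k(k-1)]$ forced.

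The main obstacle is producing such a $\mathbf{t}$: it demands that the orthogonal complement of $\operatorname{col}(\mathbf{U}_{k-1})$ in $\mathbb{R}^d$ contain a vector of the prescribed norm $r\sqrt{k-1}$, which holds provided $d\geq k$ (a mild dimensional assumption automatic in the regime where clustering is meaningful). Geometrically, the backward direction is asserting that the regular $(k-1)$-simplex built by Algorithm~\ref{alg:global} can be re-centred by a single ambient translation so that its $k$ vertex vectors become mutually orthogonal with a common length; I expect the derivation of the two decoupled isotropy conditions and the short dimension count they require will be the bulk of the care in writing the argument.
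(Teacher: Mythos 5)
Your proposal is correct and follows essentially the same route as the paper: the backward direction is the identical construction (translate by a vector orthogonal to $\mathrm{col}(\mathbf{U}_{k-1})$ of norm $r\sqrt{k-1}$ so that the prototype Gram becomes $r^2k(k-1)\mathbb{I}_k$, with $c=1/[r^2k(k-1)]$), and the forward direction likewise reduces to S$k$Mability of the translated data plus Theorem~\ref{thm:sfmable}, which you argue slightly more explicitly via the column-space identity $\mathrm{col}(\mathbf{Y}^\top)=\mathrm{col}(\mathbf{G})$ than the paper's appeal to Theorem~\ref{thm:jmlr13prop}. The $d\geq k$ caveat you flag is implicitly present in the paper's proof as well (it takes $\mathbf{U}_\bot\in\mathbb{R}^{d\times(d-k+1)}$ nontrivial), so it is not a discrepancy.
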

Theorem \ref{thm:TILSDable} connects the abstract TI-LSDable notions with an easily checkable condition rank$(\mathbf{H}\mathbf{K}\mathbf{H}) \leq k-1$.
Using this connection, we have a simple corollary build relation between the TI-LSDable and S$k$Mable.
\begin{corollary}[SfMable is TI-LSDable]
	Assume $\mathbf{K}=\mathbf{X}^\top\mathbf{X}$. $\mathbf{K}$ is TI-LSDable if and only if $\mathbf{X}$ is S$k$Mable.
\end{corollary}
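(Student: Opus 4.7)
The plan is to reduce the corollary to a single algebraic identity by chaining together the two rank characterizations already established in the excerpt. Specifically, by \Cref{thm:sfmable}, $\mathbf{X}$ is S$k$Mable if and only if $\operatorname{rank}(\mathbf{X}\mathbf{H}_n) \leq k-1$; by \Cref{thm:TILSDable}, $\mathbf{K}$ is TI-LSDable if and only if $\operatorname{rank}(\mathbf{H}\mathbf{K}\mathbf{H}) \leq k-1$ (where $\mathbf{H}$ stands for $\mathbf{H}_n$ per the notations section). So it suffices to show these two rank conditions coincide whenever $\mathbf{K} = \mathbf{X}^\top\mathbf{X}$.

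To this end, I would use the fact that $\mathbf{H}_n$ is symmetric (in fact an orthogonal projection onto $\mathbb{1}_n^\perp$), so
\[
\mathbf{H}\mathbf{K}\mathbf{H} \;=\; \mathbf{H}_n\,\mathbf{X}^\top\mathbf{X}\,\mathbf{H}_n \;=\; (\mathbf{X}\mathbf{H}_n)^\top (\mathbf{X}\mathbf{H}_n).
\]
Invoking the standard linear algebra identity $\operatorname{rank}(\mathbf{A}^\top\mathbf{A}) = \operatorname{rank}(\mathbf{A})$ (which follows from $\ker(\mathbf{A}^\top\mathbf{A}) = \ker(\mathbf{A})$), I obtain
\[
\operatorname{rank}(\mathbf{H}\mathbf{K}\mathbf{H}) \;=\; \operatorname{rank}(\mathbf{X}\mathbf{H}_n).
\]
Hence the two rank conditions are literally the same, and chaining the two equivalences finishes the proof.

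The argument is essentially a one-line calculation once one has the two rank characterizations in hand, so I do not anticipate a genuine obstacle. The only caveat is notational: one must verify that the symbol $\mathbf{H}$ appearing in \Cref{thm:TILSDable} is indeed the $n \times n$ centralization matrix $\mathbf{H}_n$ (which is consistent with the notations section and with the translation $\mathbf{X} \mapsto \mathbf{X} + \mathbf{t}\mathbb{1}_n^\top$ appearing in the definition of $\mathbb{K}(\mathbf{X})$); otherwise sizes would not match up in $\mathbf{H}\mathbf{K}\mathbf{H}$.
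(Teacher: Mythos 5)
Your proposal is correct and follows essentially the same route as the paper: both reduce the statement to the identity $\operatorname{rank}(\mathbf{H}_n\mathbf{X}^\top\mathbf{X}\mathbf{H}_n)=\operatorname{rank}(\mathbf{X}\mathbf{H}_n)$ and then chain Theorem \ref{thm:sfmable} with Theorem \ref{thm:TILSDable}. Your write-up additionally spells out why that rank identity holds (via $\mathbf{H}\mathbf{K}\mathbf{H}=(\mathbf{X}\mathbf{H}_n)^\top(\mathbf{X}\mathbf{H}_n)$ and $\operatorname{rank}(\mathbf{A}^\top\mathbf{A})=\operatorname{rank}(\mathbf{A})$), which the paper merely asserts.
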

\begin{proof}
	Note that $\text{rank}(\mathbf{H}_n\mathbf{X}^\top\mathbf{XH}_n) = \text{rank}(\mathbf{XH}_n)$. For the if direction, since $\mathbf{X}$ is S$k$Mable, by Theorem \ref{thm:sfmable}, $\text{rank}(\mathbf{H}_n\mathbf{X}^\top\mathbf{XH}_n) = \text{rank}(\mathbf{XH}_n)\leq k-1$. For the only if direction, since $\mathbf{K}=\mathbf{X}^\top\mathbf{X}$ is TI-LSDable, by Theorem \ref{thm:TILSDable}, 
	$\text{rank}(\mathbf{XH}_n) = \text{rank}(\mathbf{H}_n\mathbf{X}^\top\mathbf{XH}_n) \leq k-1$
\end{proof}
\begin{thm}\label{thm:tilsdEqual}
	Assume kernel $\mathbf{K}\in\mathbb{R}^{n\times n}$ satisfies rank$(\mathbf{K}) \leq n-1$.
	If $\mathbf{G}^*$ is a solution of the TI-LSD in Problem \eqref{eq:TILSD} for kernel $\mathbf{K}$,
	then there exists $\mathbf{X}$ such that $\mathbf{K}=\mathbf{X}^\top\mathbf{X}$ and $\mathbf{G}^*$ is also a solution of S$k$M
	in Problem \eqref{eq:SfM}.
\end{thm}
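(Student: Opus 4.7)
The strategy is to show that any TI-LSD minimizer $\mathbf{G}^*$ satisfies the sufficient optimality condition of Theorem~\ref{thm:mainRes} for a suitable factorization $\mathbf{X}$ of $\mathbf{K}$. Fix any such $\mathbf{X}$; the rank condition $\mathrm{rank}(\mathbf{K}) \leq n-1$ together with the freedom to embed $\mathbf{X}$ in an ambient $\mathbb{R}^d$ of sufficiently large dimension guarantees Algorithm~\ref{alg:global} applies. Let $\mathbf{U}_{k-1}\mathbf{\Sigma}_{k-1}\mathbf{V}_{k-1}^\top$ be the truncated SVD of $\mathbf{X}\mathbf{H}_n$, so that $\mathbf{V}_{k-1},\mathbf{\Sigma}_{k-1}^2$ are the leading eigenpairs of $\mathbf{H}_n\mathbf{K}\mathbf{H}_n$. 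The first step is to exploit the orthogonal decomposition of a symmetric $\mathbf{M}$ under the projector $\mathbf{H}_n$:
\begin{equation*}
\|\mathbf{M}\|_F^2 = \|\mathbf{H}_n\mathbf{M}\mathbf{H}_n\|_F^2 + 2\|\mathbf{H}_n\mathbf{M}(\mathbb{I}_n-\mathbf{H}_n)\|_F^2 + \|(\mathbb{I}_n-\mathbf{H}_n)\mathbf{M}(\mathbb{I}_n-\mathbf{H}_n)\|_F^2.
\end{equation*}
Applied to $\mathbf{M} = \mathbf{K}' - \tfrac{1}{c}\mathbf{G}\mathbf{G}^\top$ and using the key identity $\mathbf{H}_n\mathbf{K}'\mathbf{H}_n = \mathbf{H}_n\mathbf{K}\mathbf{H}_n$ for every $\mathbf{K}'\in\mathbb{K}(\mathbf{X})$, this yields the lower bound $\|\mathbf{K}' - \tfrac{1}{c}\mathbf{G}\mathbf{G}^\top\|_F^2 \geq \|\mathbf{H}_n\mathbf{K}\mathbf{H}_n - \tfrac{1}{c}\mathbf{H}_n\mathbf{G}\mathbf{G}^\top\mathbf{H}_n\|_F^2$.

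Because $\mathbf{G}\mathbb{1}_k = \mathbb{1}_n$ forces $\mathbf{H}_n\mathbf{G}\mathbb{1}_k = 0$, the matrix $\tfrac{1}{c}\mathbf{H}_n\mathbf{G}\mathbf{G}^\top\mathbf{H}_n$ is PSD of rank at most $k-1$, so Eckart--Young applied to the PSD matrix $\mathbf{H}_n\mathbf{K}\mathbf{H}_n$ gives
\begin{equation*}
\|\mathbf{H}_n\mathbf{K}\mathbf{H}_n - \tfrac{1}{c}\mathbf{H}_n\mathbf{G}\mathbf{G}^\top\mathbf{H}_n\|_F^2 \;\geq\; \sum_{i\geq k}\sigma_i^4(\mathbf{X}\mathbf{H}_n),
\end{equation*}
with equality iff $\tfrac{1}{c}\mathbf{H}_n\mathbf{G}\mathbf{G}^\top\mathbf{H}_n = \mathbf{V}_{k-1}\mathbf{\Sigma}_{k-1}^2\mathbf{V}_{k-1}^\top$. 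Achievability of the combined lower bound I would verify by plugging in the output $\mathbf{G}^\dagger$ of Algorithm~\ref{alg:global}: a direct computation (using $\mathbf{B}^\top\mathbf{B}=\mathbb{I}_{k-1}$ and $\mathbf{H}_n\mathbb{1}_n=0$) gives $\mathbf{H}_n\mathbf{G}^\dagger = (r\sqrt{k(k-1)})^{-1}\mathbf{V}_{k-1}\mathbf{\Sigma}_{k-1}\mathbf{B}^\top$, so the choice $c^\dagger = r^2 k(k-1)$ saturates the Eckart--Young bound, while a suitable translation $\mathbf{t}^\dagger$ cancels the off-diagonal Frobenius terms, and the full TI-LSD objective attains $\sum_{i\geq k}\sigma_i^4(\mathbf{X}\mathbf{H}_n)$.

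Consequently this value is the TI-LSD optimum, and every minimizer $(\mathbf{G}^*, c^*, \mathbf{K}'^*)$ must saturate both inequalities. Saturation of Eckart--Young forces $\tfrac{1}{c^*}\mathbf{H}_n\mathbf{G}^*(\mathbf{G}^*)^\top\mathbf{H}_n = \mathbf{V}_{k-1}\mathbf{\Sigma}_{k-1}^2\mathbf{V}_{k-1}^\top$, and equating column spaces (noting $\mathrm{col}(\mathbf{A}\mathbf{A}^\top) = \mathrm{col}(\mathbf{A})$ for any $\mathbf{A}$, plus $\mathbb{1}_n\in\mathrm{col}(\mathbf{G}^*)$ from $\mathbf{G}^*\mathbb{1}_k=\mathbb{1}_n$ and $\mathbf{V}_{k-1}\perp\mathbb{1}_n$) yields $\mathrm{col}(\mathbf{V}_{k-1}) \subseteq \mathrm{col}(\mathbf{G}^*)$. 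Hence there exists $\mathbf{S}^*\in\mathbb{R}^{(k-1)\times k}$ with $\mathbf{\Sigma}_{k-1}\mathbf{V}_{k-1}^\top = \mathbf{S}^*(\mathbf{G}^*)^\top$, and together with the TI-LSD feasibility constraints $\mathbf{G}^*\succeq 0$, $\mathbf{G}^*\mathbb{1}_k=\mathbb{1}_n$, the three hypotheses of Theorem~\ref{thm:mainRes} hold. That theorem then certifies $\mathbf{G}^*$ (paired with $\mathbf{F}^* = \mathbf{U}_{k-1}\mathbf{S}^* + \tfrac{1}{n}\mathbf{X}\mathbb{1}_n\mathbb{1}_k^\top$) as a solution of SkM for this $\mathbf{X}$.

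The main obstacle is the achievability step, i.e., showing that the two lower bounds can be simultaneously saturated. The translation $\mathbf{t}^\dagger$ must annihilate the off-diagonal Frobenius components, which reduces to solving a linear compatibility condition $\mathbf{H}_n\mathbf{X}^\top\mathbf{m}^\dagger = (nc^\dagger)^{-1}\mathbf{H}_n\mathbf{G}^\dagger(\mathbf{G}^\dagger)^\top\mathbb{1}_n$ together with the squared-norm equality $n^2\|\mathbf{m}^\dagger\|^2 = \|(\mathbf{G}^\dagger)^\top\mathbb{1}_n\|^2/c^\dagger$; joint solvability requires the ambient dimension $d$ to accommodate a vector $\mathbf{m}^\dagger$ of the prescribed squared norm in a prescribed affine subspace, which is precisely where the rank$(\mathbf{K})\leq n-1$ hypothesis and the freedom in choosing the factorization $\mathbf{X}$ of $\mathbf{K}$ come into play.
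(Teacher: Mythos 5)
Your overall strategy is sound and in fact lands on the same skeleton as the paper's argument: split the Frobenius norm along the centering projector $\mathbf{H}_n$, observe that the centered block $\mathbf{H}_n\mathbf{K}'\mathbf{H}_n=\mathbf{H}_n\mathbf{K}\mathbf{H}_n$ is fixed on the orbit $\mathbb{K}(\mathbf{X})$, reduce that block to a rank-$(k-1)$ approximation problem, and use translations to kill the cross and $\mathbb{1}_n\mathbb{1}_n^\top$ terms. The packaging differs: the paper rewrites TI-LSD as ``nearest LSDable matrix'' (substituting $\mathbf{M}=\tfrac1c\mathbf{G}\mathbf{G}^\top$), parametrizes $\mathbf{Y}=\overline{\mathbf{Y}}+\mathbf{s}\mathbb{1}_n^\top$, and finishes tersely by noting the truncated SVD solves $\min_{\mathrm{rank}\le k-1}\|\mathbf{X}^\top\mathbf{X}-\overline{\mathbf{Y}}^\top\overline{\mathbf{Y}}\|_F^2$; you instead run an explicit Eckart--Young lower bound with its equality case and then a column-space argument ($\mathrm{col}(\mathbf{V}_{k-1})=\mathrm{col}(\mathbf{H}_n\mathbf{G}^*)\subseteq\mathrm{col}(\mathbf{G}^*)$ via $\mathbb{1}_n=\mathbf{G}^*\mathbb{1}_k$) to produce the $\mathbf{S}^*$ needed by Theorem~\ref{thm:mainRes}. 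This buys something the paper's write-up glosses over: it certifies that \emph{every} TI-LSD minimizer $\mathbf{G}^*$ satisfies the sufficient condition, not merely that some common solution exists. (Two small caveats: with the paper's convention the saturating scale is $1/c^\dagger=r^2k(k-1)$, i.e.\ $c^\dagger=(r^2k(k-1))^{-1}$, the reciprocal of what you wrote; and if $\sigma_{k-1}(\mathbf{X}\mathbf{H}_n)=\sigma_k(\mathbf{X}\mathbf{H}_n)$ the Eckart--Young equality case only forces $\tfrac1{c^*}\mathbf{H}_n\mathbf{G}^*(\mathbf{G}^*)^\top\mathbf{H}_n$ to equal \emph{some} truncated eigendecomposition, which is still enough since Theorem~\ref{thm:mainRes} may be applied with that choice of truncated SVD.)

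The one genuine hole is the step you yourself flag: achievability of the value $\sum_{i\ge k}\sigma_i^4(\mathbf{X}\mathbf{H}_n)$, i.e.\ the existence of a translation making the cross and corner terms vanish while $\mathbf{G}^\dagger$ stays feasible. You state the right compatibility conditions but do not solve them; without this, the lower bound is not known to be the optimum and the equality-case analysis has nothing to bite on. The paper closes exactly this step in its proof of Theorem~\ref{thm:TILSDable}: take $\mathbf{m}$ orthogonal to the column space of $\mathbf{U}_{k-1}$ (equivalently of $\mathbf{X}\mathbf{H}_n$) with $\|\mathbf{m}\|=r\sqrt{k-1}$, which yields $\mathbf{K}'=r^2k(k-1)\,\mathbf{G}^\dagger(\mathbf{G}^\dagger)^\top$ exactly. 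In your notation the closure is short: for the Algorithm~\ref{alg:global} output, $(\mathbf{G}^\dagger)^\top\mathbb{1}_n=\tfrac{n}{k}\mathbb{1}_k$ and $\mathbf{H}_n\mathbf{G}^\dagger(\mathbf{G}^\dagger)^\top\mathbb{1}_n=0$ because $\mathbf{B}^\top\mathbb{1}_k=0$, so your linear condition collapses to $\mathbf{X}^\top\mathbf{m}^\dagger=0$ and the norm condition to $\|\mathbf{m}^\dagger\|^2=r^2(k-1)$; such an $\mathbf{m}^\dagger$ exists once the factorization $\mathbf{X}$ of $\mathbf{K}$ is chosen with ambient dimension exceeding $\mathrm{rank}(\mathbf{K})\le n-1$, which is precisely the ``there exists $\mathbf{X}$'' freedom in the statement. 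With that paragraph added (or with a citation of Theorem~\ref{thm:TILSDable} in place of it), your proof is complete.
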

\begin{remark}
	In Theorem \ref{thm:tilsdEqual}, we show that for a given kernel $\mathbf{K}=\mathbf{XX}^\top$ satisfying rank$(\mathbf{K})\leq n-1$,
	its the optimal  membership matrix for the LSD problem in Problem \eqref{eq:LSD} is still an optimal membership
	for the Soft $k$-Means problem on $\mathbf{X},$
	which gives an answer for the relation between TI-LSD of unLSDable matrix $\mathbf{X}^\top\mathbf{X}$
	and the solution of soft $k$-Means for $\mathbf{X}$.
\end{remark}

\begin{table*}[]
	\centering
	\caption{Comparison of S$k$M-Global, S$k$M-AM, and MVS$k$M on Real-world Datasets.}
	\label{tab:exps}
	\begin{tabular}{@{}cccccccccc@{}}
		\toprule
		& \multicolumn{3}{c}{MSRCv1}                                                                                         & \multicolumn{3}{c}{ORL Face}                                                                                       & \multicolumn{3}{c}{Numerical Numbers}                                                         \\ \cmidrule(l){2-10} 
		& \multicolumn{1}{c|}{ACC}             & \multicolumn{1}{c|}{NMI}             & \multicolumn{1}{c|}{Purity}          & \multicolumn{1}{c|}{ACC}             & \multicolumn{1}{c|}{NMI}             & \multicolumn{1}{c|}{Purity}          & \multicolumn{1}{c|}{ACC}             & \multicolumn{1}{c|}{NMI}             & Purity          \\ \midrule
		\multicolumn{1}{c|}{SkM-AM}     & \multicolumn{1}{c|}{0.5714}          & \multicolumn{1}{c|}{0.4429}          & \multicolumn{1}{c|}{0.5952}          & \multicolumn{1}{c|}{0.4100}          & \multicolumn{1}{c|}{0.6076}          & \multicolumn{1}{c|}{0.4200}          & \multicolumn{1}{c|}{0.4775}          & \multicolumn{1}{c|}{0.4475}          & 0.4780          \\
		\multicolumn{1}{c|}{SkM-Global} & \multicolumn{1}{c|}{0.5000}          & \multicolumn{1}{c|}{0.4334}          & \multicolumn{1}{c|}{0.5476}          & \multicolumn{1}{c|}{0.3150}          & \multicolumn{1}{c|}{0.5524}          & \multicolumn{1}{c|}{0.3425}          & \multicolumn{1}{c|}{0.4205}          & \multicolumn{1}{c|}{0.4119}          & 0.4435          \\
		\multicolumn{1}{c|}{MVSkM}      & \multicolumn{1}{c|}{\textbf{0.7476}} & \multicolumn{1}{c|}{\textbf{0.6495}} & \multicolumn{1}{c|}{\textbf{0.7476}} & \multicolumn{1}{c|}{\textbf{0.4850}} & \multicolumn{1}{c|}{\textbf{0.7071}} & \multicolumn{1}{c|}{\textbf{0.5050}} & \multicolumn{1}{c|}{\textbf{0.6515}} & \multicolumn{1}{c|}{\textbf{0.6006}} & \textbf{0.6550} \\ \bottomrule
	\end{tabular}
\end{table*}



\section{Experiments} \label{sec:exps}

In this section, we provide experimental results to back up our theoretical analysis.
In detail, we visualize the clustering results with a two dimensional synthetic dataset to demonstrate the difference between alternating minimization \cite{nascimento2016applying}, the proposed Algorithm \ref{alg:global}, and MVS$k$M.
Then we conduct experiments on three real-world datasets to perform
clustering performance comparison of solving Soft $k$-Means with alternating minimization (S$k$M-AM), the proposed Algorithm \ref{alg:global} (S$k$M-Global) and the MVS$k$M model.

\subsection{Synthetic Dataset}
To show the prototype selection and membership assignment difference between solving Soft $k$-Means with
alternating minimization (S$k$M-AM), with Algorithm \ref{alg:global} (S$k$M-Global) and the MVS$k$M model,
we visualize the clustering result on a synthetic dataset.

In detail, we plot the learned prototypes and visualize the membership assignment from the three comparison methods.
The synthetic data is the \texttt{fcmdata.mat} along with the MATLAB.

The clustering results visualization are shown in \Cref{fig:toy}.
We make following remarks:
\begin{itemize}
	\item It is observed that both S$k$M-AM and S$k$M-Global tend to learn the prototypes $\mathbf{F}$ outside of the data convex hull. The reason is that one can always enlarge the simplex with $\mathbf{F}$ as vertices without changing the objective function, which has been mentioned in Remark \ref{remark:intuition}.
	\item For the newly proposed model MVS$k$M, with proper trading off between the loss function and the volume regularization, 
	we can obtain prototypes insider of the data convex hull, which is more reasonable in most cases.
\end{itemize}
\subsection{Real-world Datasets}

To validate the clustering performance, we conduct experiments on three real-world datasets. They are MSRCv1 \cite{winn2005locus}, ORL Face \cite{samaria1994parameterisation}, and Numerical Numbers \cite{asuncion2007uci}.
For the fuzzy membership matrix, we discretize the membership matrix $\mathbf{G}$ by selecting the class with max membership grade for every data point.
For every dataset, we follow the preprocessing method in \cite{nie2018multiview} and use the commonly used clustering performance measures:
accuracy (ACC), Normalized Mutual Information (NMI), and Purity.

The results are summarized in \Cref{tab:exps} and the best results are marked in bold face. Following remarks can be made:
\begin{itemize}
	\item It is interesting to see S$k$M-Global cannot outperform S$k$M-AM, while it outputs the global solution
	of Problem \eqref{eq:SfM}. The reason is that the solution non-uniqueness issue causes performance degradation.
	\item With proper trading off between the loss function and the polyhedron volume, MVS$k$M outperforms both
	S$k$M-AM and S$k$M-Global, which validates the motivation of the MVS$k$M model.
\end{itemize}
\section{Conclusion}\label{sec:conclusion}

In this paper, we present a sufficient condition for a feasible solution of Soft $k$-Means problem to be globally optimal.
Then, we report an algorithm (Algorithm \ref{alg:global}) whose output satisfies this sufficient condition, thus globally solve the Soft $k$-Means problem.
Moreover, for the Soft $k$-Means problem, we provide interesting discussions on 
stability, solutions non-uniqueness, and connection with LSC.
Then, we proposed an new clustering model, named Minimal Volume Soft $k$-Means (MVS$k$M), to address the solutions non-uniqueness issue in Soft $k$-Means.
Finally, experimental results support our theoretical results.


\appendix
\section{Appendix}
\subsection{Theorems}
\newtheorem{thmm}{\bf Theorem}
\setcounter{thmm}{1}
\begin{thmm}[Condition of S$k$Mable]
	For $\mathbf{X}\in\mathbb{R}^{d\times n}$, there exists $\mathbf{F}\in\mathbb{R}^{d\times k}$ and $\mathbf{G}\in\mathbb{R}^{k\times d}, \mathbf{G}\succeq 0,\mathbf{G}\mathbb{1}_k = \mathbb{1}_n$ such that $\mathbf{X}=\mathbf{FG}^\top$ if and only if rank$(\mathbf{X}\mathbf{H}_n) \leq k-1$.
\end{thmm}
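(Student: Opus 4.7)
The plan is to prove the two implications separately, exploiting the centering matrix $\mathbf{H}_n$ in both directions and piggybacking on Theorem \ref{thm:mainRes} (together with its verification in Corollary \ref{thm:mainCorollary}) for the ``if'' direction, so that no new machinery is needed.

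For necessity, I will suppose $\mathbf{X} = \mathbf{F}\mathbf{G}^\top$ with $\mathbf{G}\succeq 0$ and $\mathbf{G}\mathbb{1}_k = \mathbb{1}_n$, and extract the rank bound from a single observation: transposing the simplex constraint gives $\mathbb{1}_k^\top \mathbf{G}^\top = \mathbb{1}_n^\top$, and because $\mathbb{1}_n^\top \mathbf{H}_n = 0$ we obtain $\mathbb{1}_k^\top\,\mathbf{G}^\top \mathbf{H}_n = 0$. Hence the column space of $\mathbf{G}^\top \mathbf{H}_n \in \mathbb{R}^{k\times n}$ is contained in the $(k-1)$-dimensional hyperplane $\{\mathbf{y}\in\mathbb{R}^k:\mathbb{1}_k^\top \mathbf{y} = 0\}$, so $\mathrm{rank}(\mathbf{G}^\top \mathbf{H}_n)\leq k-1$. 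Right-multiplying $\mathbf{X}=\mathbf{F}\mathbf{G}^\top$ by $\mathbf{H}_n$ then gives $\mathrm{rank}(\mathbf{X}\mathbf{H}_n) = \mathrm{rank}(\mathbf{F}\mathbf{G}^\top \mathbf{H}_n) \leq \mathrm{rank}(\mathbf{G}^\top \mathbf{H}_n) \leq k-1$, as required.

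For sufficiency, I will assume $\mathrm{rank}(\mathbf{X}\mathbf{H}_n) \leq k-1$ and construct a feasible exact factorization by invoking Algorithm \ref{alg:global}. The key observation is that, under the rank assumption, the $(k-1)$-truncated SVD of the centered data is exact, i.e. $\mathbf{X}\mathbf{H}_n = \mathbf{U}_{k-1}\mathbf{\Sigma}_{k-1}\mathbf{V}_{k-1}^\top$ with no truncation error. I then set $\mathbf{S} = r\sqrt{k(k-1)}\mathbf{B}^\top$ and take $\mathbf{F}, \mathbf{G}$ as output by Algorithm \ref{alg:global}, so that $\mathbf{F} = \mathbf{U}_{k-1}\mathbf{S} + \overline{\mathbf{x}}\mathbb{1}_k^\top$ (with $\overline{\mathbf{x}} = \tfrac{1}{n}\mathbf{X}\mathbb{1}_n$). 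The verification in the proof of Corollary \ref{thm:mainCorollary} already shows that this pair satisfies (a) $\mathbf{S}\mathbf{G}^\top = \mathbf{\Sigma}_{k-1}\mathbf{V}_{k-1}^\top$, (b) $\mathbf{G}\mathbb{1}_k = \mathbb{1}_n$, and (c) $\mathbf{G}\succeq 0$. A short calculation, using (a) and the transposed constraint $\mathbb{1}_k^\top \mathbf{G}^\top = \mathbb{1}_n^\top$, then yields
\[
\mathbf{F}\mathbf{G}^\top = \mathbf{U}_{k-1}\mathbf{\Sigma}_{k-1}\mathbf{V}_{k-1}^\top + \overline{\mathbf{x}}\mathbb{1}_n^\top = \mathbf{X}\mathbf{H}_n + \overline{\mathbf{x}}\mathbb{1}_n^\top = \mathbf{X},
\]
where the middle equality is precisely the place where the rank assumption is consumed.

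I do not anticipate a real obstacle: necessity is a one-line rank argument once one multiplies by $\mathbf{H}_n$, and sufficiency is essentially a drop-in application of the algorithmic construction already verified in Corollary \ref{thm:mainCorollary}. The only mildly delicate point is avoiding circularity when quoting the corollary; I will therefore only reuse its elementary identities $\mathbf{B}^\top\mathbf{B} = \mathbb{I}_{k-1}$ and $\mathbb{1}_k^\top \mathbf{B} = 0$ (which hold by definition of $\mathbf{B}$) rather than its optimality conclusion, so that the argument reduces to pure linear algebra that succeeds exactly because the rank hypothesis forces the SVD residual $(\mathbb{I}_d - \mathbf{U}_{k-1}\mathbf{U}_{k-1}^\top)\mathbf{X}\mathbf{H}_n$ to vanish.
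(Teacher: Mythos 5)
Your proof is correct, and it splits from the paper only in the necessity direction. For sufficiency both you and the paper take the output of Algorithm \ref{alg:global}: the paper just asserts that ``direct calculation gives $\mathbf{F}\mathbf{G}^\top=\mathbf{X}$,'' while you spell that calculation out and correctly isolate where the hypothesis $\mathrm{rank}(\mathbf{X}\mathbf{H}_n)\le k-1$ is consumed, namely in the exactness of the truncated SVD, $\mathbf{U}_{k-1}\mathbf{\Sigma}_{k-1}\mathbf{V}_{k-1}^\top=\mathbf{X}\mathbf{H}_n$, so that $\mathbf{F}\mathbf{G}^\top=\mathbf{X}\mathbf{H}_n+\overline{\mathbf{x}}\mathbb{1}_n^\top=\mathbf{X}$; reusing only the identities (a)--(c) verified in Corollary \ref{thm:mainCorollary} avoids any circularity, as you note (the only caveat is the paper's own ambiguity between the SVD of $\mathbf{X}$ and of $\mathbf{X}\mathbf{H}_n$ in Algorithm \ref{alg:global}, which you resolve in the intended way). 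For necessity your argument is genuinely different and more elementary: you right-multiply $\mathbf{X}=\mathbf{F}\mathbf{G}^\top$ by $\mathbf{H}_n$ and observe $\mathbb{1}_k^\top\mathbf{G}^\top\mathbf{H}_n=\mathbb{1}_n^\top\mathbf{H}_n=0$, so every column of $\mathbf{G}^\top\mathbf{H}_n$ lies in the hyperplane orthogonal to $\mathbb{1}_k$ and $\mathrm{rank}(\mathbf{X}\mathbf{H}_n)\le\mathrm{rank}(\mathbf{G}^\top\mathbf{H}_n)\le k-1$. The paper instead writes $\mathbf{X}-\tfrac{1}{k}\mathbf{F}\mathbb{1}_k\mathbb{1}_n^\top=\mathbf{F}\mathbf{H}_k\mathbf{G}^\top$, bounds the rank by $\mathrm{rank}(\mathbf{H}_k)=k-1$, and then needs its auxiliary Lemma \ref{lem:rankXH} (itself proved via the rank-sum inequality) to pass from $\mathrm{rank}(\mathbf{X}-\mathbf{s}\mathbb{1}_n^\top)$ to $\mathrm{rank}(\mathbf{X}\mathbf{H}_n)$. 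Your route reaches the same bound in one line, dispenses with that lemma entirely, and makes transparent that the nonnegativity of $\mathbf{G}$ plays no role in this direction; the paper's route buys nothing extra here beyond exercising a lemma it states anyway.
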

\begin{proof}
	For the {\em if} direction, choose $\textbf{F}$ and $\textbf{G}$ according to the Algorithm \ref{alg:global}. 
	Directly calculation gives $\textbf{FG}^\top = \textbf{X}$.
	
	For the {\em only if} direction, note that
	\[
	\textbf{X} - \frac{1}{k}\textbf{F}\mathbb{1}_k\mathbb{1}_n^\top = \textbf{F}\left(\textbf{G}^\top - \frac{1}{k}\textbf{F}\mathbb{1}_k\mathbb{1}_k^\top\textbf{G}^\top\right) = \textbf{FH}_k\textbf{G}.
	\]
	
	Using Lemma \ref{lem:rankXH}, we have
	\[
	\begin{aligned}
	\text{rank}(\textbf{XH}_n) &\leq \text{rank}\left(\textbf{X} - \frac{1}{k}\textbf{F}\mathbb{1}_k\mathbb{1}_n^\top\right)\\
	&= \text{rank}\left(\textbf{F}\textbf{H}_k\textbf{G}^\top\right)\\
	&\leq \text{rank}(\textbf{H}_k) \\
	&= k-1.
	\end{aligned}
	\]
\end{proof}
\begin{thmm}[Stability]
	Assume additively perturbed data $\widetilde{\mathbf{X}} = \mathbf{X} + \mathbf{E}$. 
	We have
	\[
	\|\mathbf{X} - \widetilde{\mathbf{F}}_*\widetilde{\mathbf{G}}_*^\top \|_F^2 \leq  2\|\mathbf{E}\|_F^2 + \|\mathbf{X} - \mathbf{F}_*\mathbf{G}_*^\top \|_F^2.
	\]
\end{thmm}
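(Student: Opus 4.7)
The plan is to exploit the optimality of $(\widetilde{\mathbf{F}}_*, \widetilde{\mathbf{G}}_*)$ for the perturbed instance and then transport the bound back to the unperturbed data via a triangle-type inequality on the Frobenius norm. Concretely, since $(\widetilde{\mathbf{F}}_*, \widetilde{\mathbf{G}}_*)$ attains the minimum of the S$k$M objective for $\widetilde{\mathbf{X}}$, it enjoys the comparison
\[
\|\widetilde{\mathbf{X}} - \widetilde{\mathbf{F}}_*\widetilde{\mathbf{G}}_*^\top\|_F^2 \;\le\; \|\widetilde{\mathbf{X}} - \mathbf{F}_*\mathbf{G}_*^\top\|_F^2,
\]
because $(\mathbf{F}_*,\mathbf{G}_*)$ is feasible for the perturbed problem (the feasible set in \eqref{eq:SfM} does not depend on the data).

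Next, I would rewrite $\mathbf{X} = \widetilde{\mathbf{X}} - \mathbf{E}$ on the left-hand side of the target bound and apply the elementary estimate $\|\mathbf{A}-\mathbf{B}\|_F^2 \le 2\|\mathbf{A}\|_F^2 + 2\|\mathbf{B}\|_F^2$ (which follows from $2\langle \mathbf{A},\mathbf{B}\rangle \le \|\mathbf{A}\|_F^2 + \|\mathbf{B}\|_F^2$). This gives the split
\[
\|\mathbf{X} - \widetilde{\mathbf{F}}_*\widetilde{\mathbf{G}}_*^\top\|_F^2 = \|(\widetilde{\mathbf{X}} - \widetilde{\mathbf{F}}_*\widetilde{\mathbf{G}}_*^\top) - \mathbf{E}\|_F^2 \;\le\; 2\|\widetilde{\mathbf{X}} - \widetilde{\mathbf{F}}_*\widetilde{\mathbf{G}}_*^\top\|_F^2 + 2\|\mathbf{E}\|_F^2,
\]
and then I would feed the optimality inequality above into the first summand, followed by a symmetric expansion $\|\widetilde{\mathbf{X}} - \mathbf{F}_*\mathbf{G}_*^\top\|_F^2 = \|(\mathbf{X} - \mathbf{F}_*\mathbf{G}_*^\top) + \mathbf{E}\|_F^2$ with the same quadratic inequality applied in the opposite direction.

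The main obstacle is controlling the cross term $\langle \widetilde{\mathbf{X}} - \widetilde{\mathbf{F}}_*\widetilde{\mathbf{G}}_*^\top, \mathbf{E}\rangle$ carefully enough so that only a factor of two appears on $\|\mathbf{E}\|_F^2$ while the coefficient on $\|\mathbf{X} - \mathbf{F}_*\mathbf{G}_*^\top\|_F^2$ remains one, as in the stated bound. A clean way is to expand both $\|\mathbf{X} - \widetilde{\mathbf{F}}_*\widetilde{\mathbf{G}}_*^\top\|_F^2$ and $\|\widetilde{\mathbf{X}} - \mathbf{F}_*\mathbf{G}_*^\top\|_F^2$ around $\mathbf{E}$, subtract, and use the optimality inequality to cancel the solution-dependent inner products, leaving only terms of order $\|\mathbf{E}\|_F^2$ and $\langle \widetilde{\mathbf{F}}_*\widetilde{\mathbf{G}}_*^\top - \mathbf{F}_*\mathbf{G}_*^\top, \mathbf{E}\rangle$; the latter is then absorbed by a final application of $2\langle \mathbf{A},\mathbf{B}\rangle \le \|\mathbf{A}\|_F^2 + \|\mathbf{B}\|_F^2$ (or by grouping terms before splitting) so that the residual matches the right-hand side exactly.

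Finally, I would assemble the chain of inequalities in order and verify the constants. No appeal is needed to the structural results of \Cref{sec:mainTheory}: the argument is purely variational, relying only on optimality of $(\widetilde{\mathbf{F}}_*, \widetilde{\mathbf{G}}_*)$ and the fact that the feasibility constraints on $\mathbf{G}$ do not couple to the data matrix. This is what makes the stability statement a clean byproduct, in contrast to the solution-level analysis developed in \Cref{sec:arbitray}, where the lack of uniqueness prevents a matching bound on the variables themselves.
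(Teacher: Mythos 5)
Your plan stalls exactly at the step you flag as ``the main obstacle,'' and the proposed fix does not close it. Carry your refined expansion to the end: writing $\mathbf{X}=\widetilde{\mathbf{X}}-\mathbf{E}$, expanding both squares, and inserting the optimality comparison $\|\widetilde{\mathbf{X}}-\widetilde{\mathbf{F}}_*\widetilde{\mathbf{G}}_*^\top\|_F^2\le\|\widetilde{\mathbf{X}}-\mathbf{F}_*\mathbf{G}_*^\top\|_F^2$ makes the $\|\mathbf{E}\|_F^2$ terms cancel and leaves precisely $\|\mathbf{X}-\widetilde{\mathbf{F}}_*\widetilde{\mathbf{G}}_*^\top\|_F^2\le\|\mathbf{X}-\mathbf{F}_*\mathbf{G}_*^\top\|_F^2+2\langle\widetilde{\mathbf{F}}_*\widetilde{\mathbf{G}}_*^\top-\mathbf{F}_*\mathbf{G}_*^\top,\mathbf{E}\rangle$. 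The residual inner product is a solution-level quantity: absorbing it with $2\langle\mathbf{A},\mathbf{B}\rangle\le\|\mathbf{A}\|_F^2+\|\mathbf{B}\|_F^2$ introduces $\|\widetilde{\mathbf{F}}_*\widetilde{\mathbf{G}}_*^\top-\mathbf{F}_*\mathbf{G}_*^\top\|_F^2$, which is neither on the right-hand side nor $O(\|\mathbf{E}\|_F^2)$ --- when $\sigma_{k-1}\approx\sigma_k$ of the centered data, an arbitrarily small $\mathbf{E}$ can rotate the retained subspace, so the difference of optimal products stays of order $\sigma_{k-1}$; this is the same instability of the argmin discussed in \Cref{sec:arbitray}. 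Bounding the residual instead by Cauchy--Schwarz and the triangle inequality only gives $\|\mathbf{X}-\widetilde{\mathbf{F}}_*\widetilde{\mathbf{G}}_*^\top\|_F\le\|\mathbf{X}-\mathbf{F}_*\mathbf{G}_*^\top\|_F+2\|\mathbf{E}\|_F$, i.e.\ a bound of the form $(\|\mathbf{X}-\mathbf{F}_*\mathbf{G}_*^\top\|_F+2\|\mathbf{E}\|_F)^2$, while your cruder split $\|\mathbf{A}-\mathbf{B}\|_F^2\le 2\|\mathbf{A}\|_F^2+2\|\mathbf{B}\|_F^2$ yields $4\|\mathbf{X}-\mathbf{F}_*\mathbf{G}_*^\top\|_F^2+6\|\mathbf{E}\|_F^2$; neither matches the stated constants (coefficient $1$ on the unperturbed optimum, $2$ on $\|\mathbf{E}\|_F^2$).

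The missing ingredient is the structural input you explicitly declined to use. The paper's proof is not purely variational: it exploits the closed-form value of the optimum from Theorem \ref{thm:mainRes} and \Cref{sec:proofSketch} --- the optimal objective equals a tail sum of squared singular values of the centered data --- so that $\|\widetilde{\mathbf{X}}-\widetilde{\mathbf{F}}_*\widetilde{\mathbf{G}}_*^\top\|_F^2$ and $\|\mathbf{X}-\mathbf{F}_*\mathbf{G}_*^\top\|_F^2$ become spectral quantities, and the gap between the two tails is then controlled by Mirsky's singular-value perturbation inequality (Lemma \ref{lem:mirsky}), with the triangle inequality contributing the remaining $\|\mathbf{E}\|_F$ term. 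Without some such spectral (or other structural) control, the one-sided feasibility swap alone cannot trade the cross terms for $2\|\mathbf{E}\|_F^2$ while keeping coefficient $1$ on $\|\mathbf{X}-\mathbf{F}_*\mathbf{G}_*^\top\|_F^2$, so your argument as written does not prove the theorem. (As an aside, even the paper's derivation handles its first cross term rather cavalierly, but its essential mechanism --- closed-form optimum plus Mirsky --- is genuinely different from, and stronger than, the purely variational route you propose.)
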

\begin{proof}
	Let $r_1 = $rank$(\textbf{X})$, $r_2 = $rank$(\widetilde{\textbf{X}})$, and $r_3 = $rank$(\textbf{E})$. Then $r = \max\{r_1, r_2, r_3\}.$
	\[
	\begin{aligned}
	\|\textbf{X} - \widetilde{\textbf{F}}_*\widetilde{\textbf{G}}_*^\top \|_F^2 =& \|\textbf{X} - \widetilde{\textbf{X}} + \widetilde{\textbf{X}} - \widetilde{\textbf{F}}_*\widetilde{\textbf{G}}_*^\top \|_F^2 \\
	\leq& \|\textbf{X} - \widetilde{\textbf{X}} \|_F^2 + \| \widetilde{\textbf{X}} - \widetilde{\textbf{F}}_*\widetilde{\textbf{G}}_*^\top \|_F^2 \\
	\leq& \|\textbf{E}\|_F^2 + \|\textbf{X} - \textbf{F}_*\textbf{G}_*^\top \|_F^2 + \sqrt{\sum_{i=k}^{r_2}\tilde{\sigma}_i^2} - \sqrt{\sum_{i=k}^{r_1}\sigma_i^2} \\
	\overset{(a)}{\leq} & \|\textbf{E}\|_F^2 + \|\textbf{X} - \textbf{F}_*\textbf{G}_*^\top \|_F^2 + \sqrt{\sum_{i=k}^{r}\left(\tilde{\sigma}_i - \sigma_i\right)^2 } \\
	\overset{(b)}{\leq}&   2\|\textbf{E}\|_F^2 + \|\textbf{X} - \textbf{F}_*\textbf{G}_*^\top \|_F^2,
	\end{aligned}
	\]
	where $(a)$ holds since $\|a\|-\|b\|\leq\|a-b\|$ and $(b)$ is from Mirsky's inequality (Lemma \ref{lem:mirsky}).
\end{proof}
\begin{thmm}[Lower bound]
	Let the best $(k-1)$-rank approximation of $\mathbf{X}$ be $\mathbf{X}_{k-1}$.
	For any oracle $\mathbf{G}^*$, there exists $\mathbf{G}$, which is a solution of Problem \eqref{eq:SfM},  such that 
	\[
	\|\mathbf{G} - \mathbf{G}^*\|_F^2 \geq \frac{1}{r\sqrt{k(k-1)}} \| \mathbf{X}_{k-1}\|_F^2.
	\]
\end{thmm}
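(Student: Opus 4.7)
The plan is to exploit the orthogonal ambiguity in the choice of eigenbasis $\mathbf{B}$ flagged in Remark~\ref{remark:proof_sketch_unique}: for any $\mathbf{R}$ with $\mathbf{R}^\top\mathbf{R}=\mathbb{I}_{k-1}$, replacing $\mathbf{B}$ by $\mathbf{B}\mathbf{R}$ preserves $(\mathbf{B}\mathbf{R})^\top\mathbf{B}\mathbf{R}=\mathbb{I}_{k-1}$ and $\mathbb{1}_k^\top\mathbf{B}\mathbf{R}=\mathbf{0}$, so Algorithm~\ref{alg:global} still returns a feasible global solution — the nonnegativity check in step (c) of the proof of Corollary~\ref{thm:mainCorollary} goes through verbatim, since $\|\mathbf{B}\mathbf{R}\mathbf{w}_i\|_2=\|\mathbf{w}_i\|_2$ and Lemma~\ref{lem:numerical} is insensitive to the orientation of $\mathbf{B}$. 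Writing $\alpha=\tfrac{1}{r\sqrt{k(k-1)}}$ and $\mathbf{c}=\tfrac{1}{k}\mathbb{1}_k\mathbb{1}_n^\top$, the corresponding family of optima is
\[
\mathbf{G}_{\mathbf{R}}^\top \;=\; \alpha\,\mathbf{B}\mathbf{R}\mathbf{U}_{k-1}^\top\mathbf{X} + \mathbf{c}.
\]

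The main estimate is a parallelogram identity applied to the sign flip $\mathbf{R}=\pm\mathbb{I}_{k-1}$. Let $\mathbf{G}_+$ and $\mathbf{G}_-$ be the two resulting globally optimal membership matrices. Using $\mathbf{B}^\top\mathbf{B}=\mathbb{I}_{k-1}$ and the identity $\|\mathbf{U}_{k-1}^\top\mathbf{X}\|_F^2 = \|\mathbf{\Sigma}_{k-1}\mathbf{V}_{k-1}^\top\|_F^2 = \|\mathbf{X}_{k-1}\|_F^2$, I obtain
\[
\|\mathbf{G}_+-\mathbf{G}_-\|_F^2 \;=\; \|2\alpha\,\mathbf{B}\mathbf{U}_{k-1}^\top\mathbf{X}\|_F^2 \;=\; 4\alpha^2\|\mathbf{X}_{k-1}\|_F^2.
\]
Then for any oracle $\mathbf{G}^*$, the parallelogram identity yields
\[
\|\mathbf{G}_+-\mathbf{G}^*\|_F^2 + \|\mathbf{G}_--\mathbf{G}^*\|_F^2 \;\ge\; \tfrac{1}{2}\|\mathbf{G}_+-\mathbf{G}_-\|_F^2 \;=\; 2\alpha^2\|\mathbf{X}_{k-1}\|_F^2,
\]
so at least one of $\mathbf{G}_+$ and $\mathbf{G}_-$ — the $\mathbf{G}$ promised by the theorem — must satisfy $\|\mathbf{G}-\mathbf{G}^*\|_F^2 \ge \alpha^2\|\mathbf{X}_{k-1}\|_F^2$. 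Crucially, the right-hand side depends only on data-intrinsic quantities and is independent of $\mathbf{G}^*$, so the bound witnesses genuine non-uniqueness rather than an artefact of any particular oracle.

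The main obstacle I anticipate is matching the precise factor that appears in the stated bound. My argument produces a separation proportional to $\alpha^2 = 1/(r^2 k(k-1))$, whereas the statement records $\alpha = 1/(r\sqrt{k(k-1)})$. Averaging $\|\mathbf{G}_{\mathbf{R}}-\mathbf{G}^*\|_F^2$ over a Haar-random $\mathbf{R}\in O(k-1)$, or maximising over $\mathbf{R}$ via an SVD of $\mathbf{U}_{k-1}^\top\mathbf{X}(\mathbf{G}^{*\top}-\mathbf{c})^\top\mathbf{B}$, does not sharpen $\alpha^2$ to $\alpha$ without additional structural assumptions on $\mathbf{G}^*$ (for instance that it is binary, so that $\|\mathbf{G}^*\|_F^2 = n$ is controlled). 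I therefore suspect a typographical slip in the exponent of the statement; regardless, the qualitative conclusion — that the set of global optima of \eqref{eq:SfM} cannot be meaningfully pinned down near any prescribed oracle, and the unavoidable ambiguity scales with the energy of the leading $(k-1)$-dimensional principal subspace of $\mathbf{X}$ — is exactly what the parallelogram argument delivers.
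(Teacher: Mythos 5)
Your proposal is correct and follows essentially the same route as the paper: both exploit the orthogonal ambiguity $\mathbf{B}\mapsto\mathbf{B}\mathbf{R}$ in the family of global optima produced by Algorithm~\ref{alg:global}, pick two well-separated members (effectively the sign flip $\mathbf{R}=-\mathbb{I}$ versus $\mathbf{R}=\mathbb{I}$), and conclude by a triangle/parallelogram argument that at least one of them must lie far from any prescribed oracle $\mathbf{G}^*$. Your suspicion about the constant is also well founded: the paper's own derivation pulls the scalar $\tfrac{1}{r\sqrt{k(k-1)}}$ out of a squared Frobenius norm without squaring it, so the stated factor should indeed be $\tfrac{1}{r^{2}k(k-1)}$ up to absolute constants, exactly as your parallelogram computation yields.
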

\begin{proof}
	Let
	\[
	\begin{aligned}
	\mathbb{A} &= \arg_{\mathbf{G}}\min_{\mathbf{F},\mathbf{G}\succeq 0,\mathbf{G}\mathbb{1}_k = \mathbb{1}_n} \| \mathbf{X} - \mathbf{FG}^\top\|_F^2 \\
	\mathbb{B} &= \left\{ \mathbf{G} \left|  \mathbf{G}=\frac{1}{r\sqrt{k(k-1)}}\mathbf{B}\mathbf{U}_{k-1}^\top\mathbf{X}+\frac{1}{k}\mathbb{1}_k\mathbb{1}_n^\top \right.\right\}.
	\end{aligned}
	\]
	From Theorem \ref{thm:mainRes}, we know $\mathbb{A}\supseteq \mathbb{B}$. Indeed, for any $\mathbf{G}_1 \in \mathbb{B}$, exists $\mathbf{G}_2 \in \mathbb{B}$ such that
	\[
	\begin{aligned}
	\|\mathbf{G}_1 - \mathbf{G}_2\|_F^2=&\left\| \frac{1}{r\sqrt{k(k-1)}}\mathbf{B}\mathbf{R}\mathbf{U}_{k-1}^\top\mathbf{X}
	-\frac{1}{r\sqrt{k(k-1)}}\mathbf{B}\mathbf{U}_{k-1}^\top\mathbf{X} \right\|_F^2 \\
	=&\frac{1}{r\sqrt{k(k-1)}} \left\| \mathbf{B}(\mathbf{R}-\mathbb{I})\mathbf{U}_{k-1}^\top\mathbf{X}\right\|_F^2 \\
	=&\frac{1}{r\sqrt{k(k-1)}} \left\|(\mathbf{R}-\mathbb{I})\mathbf{U}_{k-1}^\top\mathbf{X}\right\|_F^2 \\
	\geq& \frac{2}{r\sqrt{k(k-1)}} \left\|\mathbf{U}_{k-1}^\top\mathbf{X}\right\|_F^2 \\
	=& \frac{2}{r\sqrt{k(k-1)}} \left\|\mathbf{X}_{k-1}\right\|_F^2 
	\end{aligned}
	\]
	Note that
	\[
	\begin{aligned}
	\max_{\mathbf{G}\in\mathbb{A}} \| \mathbf{G} - \mathbf{G}^* \|_F^2 
	\geq & \max_{\mathbf{G}\in\mathbb{B}} \| \mathbf{G} - \mathbf{G}^* \|_F^2 \\
	\geq & \frac{1}{2}\left(\|\mathbf{G}_1 - \mathbf{G}^*\|_F + \|\mathbf{G}_2 - \mathbf{G}^*\|_F^2 \right) \\
	\geq & \frac{1}{2} \|\mathbf{G}_1 - \mathbf{G}_2\|_F^2 \\ 
	\geq & \frac{1}{r\sqrt{k(k-1)}} \left\|\mathbf{X}_{k-1}\right\|_F^2,
	\end{aligned}
	\]
	which completes the proof.
\end{proof}
\begin{thmm}[Descent]
	Algorithm \ref{alg:MVSfM} is a descent method, that is,
	\[
	\mathcal{L}(\mathbf{F}_{t+1}, \mathbf{G}_{t+1}) \leq \mathcal{L}(\mathbf{F}_{t}, \mathbf{G}_{t}).
	\]
\end{thmm}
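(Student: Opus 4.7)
The plan is to split the one-step descent into a separate $\mathbf{F}$-half-step and $\mathbf{G}$-half-step and chain them: $\mathcal{L}(\mathbf{F}_{t+1},\mathbf{G}_{t+1}) \leq \mathcal{L}(\mathbf{F}_{t+1},\mathbf{G}_t) \leq \mathcal{L}(\mathbf{F}_t,\mathbf{G}_t)$. The $\mathbf{G}$-half is immediate: because the log-volume regularizer $\tfrac{\lambda}{2}\sum_i \log(\sigma_i^2(\mathbf{F})+\varepsilon)$ is independent of $\mathbf{G}$, the $\mathbf{G}$-update in Algorithm \ref{alg:MVSfM} is ordinary (constrained) minimization of $\|\mathbf{X}-\mathbf{F}_{t+1}\mathbf{G}^\top\|_F^2$ over the simplex constraint set, so $\mathcal{L}(\mathbf{F}_{t+1},\mathbf{G}_{t+1}) \leq \mathcal{L}(\mathbf{F}_{t+1},\mathbf{G}_t)$ holds automatically by the definition of the argmin. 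All the substance lies in the $\mathbf{F}$-half.

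For the $\mathbf{F}$-half I would use a majorization--minimization argument anchored on the concavity of $\log\det$ on the PSD cone. Write
\[
\sum_{i=1}^k \log\!\bigl(\sigma_i^2(\mathbf{F})+\varepsilon\bigr) = \log\det\!\bigl(\mathbf{F}^\top\mathbf{F}+\varepsilon\mathbb{I}_k\bigr),
\]
which is valid assuming $d\geq k$ (the edge case is tamed by the $\varepsilon$-smoothing). The tangent inequality for the concave map $\log\det$ at $\mathbf{A}_0 := \mathbf{F}_t^\top\mathbf{F}_t + \varepsilon\mathbb{I}_k$ then yields
\[
\log\det(\mathbf{A}) \leq \log\det(\mathbf{A}_0) + \mathrm{Tr}\!\bigl(\mathbf{A}_0^{-1}(\mathbf{A}-\mathbf{A}_0)\bigr),
\]
with equality at $\mathbf{A}=\mathbf{A}_0$. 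Plugging in the SVD $\mathbf{F}_t=\mathbf{U}_t\mathbf{\Sigma}_t\mathbf{V}_t^\top$ shows $\mathbf{A}_0^{-1} = \mathbf{V}_t\,\mathrm{diag}\!\bigl(1/(\sigma_i^2(\mathbf{F}_t)+\varepsilon)\bigr)\mathbf{V}_t^\top = \mathbf{D}_t$, exactly the re-weighting matrix built in Algorithm \ref{alg:MVSfM}. This manufactures a quadratic-in-$\mathbf{F}$ majorizer
\[
M(\mathbf{F}\mid\mathbf{F}_t) = \|\mathbf{X}-\mathbf{F}\mathbf{G}_t^\top\|_F^2 + \frac{\lambda}{2}\mathrm{Tr}\!\bigl(\mathbf{D}_t\mathbf{F}^\top\mathbf{F}\bigr) + C_t \geq \mathcal{L}(\mathbf{F},\mathbf{G}_t),
\]
tight at $\mathbf{F}=\mathbf{F}_t$. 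Algorithm \ref{alg:MVSfM} defines $\mathbf{F}_{t+1}$ as the closed-form (ridge) minimizer of this surrogate, so the usual MM sandwich $\mathcal{L}(\mathbf{F}_{t+1},\mathbf{G}_t) \leq M(\mathbf{F}_{t+1}\mid\mathbf{F}_t) \leq M(\mathbf{F}_t\mid\mathbf{F}_t) = \mathcal{L}(\mathbf{F}_t,\mathbf{G}_t)$ produces $\mathbf{F}$-descent, and chaining with the $\mathbf{G}$-half closes the proof.

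The main obstacle is the bookkeeping identifying $\mathbf{D}_t$ with $(\mathbf{F}_t^\top\mathbf{F}_t+\varepsilon\mathbb{I}_k)^{-1}$ through the SVD, since Algorithm \ref{alg:MVSfM} writes $\mathbf{D}_t$ via singular vectors rather than via a matrix inverse, and then verifying that the closed-form update $\mathbf{F}_{t+1} = \mathbf{X}\mathbf{G}_t(\mathbf{G}_t^\top\mathbf{G}_t+\lambda\mathbf{D}_t)^{-1}$ is indeed the \emph{global} minimizer of $M(\cdot\mid\mathbf{F}_t)$ rather than a stationary point; this follows from strict convexity of the surrogate, which is guaranteed by $\varepsilon>0$ forcing $\mathbf{D}_t \succ 0$. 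A secondary nuisance is the rank-deficient or $d<k$ regime, but the additive $\varepsilon$ keeps every inversion well defined and the $\log\det$ concavity bound goes through verbatim, so no separate limiting argument is required.
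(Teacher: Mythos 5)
Your proposal is correct, and it follows the same overall skeleton as the paper's proof (chain $\mathcal{L}(\mathbf{F}_{t+1},\mathbf{G}_{t+1})\leq\mathcal{L}(\mathbf{F}_{t+1},\mathbf{G}_t)\leq\mathcal{L}(\mathbf{F}_t,\mathbf{G}_t)$, with the $\mathbf{G}$-half free from the argmin and the $\mathbf{F}$-half handled by the quadratic surrogate $\|\mathbf{X}-\mathbf{F}\mathbf{G}_t^\top\|_F^2+\lambda\,\mathrm{Tr}(\mathbf{D}_t\mathbf{F}^\top\mathbf{F})$), but you justify the key majorization by a different lemma. The paper works at the level of singular values: it applies the scalar bound $\log x\leq x-1$ to the ratios $(\sigma_i^2(\mathbf{F}_{t+1})+\varepsilon)/(\sigma_i^2(\mathbf{F}_t)+\varepsilon)$ and then needs von Neumann's trace inequality (its Lemma \ref{lem:traceIneq}) to pass from $\sum_i\sigma_i^2(\mathbf{F}_{t+1})/(\sigma_i^2(\mathbf{F}_t)+\varepsilon)$ to $\mathrm{Tr}(\mathbf{D}_t\mathbf{F}_{t+1}^\top\mathbf{F}_{t+1})$, precisely because the singular bases of $\mathbf{F}_{t+1}$ and $\mathbf{F}_t$ are not aligned. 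You instead write the regularizer as $\log\det(\mathbf{F}^\top\mathbf{F}+\varepsilon\mathbb{I}_k)$ and invoke the first-order (tangent) bound for the concave $\log\det$, which packages the scalar inequality and the basis-misalignment issue into one line and automatically identifies $\mathbf{D}_t=(\mathbf{F}_t^\top\mathbf{F}_t+\varepsilon\mathbb{I}_k)^{-1}$; the trade-off is that you lean on concavity of $\log\det$ where the paper uses only elementary scalar calculus plus a classical trace inequality. Your attention to strict convexity of the surrogate (so the closed form is the global minimizer) and to the $\varepsilon$-regularized invertibility is also implicit in the paper.

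One bookkeeping seam to fix when writing this up: the weight on the trace term in your majorizer must equal the weight actually minimized in the $\mathbf{F}$-update, since minimizing a surrogate with a different multiple of $\mathrm{Tr}(\mathbf{D}_t\mathbf{F}^\top\mathbf{F})$ does not in general decrease the true majorizer. You majorize $\tfrac{\lambda}{2}\log\det(\cdot)$ by $\tfrac{\lambda}{2}\mathrm{Tr}(\mathbf{D}_t\mathbf{F}^\top\mathbf{F})+C_t$, while the printed update $\mathbf{F}_{t+1}=\mathbf{X}\mathbf{G}_t(\mathbf{G}_t^\top\mathbf{G}_t+\lambda\mathbf{D}_t)^{-1}$ minimizes the surrogate with weight $\lambda$; this factor-of-two discrepancy is inherited from the paper's own inconsistency between the $\tfrac{\lambda}{2}$ in the definition of $\mathcal{L}$ and the $\lambda$ used in its algorithm and appendix proof (which also switches between sums over $k$ and $k-1$), and it is resolved by simply adopting one consistent convention, after which your MM argument closes cleanly.
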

\begin{proof}
	Note that
	\[
	\begin{aligned}
	\mathcal{L}(\mathbf{F}_{t+1}, \mathbf{G}_{t}) 
	=&\|\mathbf{X}-\mathbf{F}_{t+1}\mathbf{G}_t\|_F^2 +\lambda \sum_{i=1}^{k-1} \log(\sigma_i^2 (\mathbf{F}_{t+1})+\varepsilon) \\
	=&  \|\mathbf{X}-\mathbf{F}_{t+1}\mathbf{G}_t\|_F^2  + \lambda(\sum_{i=1}^{k-1} \log(\sigma_i^2 (\mathbf{F}_{t+1})+\varepsilon) \\
	&- \sum_{i=1}^{k-1} \log(\sigma_i^2 (\mathbf{F}_{t})+\varepsilon)) + \lambda\sum_{i=1}^{k-1} \log(\sigma_i^2 (\mathbf{F}_{t})+\varepsilon) \\
	=&  \|\mathbf{X}-\mathbf{F}_{t+1}\mathbf{G}_t\|_F^2 +\lambda \sum_{i=1}^{k-1} \log( \frac{\sigma_i^2 (\mathbf{F}_{t+1})+\varepsilon}{\sigma_i^2 (\mathbf{F}_{t})+\varepsilon}) \\
	&+ \lambda\sum_{i=1}^{k-1} \log(\sigma_i^2 (\mathbf{F}_{t})+\varepsilon)  \\
	\overset{(a)}{\leq}&  \|\mathbf{X}-\mathbf{F}_{t+1}\mathbf{G}_t\|_F^2 +\lambda \left( \sum_{i=1}^{k-1} \frac{\sigma_i^2 (\mathbf{F}_{t+1})}{\sigma_i^2 (\mathbf{F}_{t})+\varepsilon} - \sum_{i=1}^{k-1} \frac{\sigma_i^2 (\mathbf{F}_{t})}{\sigma_i^2 (\mathbf{F}_{t})+\varepsilon}\right) \\
	&+ \lambda\sum_{i=1}^{k-1} \log\left(\sigma_i^2 (\mathbf{F}_{t})\right)  \\
	\overset{(b)}{\leq}&  \|\mathbf{X}-\mathbf{F}_{t+1}\mathbf{G}_t\|_F^2 +\lambda  \text{Tr}(\mathbf{D}_t^\top \mathbf{F}_{t+1}^\top \mathbf{F}_{t+1}) - \lambda \text{Tr}(\mathbf{D}_t^\top \mathbf{F}_{t}^\top \mathbf{F}_{t}) \\
	&+ \lambda\sum_{i=1}^{k-1} \log\left(\sigma_i^2 (\mathbf{F}_{t}+\varepsilon)\right)  \\
	\overset{(c)}{\leq}& \|\mathbf{X}-\mathbf{F}_{t}\mathbf{G}_t\|_F^2 + \lambda\sum_{i=1}^{k-1} \log\left(\sigma_i^2 (\mathbf{F}_{t})+\varepsilon\right) \\
	=& \mathcal{L}(\mathbf{F}_{t}, \mathbf{G}_{t}),
	\end{aligned}
	\]
	where (a) uses numerical inequality $\log(x)\leq x - 1$; (b) uses the asymmetric version of von Neumann's trace inequality (Lemma \ref{lem:traceIneq}) which implies 
	$\sum_{i=1}^k \frac{\sigma_i^2 (\mathbf{F}_{t+1})}{\sigma_i^2 (\mathbf{F}_{t})+\varepsilon} \leq \text{Tr}(\mathbf{D}_t^\top \mathbf{F}_{t+1}^\top \mathbf{F}_{t+1})$ and uses the definition of $\mathbf{D}_t$ which implies $\text{Tr}(\mathbf{D}_t^\top \mathbf{F}_{t}^\top \mathbf{F}_{t})=\sum_{i=1}^{k-1} \frac{\sigma_i^2 (\mathbf{F}_{t})}{\sigma_i^2 (\mathbf{F}_{t})+\varepsilon}$; (c) uses  $\|\mathbf{X}-\mathbf{F}_{t+1}\mathbf{G}_t\|_F^2 +\lambda  \text{Tr}(\mathbf{D}_t^\top \mathbf{F}_{t+1}^\top \mathbf{F}_{t+1})\leq \|\mathbf{X}-\mathbf{F}_{t}\mathbf{G}_t\|_F^2 +\lambda \text{Tr}(\mathbf{D}_t^\top \mathbf{F}_{t}^\top \mathbf{F}_{t})$.
	
	Leveraging the definition of $\mathbf{G}_{t+1}$, it is easy to see $\mathcal{L}(\mathbf{F}_{t+1}, \mathbf{G}_{t+1}) \leq \mathcal{L}(\mathbf{F}_{t+1}, \mathbf{G}_{t})$, which completes the proof.
\end{proof}
%
%
%
%
%
%
\newtheorem{thmmm}{\bf Theorem}
\setcounter{thmmm}{6}
\begin{thmmm}[Condition of TI-LSDable]
	$\mathbf{K}$ is TI-LSDable if and only if rank$(\mathbf{H}\mathbf{K}\mathbf{H}) \leq k-1$.
\end{thmmm}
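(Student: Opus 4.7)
The plan is to split the proof along the ``if'' and ``only if'' directions, using the single algebraic handle that the centering matrix $\mathbf{H}_n$ annihilates every rank-one translation term, so $\mathbf{H}_n\mathbf{K}'\mathbf{H}_n = \mathbf{H}_n\mathbf{K}\mathbf{H}_n$ for every $\mathbf{K}'\in\mathbb{K}(\mathbf{X})$ (because $\mathbf{H}_n\mathbb{1}_n = 0$ kills both $\mathbf{H}_n(\mathbf{t}\mathbb{1}_n^\top)^\top$ and its transpose, as well as the quadratic $\mathbb{1}_n\mathbf{t}^\top\mathbf{t}\mathbb{1}_n^\top$ term).

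For the ``only if'' direction, which is the short calculation, I would assume some $\mathbf{K}'\in\mathbb{K}(\mathbf{X})$ is LSDable, write $\mathbf{K}' = \frac{1}{c}\mathbf{G}\mathbf{G}^\top$ with $\mathbf{G}\succeq 0$ and $\mathbf{G}\mathbb{1}_k = \mathbb{1}_n$, sandwich by $\mathbf{H}_n$ on both sides to obtain $\mathbf{H}_n\mathbf{K}\mathbf{H}_n = \frac{1}{c}(\mathbf{H}_n\mathbf{G})(\mathbf{H}_n\mathbf{G})^\top$, and then note that $\mathbf{H}_n\mathbf{G}\mathbb{1}_k = \mathbf{H}_n\mathbb{1}_n = 0$ forces rank$(\mathbf{H}_n\mathbf{G})\leq k-1$, so that rank$(\mathbf{H}_n\mathbf{K}\mathbf{H}_n)\leq k-1$.

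For the ``if'' direction, the strategy is to reuse the explicit construction of Algorithm~\ref{alg:global}. From rank$(\mathbf{H}_n\mathbf{K}\mathbf{H}_n)\leq k-1$ I would first deduce rank$(\mathbf{X}\mathbf{H}_n)\leq k-1$, apply Theorem~\ref{thm:sfmable} to obtain an S$k$M factorization $\mathbf{X} = \mathbf{F}\mathbf{G}^\top$ with $\mathbf{G}\succeq 0$, $\mathbf{G}\mathbb{1}_k = \mathbb{1}_n$, and then single out the specific $\mathbf{F} = r\sqrt{k(k-1)}\mathbf{U}_{k-1}\mathbf{B}^\top + \bar{\mathbf{x}}\mathbb{1}_k^\top$ produced by Algorithm~\ref{alg:global}. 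Its centered columns already form a scaled regular simplex: using $\mathbf{B}\mathbf{B}^\top = \mathbf{H}_k$ one gets $(\mathbf{F}-\bar{\mathbf{x}}\mathbb{1}_k^\top)^\top(\mathbf{F}-\bar{\mathbf{x}}\mathbb{1}_k^\top) = r^2 k(k-1)\mathbf{H}_k$. Exploiting $\mathbb{1}_k^\top\mathbf{G}^\top = \mathbb{1}_n^\top$, I would factor $\mathbf{X}+\mathbf{t}\mathbb{1}_n^\top = (\mathbf{F}+\mathbf{t}\mathbb{1}_k^\top)\mathbf{G}^\top$ and pick $\mathbf{t} = -\bar{\mathbf{x}} + \mathbf{t}_\perp$, where $\mathbf{t}_\perp$ is orthogonal to the column space of $\mathbf{U}_{k-1}$ and $\|\mathbf{t}_\perp\|^2 = r^2(k-1)$. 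The cross terms then vanish by $\mathbf{U}_{k-1}^\top\mathbf{t}_\perp = 0$, leaving $(\mathbf{F}+\mathbf{t}\mathbb{1}_k^\top)^\top(\mathbf{F}+\mathbf{t}\mathbb{1}_k^\top) = r^2 k(k-1)\mathbf{H}_k + \|\mathbf{t}_\perp\|^2\mathbb{1}_k\mathbb{1}_k^\top = r^2 k(k-1)\mathbb{I}_k$, so that $\mathbf{K}' = \mathbf{G}(\mathbf{F}+\mathbf{t}\mathbb{1}_k^\top)^\top(\mathbf{F}+\mathbf{t}\mathbb{1}_k^\top)\mathbf{G}^\top = \frac{1}{c}\mathbf{G}\mathbf{G}^\top$ with $c = 1/(r^2 k(k-1))$, which is the LSD form with the very same $\mathbf{G}$ inherited from the S$k$M factorization.

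The hard part will be handling the dimension count required for $\mathbf{t}_\perp$: since the column space of $\mathbf{U}_{k-1}$ has dimension $k-1$ in $\mathbb{R}^d$, a nonzero $\mathbf{t}_\perp$ of the prescribed norm exists only when $d \geq k$. If the given factorization has $d<k$, I would first lift $\mathbf{X}$ by appending $k-d$ zero rows to form $\widetilde{\mathbf{X}}\in\mathbb{R}^{k\times n}$; this preserves $\mathbf{K} = \widetilde{\mathbf{X}}^\top\widetilde{\mathbf{X}}$ while enlarging $\mathbb{K}(\widetilde{\mathbf{X}})$ enough to accommodate the required perpendicular direction, after which the construction above goes through unchanged.
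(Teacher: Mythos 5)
Your proof is correct, and for the substantive ``if'' direction it is essentially the paper's own argument: you translate the data by a vector of squared norm $r^2(k-1)$ lying in the orthocomplement of $\mathrm{col}(\mathbf{U}_{k-1})$, so that the Gram of the shifted prototypes becomes $r^2k(k-1)\mathbb{I}_k$ and $\mathbf{K}'=\frac{1}{c}\mathbf{G}\mathbf{G}^\top$ with the S$k$M membership $\mathbf{G}$; this is exactly the paper's choice $\mathbf{m}=\mathbf{U}_\bot\mathbf{a}$, $\|\mathbf{m}\|_2=r\sqrt{k-1}$. Where you differ is in the ``only if'' direction and in bookkeeping. The paper routes the ``only if'' step through Theorem \ref{thm:jmlr13prop} (LSD optima are S$k$M optima) to get $\mathbf{X}=\mathbf{F}\mathbf{G}^\top$ and then bounds the rank, whereas your argument is more elementary and self-contained: since $\mathbf{H}_n$ annihilates every translation term, $\mathbf{H}_n\mathbf{K}\mathbf{H}_n=\mathbf{H}_n\mathbf{K}'\mathbf{H}_n=\frac{1}{c}(\mathbf{H}_n\mathbf{G})(\mathbf{H}_n\mathbf{G})^\top$ and $\mathbf{H}_n\mathbf{G}\mathbb{1}_k=0$ forces the rank bound, with no appeal to the S$k$M machinery. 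You also make explicit the dimension count $d\geq k$ needed for a nonzero perpendicular translation; the paper's proof silently assumes this (its $\mathbf{U}_\bot\in\mathbb{R}^{d\times(d-k+1)}$ is empty when $d<k$), so your zero-row lifting is a genuine improvement in care---with the caveat that it implicitly reads TI-LSDability as a property of $\mathbf{K}$ alone rather than of the specific factor $\mathbf{X}$ fixed in Definition 4, since appending rows enlarges $\mathbb{K}(\mathbf{X})$ by terms $\alpha\mathbb{1}_n\mathbb{1}_n^\top$; this reading is consistent with the theorem statement, which depends only on $\mathbf{K}$.
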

\begin{proof}
	On the one hand, TI-LSDable $\Rightarrow$ rank$(\mathbf{H}\mathbf{K}\mathbf{H}) \leq k-1$. The reason is that leveraging Theorem \ref{thm:jmlr13prop} we know if $\mathbf{K}$ is TI-LSDable, then $\mathbf{X}$ can be written as $\mathbf{X}=\mathbf{FG}^\top$. Thus, rank$(\mathbf{H}\mathbf{K}\mathbf{H}) \leq k-1$.
	
	On the other hand, if rank$(\mathbf{H}\mathbf{K}\mathbf{H}) \leq k-1$, then there exists $\mathbf{K}^\prime \in \mathbb{K}(\mathbf{X})$ such that $ \mathbf{K}^\prime = c_1\mathbf{P}^\top\mathbf{P}$. We prove this claim as follows. Note that
	\[
	\mathbf{F} = \mathbf{\bar{F}} + \mathbf{m}\mathbb{1}_k^\top = r\sqrt{k(k-1)}\mathbf{U}_{k-1}\mathbf{B}^\top + \mathbf{m}\mathbb{1}_k^\top. 
	\]
	Then we have
	\[
	\begin{aligned}
	\mathbf{X}^\top\mathbf{X} &= \left( \bar{\mathbf{X}} + \mathbf{m}\mathbb{1}_n^\top \right)^\top \left( \bar{\mathbf{X}} + \mathbf{m}\mathbb{1}_n^\top \right) \\
	&= \mathbf{P}^\top\mathbf{F}^\top\mathbf{FP} \\
	&= \mathbf{P}^\top\left( \bar{\mathbf{F}} + \mathbf{m}\mathbb{1}_k^\top \right)^\top \left( \bar{\mathbf{F}} + \mathbf{m}\mathbb{1}_k^\top \right)\mathbf{P} \\
	&= \mathbf{P}^\top\left( \bar{\mathbf{F}}^\top\bar{\mathbf{F}} + \bar{\mathbf{F}}^\top \mathbf{m}\mathbb{1}_k^\top + \mathbb{1}_k\mathbf{m}^\top \bar{\mathbf{F}} + \|\mathbf{m}\|_2^2 \mathbb{1}_k\mathbb{1}_k^\top \right)\mathbf{P}.
	\end{aligned}
	\]
	According to rank$(\mathbf{H}\mathcal{K}\mathbf{H}) \leq k-1$, let $\mathbf{U}_\bot \in \mathbb{R}^{d\times (d-k+1)}$ be the orthocomplement of $\mathbf{U}_{k-1}$ and $\mathbf{m}=\mathbf{U}_\bot \mathbf{a}$ where $\mathbf{a} \in \mathbb{R}^{d-k+1}$. That is
	\[
	\begin{aligned}
	\mathbf{X}^\top\mathbf{X} &= \mathbf{P}^\top\left( \bar{\mathbf{F}}^\top\bar{\mathbf{F}} + \bar{\mathbf{F}}^\top \mathbf{m}\mathbb{1}_k^\top + \mathbb{1}_k\mathbf{m}^\top \bar{\mathbf{F}} + \|\mathbf{m}\|_2^2 \mathbb{1}_k\mathbb{1}_k^\top \right)\mathbf{P} \\
	&= \mathbf{P}^\top\left( \bar{\mathbf{F}}^\top\bar{\mathbf{F}} + \|\mathbf{m}\|_2^2 \mathbb{1}_k\mathbb{1}_k^\top \right)\mathbf{P} \\
	&= \mathbf{P}^\top\left( r^2k(k-1)\mathbf{BU}_{k-1}^\top\mathbf{U}_{k-1}\mathbf{B}^\top + \|\mathbf{m}\|_2^2 \mathbb{1}_k\mathbb{1}_k^\top \right)\mathbf{P} \\
	&= \mathbf{P}^\top\left( r^2k(k-1)\mathbf{B}\mathbf{B}^\top + \|\mathbf{m}\|_2^2 \mathbb{1}_k\mathbb{1}_k^\top \right)\mathbf{P} \\
	&= \mathbf{P}^\top\left( r^2k(k-1)\mathbb{I}_k - r^2(k-1) \mathbb{1}_k\mathbb{1}_k^\top + \|\mathbf{m}\|_2^2 \mathbb{1}_k\mathbb{1}_k^\top \right)\mathbf{P}
	\end{aligned}
	\]
	Let $\|\mathbf{a}\|_2 = \|\mathbf{m}\|_2 =  r\sqrt{k-1}$. Then we have
	\[
	\begin{aligned}
	\mathbf{X}^\top\mathbf{X} &= r^2k(k-1)\mathbf{P}^\top\mathbf{P},
	\end{aligned}
	\]
	which completes the proof.
\end{proof}
\begin{thmmm}
	Assume kernel $\mathbf{K}\in\mathbb{R}^{n\times n}$ satisfies rank$(\mathbf{K}) \leq n-1$.
	If $\mathbf{G}^*$ is a solution of the TI-LSD in Problem \eqref{eq:TILSD} for kernel $\mathbf{K}$,
	then there exists $\mathbf{X}$ such that $\mathbf{K}=\mathbf{X}^\top\mathbf{X}$ and $\mathbf{G}^*$ is also a solution of S$k$M
	in Problem \eqref{eq:SfM}.
\end{thmmm}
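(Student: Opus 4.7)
The plan is to reduce the theorem to Theorem \ref{thm:jmlr13prop} via the translation invariance of the S$k$M objective.

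Because $\mathbf{K}$ is positive semidefinite with $\text{rank}(\mathbf{K})\leq n-1$, we fix a PSD factorization $\mathbf{K}=\mathbf{X}^\top\mathbf{X}$ (for example from the eigendecomposition, with $\mathbf{X}=\mathbf{\Lambda}^{1/2}\mathbf{V}^\top$); this $\mathbf{X}$ is the candidate the theorem asks for. Let $(c^*,\mathbf{G}^*,\mathbf{t}^*)$ be a minimizer of the TI-LSD problem \eqref{eq:TILSD}, and put $\mathbf{Y}^*=\mathbf{X}+\mathbf{t}^*\mathbb{1}_n^\top$, so that $\mathbf{K}^\prime := (\mathbf{Y}^*)^\top\mathbf{Y}^*\in\mathbb{K}(\mathbf{X})$ is the optimal kernel in the TI-LSD search.

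The plan then has three steps. First, identify $\mathbf{K}^\prime$ as LSDable with LSD optimum $\mathbf{G}^*$: under the hypothesis of the theorem the TI-LSD optimum value is zero by Theorem \ref{thm:TILSDable}, which forces $\mathbf{K}^\prime=\tfrac{1}{c^*}\mathbf{G}^*(\mathbf{G}^*)^\top$. Second, apply Theorem \ref{thm:jmlr13prop} to $\mathbf{K}^\prime=(\mathbf{Y}^*)^\top\mathbf{Y}^*$ viewed as the Gram matrix of $\mathbf{Y}^*$: the LSD optimum $\mathbf{G}^*$ is then an S$k$M optimum for $\mathbf{Y}^*$. Third, use the translation-invariance identity
\[
\|\mathbf{X}-\mathbf{F}\mathbf{G}^\top\|_F^2=\|\mathbf{Y}^*-(\mathbf{F}+\mathbf{t}^*\mathbb{1}_k^\top)\mathbf{G}^\top\|_F^2,
\]
which holds whenever $\mathbf{G}\mathbb{1}_k=\mathbb{1}_n$ (noted in \Cref{sec:proofSketch}). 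This gives a value-preserving bijection $(\mathbf{F},\mathbf{G})\leftrightarrow(\mathbf{F}+\mathbf{t}^*\mathbb{1}_k^\top,\mathbf{G})$ between S$k$M feasible pairs for $\mathbf{X}$ and for $\mathbf{Y}^*$, so the optimal $\mathbf{G}$-sets coincide. Hence $\mathbf{G}^*$ is an S$k$M optimum for $\mathbf{X}$, proving the claim.

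The hardest part is step one: ensuring the TI-LSD minimum is attained at value zero, i.e.\ that $\mathbf{K}$ is TI-LSDable. By Theorem \ref{thm:TILSDable} this amounts to $\text{rank}(\mathbf{H}_n\mathbf{K}\mathbf{H}_n)\leq k-1$, which is strictly stronger than the stated $\text{rank}(\mathbf{K})\leq n-1$. If $\mathbf{K}$ is not TI-LSDable the TI-LSD objective is strictly positive and the direct reduction above breaks; in that regime I expect one must construct $\mathbf{X}$ (and $\mathbf{t}^*$) using the explicit recipe from the proof of Theorem \ref{thm:TILSDable}, namely the decomposition $\mathbf{F}=r\sqrt{k(k-1)}\mathbf{U}_{k-1}\mathbf{B}^\top+\mathbf{m}\mathbb{1}_k^\top$ with $\mathbf{m}\in\operatorname{span}(\mathbf{U}_\bot)$ and $\|\mathbf{m}\|_2=r\sqrt{k-1}$, to push $\mathbf{K}^\prime$ into the LSDable subset of $\mathbb{K}(\mathbf{X})$. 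Once this step is secured, steps two and three are formal.
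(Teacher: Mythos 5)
Your argument only covers the degenerate case in which the TI-LSD optimum is exactly zero, and that is precisely the case the theorem is not really about. Step one invokes Theorem \ref{thm:TILSDable} to force $\mathbf{K}^\prime=\tfrac{1}{c^*}\mathbf{G}^*(\mathbf{G}^*)^\top$, but this needs $\mathrm{rank}(\mathbf{H}_n\mathbf{K}\mathbf{H}_n)\leq k-1$, which is not implied by the stated hypothesis $\mathrm{rank}(\mathbf{K})\leq n-1$; when $\mathbf{K}$ is not TI-LSDable the residual is strictly positive, $\mathbf{K}^\prime$ is not LSDable, and Theorem \ref{thm:jmlr13prop} (which presupposes an LSDable Gram matrix) is no longer applicable, so steps one and two both collapse. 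Deferring the general case to ``the recipe from the proof of Theorem \ref{thm:TILSDable}'' does not close the gap, since that construction also presupposes the rank-$(k-1)$ condition; yet the whole point of the theorem, as the remark following it in \Cref{sec:leftsto} emphasizes, is the un-LSDable regime. You have also misread the role of the hypothesis $\mathrm{rank}(\mathbf{K})\leq n-1$: it is not there to make the fit exact, but to leave room for the translations used below.

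The paper's proof handles the nonzero-residual case directly. Fixing a factor $\mathbf{X}$ of $\mathbf{K}$, it first eliminates $c$ and $\mathbf{G}$ by noting that minimizing over them amounts to replacing $\tfrac{1}{c}\mathbf{G}^\top\mathbf{G}$ by the nearest LSDable matrix $\mathbf{M}$, so Problem \eqref{eq:TILSD} becomes $\min\|\mathbf{K}^\prime-\mathbf{M}\|_F^2$ over $\mathbf{K}^\prime\in\mathbb{K}(\mathbf{X})$ and LSDable $\mathbf{M}=\mathbf{Y}^\top\mathbf{Y}$. Writing $\mathbf{Y}=\overline{\mathbf{Y}}+\mathbf{s}\mathbb{1}_n^\top$ with $\overline{\mathbf{Y}}\mathbb{1}_n=0$, and using $\mathrm{rank}(\mathbf{K})\leq n-1$ to choose translations with $\mathbf{X}^\top\mathbf{t}=\overline{\mathbf{Y}}^\top\mathbf{s}=0$ and $\|\mathbf{t}\|=\|\mathbf{s}\|$ so that the translation terms drop out of the objective, the problem reduces to $\min_{\mathrm{rank}(\overline{\mathbf{Y}})\leq k-1}\|\mathbf{X}^\top\mathbf{X}-\overline{\mathbf{Y}}^\top\overline{\mathbf{Y}}\|_F^2$, solved by the $(k-1)$-truncated SVD of $\mathbf{X}$; Theorem \ref{thm:sfmable} together with the characterization in Theorem \ref{thm:mainRes} then identifies the resulting $\mathbf{G}^*$ as an S$k$M optimum for $\mathbf{X}$. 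Your step three (the translation-invariance bijection in $\mathbf{G}$, which matches \Cref{sec:proofSketch}) is fine, but without an argument that works when the TI-LSD residual is nonzero the proposal does not prove the theorem.
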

\begin{proof}
	Let $\mathbf{M}$ be some kernel that is TI-LSDable. Note that
	\[
	\begin{aligned}
	&\|\mathbf{K} - \frac{1}{c}\mathbf{G}^\top\mathbf{G} \|_F^2\\
	=& \|\mathbf{K} - \mathbf{M} + \mathbf{M} - \frac{1}{c}\mathbf{G}^\top\mathbf{G} \|_F^2 \\
	= &\|\mathbf{K} - \mathbf{M} \|_F^2  + \|\mathbf{M} - \frac{1}{c}\mathbf{G}^\top\mathbf{G} \|_F^2 + 2\langle \mathbf{K} - \mathbf{M}, \mathbf{M} - \frac{1}{c}\mathbf{G}^\top\mathbf{G} \rangle.
	\end{aligned}
	\]
	Since we will optimize on $\mathbf{G}$ and $\mathbf{M}$ is LSDable, for any $\mathbf{M}$, we can choose $c$ and $\mathbf{G}$ such that $\mathbf{M}=\frac{1}{c}\mathbf{G}^\top\mathbf{G}$. Thus the objective can be written as
	\[
	\min_{\substack{\mathbf{K}\in\mathbb{K}(\mathbf{X}), \\ \mathbf{M} \text{ is LSDable} }}\|\mathbf{K} - \mathbf{M} \|_F^2.
	\]
	Assume $\mathbf{M} = \mathbf{Y}^\top\mathbf{Y}$. We always have $\mathbf{Y} = \overline{\mathbf{Y}} + \mathbf{s}\mathbb{1}_n^\top$, where $\mathbf{s} = \frac{1}{n}\mathbf{Y}\mathbb{1}_n$ and $\overline{\mathbf{Y}}\mathbb{1}_n = 0$. Using this decomposition, we rewrite above problem explicitly as
	\[
	\min_{\substack{ \mathbf{t},\mathbf{s},\overline{\mathbf{Y}} \\ \mathbf{Y}^\top\mathbf{Y} \text{ is LSDable}}} 
	\|(\mathbf{X}+\mathbf{t}\mathbb{1}_n^\top)^\top(\mathbf{X}+\mathbf{t}\mathbb{1}_n^\top) - (\overline{\mathbf{Y}}+\mathbf{s}\mathbb{1}_n^\top)^\top(\overline{\mathbf{Y}}+\mathbf{s}\mathbb{1}_n^\top) \|_F^2,
	\]
	where $\mathbf{X}\mathbb{1}_n = 0$. 
	Note that
	\[
	\begin{aligned}
	&\|(\mathbf{X}+\mathbf{t}\mathbb{1}_n^\top)^\top(\mathbf{X}+\mathbf{t}\mathbb{1}_n^\top) - (\overline{\mathbf{Y}}+\mathbf{s}\mathbb{1}_n^\top)^\top(\overline{\mathbf{Y}}+\mathbf{s}\mathbb{1}_n^\top) \|_F^2 \\
	=& \|(\mathbf{X}^\top\mathbf{X}-\overline{\mathbf{Y}}^\top\overline{\mathbf{Y}}) + \mathbb{1}_n(\mathbf{t}^\top \mathbf{X} - \mathbf{s}^\top\overline{\mathbf{Y}}) + (\mathbf{X}^\top\mathbf{t} - \overline{\mathbf{Y}}^\top\mathbf{s})\mathbb{1}^\top \\
	&+ (\|\mathbf{t}\|^2 - \|\mathbf{s}\|^2)\mathbb{1}_n\mathbb{1}_n^\top  \|_F^2 \\
	=& \| \mathbf{X}^\top\mathbf{X}-\overline{\mathbf{Y}}^\top\overline{\mathbf{Y}} \|_F^2 + \| \mathbb{1}_n(\mathbf{t}^\top \mathbf{X} - \mathbf{s}^\top\overline{\mathbf{Y}}) + (\mathbf{X}^\top\mathbf{t} - \overline{\mathbf{Y}}^\top\mathbf{s})\mathbb{1}^\top\\
	&+ (\|\mathbf{t}\|^2 - \|\mathbf{s}\|^2)\mathbb{1}_n\mathbb{1}_n^\top \|_F^2 \\
	=& \| \mathbf{X}^\top\mathbf{X}-\overline{\mathbf{Y}}^\top\overline{\mathbf{Y}} \|_F^2 + \| \mathbb{1}_n(\mathbf{t}^\top \mathbf{X} - \mathbf{s}^\top\overline{\mathbf{Y}}) + (\mathbf{X}^\top\mathbf{t} - \overline{\mathbf{Y}}^\top\mathbf{s})\mathbb{1}^\top \|_F^2\\
	&+ \|(\|\mathbf{t}\|^2 - \|\mathbf{s}\|^2)\mathbb{1}_n\mathbb{1}_n^\top \|_F^2,
	\end{aligned}
	\]
	where the second and third equality hold since $\mathbf{X}\mathbb{1}_n=\overline{\mathbf{Y}}\mathbb{1}_n = 0$.
	From rank$(\mathbf{X})\leq n-1$ and rank$(\overline{\mathbf{Y}})\leq k-1$, we can alway choose proper $\mathbf{s}$ and $\mathbf{t}$ such that $\mathbf{Y}^\top\mathbf{Y}$ is LSDable and the second and third term are equal to zero. In detail, we choose $\|\mathbf{s}\|=\|\mathbf{t}\|$ and $\mathbf{X}^\top\mathbf{t}=\overline{\mathbf{Y}}^\top\mathbf{s}=0$. Thus, the original problem becomes
	\[
	\min_{\text{rank}(\overline{\mathbf{Y}})\leq k-1}  
	\|(\mathbf{X}^\top\mathbf{X} - \overline{\mathbf{Y}}^\top\overline{\mathbf{Y}} \|_F^2.
	\]
	The $(k-1)$-truncated SVD of $\mathbf{X}$ is a solution of above problem. Using Theorem \ref{thm:sfmable} and $\mathbf{Y}^\top\mathbf{Y} = \frac{1}{c}\mathbf{G}\mathbf{G}^\top$, the proof completes.
\end{proof}
%
%
%
%
%
%
%
\subsection{Technical Lemmas}
\newtheorem{Lemma2}{\bf Lemma}
\begin{Lemma2}
	Let $\mathbf{x} \in \mathbb{R}^k$.
	If $\mathbb{1}_k^\top\mathbf{x}=0$ and $\|\mathbf{x}\|_2 \leq 1$, then we have $\|\mathbf{x}\|_\infty \leq \frac{\sqrt{k(k-1)}}{k}.$
\end{Lemma2}
\begin{proof}
	Let $i_\infty$ be the index such that $x_{i_\infty} = \max_{1\leq i \leq k} |x_i|$. Note that
	\[
	x_{i_\infty}^2 = \left( \sum_{i\neq i_\infty} x_i \right)^2 \leq (k-1)\sum_{i\neq i_\infty}x_i^2 \leq (k-1)(1-x_{i_\infty}^2),
	\]
	which indicates $x_{i_\infty}^2 \leq \frac{k-1}{k}$. The claim just follows.
\end{proof}
%
\begin{Lemma2}
	Following equivalence can be shown using the translation invariance of Problem \eqref{eq:lessConstrained}:
	\[
	\min_{\mathbf{F},\mathbf{G}\mathbb{1}_k = \mathbb{1}_n} \| \mathbf{X} - \mathbf{FG}^\top\|_F^2
	\Leftrightarrow
	\min_{\substack{\mathbf{F},\mathbf{G}\mathbb{1}_k = \mathbb{1}_n \\ \mathbf{s}, \text{rank}(\mathbf{F}) \leq k-1 }} \| \mathbf{X} - \mathbf{s}\mathbb{1}_n^\top- \mathbf{FG}^\top\|_F^2.
	\]
\end{Lemma2}
\begin{proof}
	Note that,
	\[
	\begin{aligned}
	\min_{\mathbf{F},\mathbf{G}\mathbb{1}_k = \mathbb{1}_n} \| \mathbf{X} - \mathbf{FG}^\top\|_F^2 
	&=
	\min_{\mathbf{s}, \mathbf{F},\mathbf{G}\mathbb{1}_k = \mathbb{1}_n} \| \mathbf{X} - \left( \mathbf{F}- \mathbf{s}\mathbb{1}_k^\top \right)\mathbf{G}^\top\|_F^2\\
	&\leq
	\min_{\substack{\mathbf{F},\mathbf{G}\mathbb{1}_k = \mathbb{1}_n \\ \mathbf{s}, \text{rank}(F) \leq k-1 }} \| \mathbf{X} - \mathbf{s}\mathbb{1}_n^\top- \mathbf{FG}^\top\|_F^2.
	\end{aligned}
	\]
	Assume that,
	\[
	\mathbf{F}_*, \mathbf{G}_* = \mathop{\arg\min}_{\mathbf{F},\mathbf{G}\mathbb{1}_k = \mathbb{1}_n}\| \mathbf{X} - \mathbf{FG}^\top\|_F^2.
	\]
	We have,
	\[
	\begin{aligned}
	\left\|\mathbf{X} - \mathbf{F}_*\mathbf{G}_*^\top\right\|_F^2
	&=
	\left\|\mathbf{X} - \left(\frac{1}{k}\mathbf{F}_*\mathbb{1}_k\right)\mathbb{1}_k^\top \mathbf{G}_*^\top - \left(\mathbf{F}_* - \frac{1}{k}\mathbf{F}_*\mathbb{1}_k\mathbb{1}_k^\top \right)\mathbf{G}_*^\top  \right\|_F^2\\
	&=\|\mathbf{X} - \hat{\mathbf{s}}\mathbb{1}_n^\top - \hat{\mathbf{F}}\mathbf{G}_*^\top  \|_F^2,
	\end{aligned}
	\]
	where $\hat{\mathbf{s}} := \frac{1}{k}\mathbf{F}_*\mathbb{1}_k, \hat{\mathbf{F}} := \mathbf{F}_* - \frac{1}{k}\mathbf{F}_*\mathbb{1}_k\mathbb{1}_k^\top$. 
	
	Note that $\hat{\mathbf{F}} \mathbb{1}_k = \mathbf{F}_*\mathbb{1}_k - \frac{1}{k}\mathbf{F}_*\mathbb{1}_k\mathbb{1}_k^\top\mathbb{1}_k
	=\mathbf{F}_*\mathbb{1}_k-\mathbf{F}_*\mathbb{1}_k=0$. Thus, $\text{rank}(\hat{\mathbf{F}}) \leq k-1$ and we have
	\[
	\begin{aligned}
	\min_{\mathbf{F},\mathbf{G}\mathbb{1}_k = \mathbb{1}_n} \| \mathbf{X} - \mathbf{FG}^\top\|_F^2 
	&=
	\left\|\mathbf{X} - \mathbf{F}_*\mathbf{G}_*^\top\right\|_F^2\\
	&=\|\mathbf{X} - \hat{\mathbf{s}}\mathbb{1}_n^\top - \hat{\mathbf{F}}\mathbf{G}_*^\top  \|_F^2\\
	&\geq\min_{\substack{\mathbf{F},\mathbf{G}\mathbb{1}_k = \mathbb{1}_n \\ \mathbf{s}, \text{rank}(F) \leq k-1 }} \| \mathbf{X} - \mathbf{s}\mathbb{1}_n^\top- \mathbf{FG}^\top\|_F^2,
	\end{aligned}
	\]
	which completes the proof.
\end{proof}
%
\begin{Lemma2}
	Let $\mathbf{F}\in\mathbb{R}^{d\times k}$ be the prototype matrix satisfying the constraint $\text{rank}(\mathbf{F}) \leq k-1$ and has the thin SVD $\mathbf{F}=\mathbf{U}\mathbf{\Sigma}\mathbf{V}^\top$, where $\mathbf{\Sigma}\in\mathbb{R}^{(k-1)\times(k-1)}$. Let $ \mathbf{\Phi} = \mathbf{V}\mathbf{\Sigma}^{-1}\mathbf{U}^\top \left(\mathbf{X}-\mathbf{s}\mathbb{1}_n^\top\right)$. Define
	\[
	\mathbf{G}_*^\top = \mathbf{\Phi} 
	+
	\frac{\mathbf{v}_\bot\left(\mathbb{1}_n^\top - \mathbb{1}_k^\top \mathbf{\Phi} \right) }{\mathbb{1}_k^\top\mathbf{v}_\bot },
	\]
	where $\mathbf{v}_\bot \in \mathbb{R}^k$ satisfies $\|\mathbf{v}_\bot\|_2 = 1$ and $\mathbf{V}^\top \mathbf{v}_\bot = 0$.
	Then
	\[
	\mathbf{G}_* \in \mathop{\arg\min}_{\mathbf{G}\mathbb{1}_k = \mathbb{1}_n} \| \mathbf{X} - \mathbf{s}\mathbb{1}_n^\top- \mathbf{FG}^\top\|_F^2.
	\]
\end{Lemma2}
\begin{proof}
	Note that $\mathbf{F} = \mathbf{U}\mathbf{\Sigma}\mathbf{V}^\top = \sum_{i=1}^{k-1}\sigma_i\mathbf{u}_i\mathbf{v}_i^\top$ and
	\[
	\| \mathbf{X} - \mathbf{s}\mathbb{1}_n^\top- \mathbf{FG}^\top\|_F^2 
	= 
	\sum_{i=1}^n \|\mathbf{x}_i - \mathbf{s} - \mathbf{F}\mathbf{g}_i \|_2^2
	\]
	Then we can write $\mathbf{g}_i = \mathbf{V}\mathbf{a}_i + \mathbf{v}_\bot b_i$, where $\mathbf{a}_i \in \mathbb{R}^{k-1}, b_i \in \mathbb{R}$ and 
	$\mathbf{v}_\bot \in \mathbb{R}^k,\mathbf{V}^\top\mathbf{v}_\bot=0$.  We solve $\mathbf{g}_i$ by solving $\mathbf{a}_i$ and $b_i$.
	Note that
	\[
	\begin{aligned}
	\sum_{i=1}^n \|\mathbf{x}_i - \mathbf{s} - \mathbf{F}\mathbf{g}_i \|_2^2
	&=
	\sum_{i=1}^n \|\mathbf{x}_i - \mathbf{s} - \mathbf{U}\mathbf{\Sigma}\mathbf{V}^\top\mathbf{V}\mathbf{a}_i - \mathbf{U}\mathbf{\Sigma}\mathbf{V}^\top \mathbf{v}_\bot b_i \|_2^2\\
	&=
	\sum_{i=1}^n \|\mathbf{x}_i - \mathbf{s} - \mathbf{U}\mathbf{\Sigma}\mathbf{a}_i\|_2^2.
	\end{aligned}
	\]
	Then we can obtain the optimal $\mathbf{a}_i$ as $\mathbf{a}_i^* = \mathbf{\Sigma}^{-1}\mathbf{U}^\top(\mathbf{x}_i-\mathbf{s})$.
	Using the constraint $\mathbf{G}\mathbb{1}_k=\mathbb{1}_n$, we have
	\[
	\mathbb{1}_k^\top \mathbf{g}_i = \mathbb{1}_k^\top \mathbf{V}\mathbf{\Sigma}^{-1}\mathbf{U}^\top(\mathbf{x}_i-\mathbf{s}) + \mathbb{1}_k^\top\mathbf{v}_\bot b_i = 1,
	\]
	which gives the optimal $b_i$ as
	\[
	b_i = \frac{1 - \mathbb{1}_k^\top \mathbf{V}\mathbf{\Sigma}^{-1}\mathbf{U}^\top(\mathbf{x}_i-\mathbf{s})}{\mathbb{1}_k^\top\mathbf{v}_\bot }.
	\]
	Then the optimal $\mathbf{g}_i$ writes
	\[
	\mathbf{g}_i^* = \mathbf{V} \mathbf{\Sigma}^{-1}\mathbf{U}^\top(\mathbf{x}_i-\mathbf{s}) +  \frac{1 - \mathbb{1}_k^\top \mathbf{V}\mathbf{\Sigma}^{-1}\mathbf{U}^\top(\mathbf{x}_i-\mathbf{s})}{\mathbb{1}_k^\top\mathbf{v}_\bot } \mathbf{v}_\bot,
	\]
	which completes the proof.
\end{proof}
%
%
%
%
\begin{Lemma2}
	Assume $\mathbf{F}=\mathbf{U\Sigma V}^\top$ such that $\mathbf{F}\mathbb{1}_k = 0$. Then,
	\[
	\log \det 
	\left[ \begin{array}{c}
	\mathbf{\Sigma V}^\top \\ \mathbb{1}_k^\top \end{array}
	\right] = \log(\sqrt{k}) +  \sum_{i=1}^{k-1} \log(\sigma_i(\mathbf{F})).
	\]
\end{Lemma2}
\begin{proof}
	Using $\det (\mathbf{A}\mathbf{A}^\top)= \det^2 \mathbf{A}$ and $\mathbf{V}^\top\mathbb{1}_k = 0$, we have
	
	\[
	\begin{aligned}
	\log \det 
	\left[ \begin{array}{c}
	\mathbf{\Sigma V}^\top \\ \mathbb{1}_k^\top \end{array}
	\right]
	=& 
	\frac{1}{2} \log \det \left(
	\left[ \begin{array}{c}
	\mathbf{\Sigma V}^\top \\ \mathbb{1}_k^\top \end{array}
	\right]
	\left[ \begin{array}{c}
	\mathbf{\Sigma V}^\top \\ \mathbb{1}_k^\top \end{array}
	\right]^\top
	\right) \\
	=&
	\frac{1}{2} \log \det \left(
	\left[ \begin{array}{c c}
	\mathbf{\Sigma}^2 & 0 \\ 0 & k \end{array}
	\right]
	\right) \\
	=&
	\frac{1}{2}\log \left( k\prod_{i=1}^{k-1}\sigma_i^2(\mathbf{F}) \right) \\
	=& \log\sqrt{k} + \sum_{i=1}^{k-1}\log \sigma_i(\mathbf{F}),
	\end{aligned}
	\]
	which completes the proof.
\end{proof}
%
\begin{Lemma2}\label{lem:rankXH}
	Given $\mathbf{s}\in\mathbb{R}^d$, we have rank$(\mathbf{XH}_n) \leq$ rank$(\mathbf{X}-\mathbf{s}\mathbb{1}_n^\top)$.
\end{Lemma2}
\begin{proof}
	Leveraging the rank-sum inequality Lemma \ref{lem:rankSum}, we have
	\[
	\text{rank}\left(\mathbf{X} - \frac{1}{n}\mathbf{X}\mathbb{1}_n\mathbb{1}_n^\top\right)
	\leq \text{rank}\left(\mathbf{X} - \mathbf{s}\mathbb{1}_n^\top\right) + \text{rank}\left(\mathbf{s}\mathbb{1}_n^\top - \frac{1}{n}\mathbf{X}\mathbb{1}_n\mathbb{1}_n^\top\right),
	\]
	where the equality holds only if $\text{range}\left( \mathbf{X} - \mathbf{s}\mathbb{1}_n^\top \right) \cap \text{range}\left(\mathbf{s}\mathbb{1}_n^\top - \frac{1}{n}\mathbf{X}\mathbb{1}_n\mathbb{1}_n^\top\right) = \{0\}$.
	
	Note that
	\[
	\text{range}\left( \mathbf{X} - \mathbf{s}\mathbb{1}_n^\top \right) \ni \sum_{i=1}^n \frac{\mathbf{x}_i - \mathbf{s}}{n} = \frac{1}{n}\mathbf{X}\mathbb{1}_n - \mathbf{s} \in \text{range}\left(\mathbf{s}\mathbb{1}_n^\top - \frac{1}{n}\mathbf{X}\mathbb{1}_n\mathbb{1}_n^\top\right),
	\]
	which concludes that the equality cannot hold. Thus
	$
	\text{rank}\left(\mathbf{XH}_n\right)
	< \text{rank}\left(\mathbf{X} - \mathbf{s}\mathbb{1}_n^\top\right) + 1,
	$
	which completes the proof.
\end{proof}
%
%
%
%
%
%
%
%
\begin{Lemma2}[\cite{von1937some}]\label{lem:traceIneq}
	Assume $\mathbf{A}\in\mathbb{R}^{n\times n}$ and $\mathbf{B}\in\mathbb{R}^{n\times n}$ are symmetric. Then
	\[
	\sum_{i=1}^n \sigma_{n-i+1}(\mathbf{A})\sigma_i(\mathbf{B}) \leq \text{Tr}(\mathbf{AB})\leq \sum_{i=1}^n \sigma_{i}(\mathbf{A})\sigma_i(\mathbf{B}).
	\]
\end{Lemma2}

\begin{Lemma2}[\cite{horn2013matrix}, 0.4.5.1]\label{lem:rankSum}
	The rank-sum inequality: If $\mathbf{A},\mathbf{B} \in \mathbb{R}^{m\times n}$, then
	\[
	\text{rank}(\mathbf{A}+\mathbf{B}) \leq \text{rank}(\mathbf{A}) + \text{rank}(\mathbf{B}),
	\]
	with equality {\em if and only if}
	$
	(\text{range}\ \mathbf{A}) \cap (\text{range}\ \mathbf{B}) = \{0\}$ and
	$
	(\text{range}\ \mathbf{A}^\top) \cap (\text{range}\ \mathbf{B}^\top) = \{0\}.
	$
\end{Lemma2}

\begin{Lemma2}[Mirsky's inequality, \cite{stewart1998perturbation}]\label{lem:mirsky}
	Let $\mathbf{A}\in\mathbb{R}^{n\times m}$. Then 
	\[
	\sum_{i=1}^n \left(\sigma_i(\mathbf{A}+\mathbf{E}) - \sigma_i(\mathbf{A}) \right)^2 \leq \|\mathbf{E}\|_F^2.
	\]
\end{Lemma2}
\ifCLASSOPTIONcaptionsoff
  \newpage
\fi



%
%
%

\bibliographystyle{IEEEtran}
\bibliography{softk}
%




\end{document}